\documentclass[twoside]{article}

\usepackage{amsmath}
\usepackage{amsthm}
\usepackage{amssymb}
\usepackage{mathtools}
\usepackage{mathrsfs}
\usepackage{upgreek}
\usepackage{mleftright}

\usepackage[pdftex]{hyperref}
\usepackage[sf,bf]{titlesec}
\usepackage{bm}
\usepackage{bbold}
\usepackage{euscript}
\usepackage{url}            
\usepackage{booktabs}       
\usepackage{amsfonts}       
\usepackage{nicefrac}       
\usepackage{microtype}      
\usepackage{mathrsfs}
\usepackage{color}
\usepackage{tabu}
\usepackage{multirow}
\usepackage{float}
\usepackage{wrapfig}
\usepackage{booktabs}
\usepackage[ruled,vlined]{algorithm2e}
\DontPrintSemicolon
\usepackage{color}
\usepackage{tabu}
\usepackage{comment}
\usepackage{graphicx}
\usepackage{epstopdf}
\usepackage{subcaption}
\usepackage{framed}
\usepackage{enumitem}
\usepackage{textcomp}
\usepackage{setspace}
\usepackage[noabbrev]{cleveref}
\usepackage{latexsym}
\usepackage{tcolorbox}
\usepackage[round,sort&compress]{natbib}
\usepackage{alltt}
\usepackage{stmaryrd}
\usepackage{xspace}
\usepackage{euscript}
\usepackage{xfrac}    
\usepackage{nicefrac} 
\usepackage{tikz}
\usepackage{tikz-qtree}
\usepackage{todonotes}
\usepackage{diagbox}

\usepackage[margin=1in]{geometry}
\usepackage{fancyhdr}
\pagestyle{fancy}

\fancyhead[EL,OL]{}
\fancyhead[ER,OR]{} 
\fancyfoot[C]{\textsf{\thepage}}
\fancypagestyle{plain}{\fancyhf{} 
	
	\fancyfoot[C]{\textsf{\thepage}}
}

\definecolor{mydarkblue}{rgb}{0,0.08,0.45}
\hypersetup{ %
	colorlinks=true,
	linkcolor=mydarkblue,
	citecolor=mydarkblue,
	filecolor=mydarkblue,
	urlcolor=mydarkblue}

\newcommand{\calL}{\mathcal{L}}

\newcommand{\scrO}{\mathscr{O}}
\newcommand{\scrP}{\mathscr{P}}

\newcommand{\scrW}{\mathscr{W}}

\newcommand{\euB}{\EuScript{B}}

\newcommand{\euD}{\EuScript{D}}
\newcommand{\euE}{\EuScript{E}}
\newcommand{\euF}{\EuScript{F}}
\newcommand{\euG}{\EuScript{G}}

\newcommand{\euK}{\EuScript{K}}
\newcommand{\euL}{\EuScript{L}}

\newcommand{\euN}{\EuScript{N}}

\newcommand{\euQ}{\EuScript{Q}}
\newcommand{\euR}{\EuScript{R}}

\newcommand{\euU}{\EuScript{U}}
\newcommand{\euV}{\EuScript{V}}
\newcommand{\euW}{\EuScript{W}}
\newcommand{\euX}{\EuScript{X}}
\newcommand{\euY}{\EuScript{Y}}

\newcommand{\Var}{\mathrm{Var}}
\newcommand{\Cov}{\mathrm{Cov}}
\newcommand{\Ex}{\mathbb{E}}
\newcommand{\Prob}{\mathbb{P}}

\newcommand{\One}{\bm{1}}

\newcommand{\QQ}{\mathbb{Q}}
\newcommand{\RR}{\mathbb{R}}

\newcommand{\Rp}{\RR_+}
\newcommand{\Rpp}{\RR_{++}}
\newcommand{\bbS}{\mathbb{S}}

\newcommand{\NN}{\mathbb{N}}

\newcommand{\diff}{\mathrm{d}}

\newcommand{\e}{\mathrm{e}}

\DeclareMathOperator*{\argmin}{argmin}

\DeclareMathOperator*{\minimize}{minimize}

\DeclareMathOperator*{\argmax}{argmax}

\DeclareMathOperator*{\Diag}{Diag}

\newcommand{\bp}{\bm{p}}

\newcommand{\sumK}{\sum_{k=1}^K}

\newcommand{\sumn}{\sum_{i=1}^n}

\newcommand{\blambda}{\bm{\lambda}}

\newcommand{\tSigma}{\widetilde{\Sigma}}

\newcommand{\tr}{\mathrm{tr}}

\newcommand{\sfB}{\mathsf{B}}

\newcommand{\sfD}{\mathsf{D}}

\newcommand{\sfG}{\mathsf{G}}

\newcommand{\sfQ}{\mathsf{Q}}

\newcommand{\sfS}{\mathsf{S}}

\newcommand{\sfW}{\mathsf{W}}

\newcommand{\oRR}{\overline{\RR}}

\newcommand{\Sigmahat}{\widehat{\Sigma}}
\newcommand{\Sigmabar}{\overline{\Sigma}}

\newcommand{\muhat}{\widehat{\mu}}
\newcommand{\mubar}{\overline{\mu}}

\newcommand{\midd}{\,|\kern-0.25ex|\,}
\newcommand{\setn}{\llbracket n\rrbracket}

\newcommand{\setK}{\llbracket K\rrbracket}

\newcommand{\set}[1]{\llbracket #1\rrbracket}
\newcommand{\KL}{\mathrm{KL}}

\newcommand{\dotp}[2]{\langle #1, #2\rangle}
\newcommand{\dotpF}[2]{\left\langle #1, #2\right\rangle_{\rm F}}

\newcommand{\OT}{\mathsf{OT}}

\newcommand{\Probhat}{\widehat{\mathbb{P}}}
\newcommand{\ac}{\mathrm{ac}}
\newcommand{\teuW}{\widetilde{\euW \mkern 1mu}}
\newcommand{\teuU}{\widetilde{\euU \mkern 1mu}}

\newcommand{\sfb}{\mathsf{b}}
\newcommand{\DD}{\mathbb{D}}

\newcommand{\half}{\sfrac12}

\DeclareMathAlphabet\rsfscr{U}{rsfso}{m}{n}

\theoremstyle{plain}
\newtheorem{theorem}{Theorem}[section]
\newtheorem{proposition}[theorem]{Proposition}
\newtheorem{lemma}[theorem]{Lemma}
\newtheorem{corollary}[theorem]{Corollary}
\theoremstyle{definition}
\newtheorem{definition}[theorem]{Definition}

\theoremstyle{remark}
\newtheorem{remark}[theorem]{Remark}

\crefname{assumption}{Assumption}{Assumptions}
\Crefname{assumption}{Assumption}{Assumptions}
\crefname{problem}{Problem}{Problems}
\Crefname{problem}{Problem}{Problems}
\crefname{example}{Example}{Examples}
\Crefname{example}{Example}{Examples}

\let\le\leqslant
\let\ge\geqslant

\let\hat\widehat
\let\tilde\widetilde
\let\bar\overline

\makeatletter
\DeclareFontFamily{OMX}{MnSymbolE}{}
\DeclareSymbolFont{MnLargeSymbols}{OMX}{MnSymbolE}{m}{n}
\SetSymbolFont{MnLargeSymbols}{bold}{OMX}{MnSymbolE}{b}{n}
\DeclareFontShape{OMX}{MnSymbolE}{m}{n}{
	<-6>  MnSymbolE5
	<6-7>  MnSymbolE6
	<7-8>  MnSymbolE7
	<8-9>  MnSymbolE8
	<9-10> MnSymbolE9
	<10-12> MnSymbolE10
	<12->   MnSymbolE12
}{}
\DeclareFontShape{OMX}{MnSymbolE}{b}{n}{
	<-6>  MnSymbolE-Bold5
	<6-7>  MnSymbolE-Bold6
	<7-8>  MnSymbolE-Bold7
	<8-9>  MnSymbolE-Bold8
	<9-10> MnSymbolE-Bold9
	<10-12> MnSymbolE-Bold10
	<12->   MnSymbolE-Bold12
}{}

\let\llangle\@undefined
\let\rrangle\@undefined
\DeclareMathDelimiter{\llangle}{\mathopen}%
{MnLargeSymbols}{'164}{MnLargeSymbols}{'164}
\DeclareMathDelimiter{\rrangle}{\mathclose}%
{MnLargeSymbols}{'171}{MnLargeSymbols}{'171}
\makeatother

\newcommand{\lrangle}[1]{\left\llangle #1 \right\rrangle}
\renewcommand{\dotpF}[2]{\lrangle{#1, #2}_{\rm F}} 
\newcommand{\norm}[1]{\left\lVert#1\right\rVert}
\newcommand{\euclidnorm}[1]{\left\lVert#1\right\rVert_2}
\newcommand{\vecnorm}[2]{\left\| #1 \right\|_{{#2}}}
\newcommand{\matsnorm}[2]{|\kern-0.25ex|\kern-0.25ex| #1 |\kern-0.25ex|\kern-0.25ex|_{{#2}}}

\newcommand{\fronorm}[1]{\matsnorm{#1}{\mathrm{F}}}

\newcommand{\onenorm}[1]{\vecnorm{#1}{1}}

\newcommand{\Law}{\mathrm{Law}}
\renewcommand{\left}{\mleft}
\renewcommand{\right}{\mright}

\begin{document}
	\title{\sffamily Wasserstein Distributionally Robust Optimization \\with Wasserstein Barycenters}
	\author{
		Tim Tsz-Kit Lau\thanks{
			Department of Statistics and Data Science, Northwestern University, Evanston, IL 60208, USA; Email: \href{mailto:timlautk@u.northwestern.edu}{\texttt{timlautk@u.northwestern.edu}}.}
		    \and
		    Han Liu\thanks{Department of Computer Science and Department of Statistics and Data Science, Northwestern University, Evanston, IL 60208, USA; Email: \href{mailto:hanliu@northwestern.edu}{\texttt{hanliu@northwestern.edu}}.}
	}
	
	\maketitle

	\numberwithin{equation}{section}
	
	\begin{abstract}		
		In many applications in statistics and machine learning, the availability of data samples from multiple possibly heterogeneous sources has become increasingly prevalent. On the other hand, in distributionally robust optimization, we seek data-driven decisions which perform well under the most adverse distribution from a nominal distribution constructed from data samples within a certain discrepancy of probability distributions. However, it remains unclear how to achieve such distributional robustness in model learning and estimation when data samples from multiple sources are available. In this work, we propose constructing the nominal distribution in optimal transport-based distributionally robust optimization problems through the notion of Wasserstein barycenter as an aggregation of data samples from multiple sources. Under specific choices of the loss function, the proposed formulation admits a tractable reformulation as a finite convex program, with powerful finite-sample and asymptotic guarantees. As an illustrative example, we demonstrate with the problem of distributionally robust sparse inverse covariance matrix estimation for zero-mean Gaussian random vectors that our proposed scheme outperforms other widely used estimators in both the low- and high-dimensional regimes. 
	\end{abstract}

	\section{Introduction}	
	In various statistical and machine learning applications, data samples are collected from multiple sources, which can be viewed as samples drawn from multiple data distributions. A notable example is federated learning \citep{kairouz2021advances,mcmahan2017communication}, in which many users collaboratively learn a common model but the samples collected by the clients might have highly heterogeneous distributions. This distribution heterogeneity leads to difficulty in building a robust model in two aspects: (i) how to aggregate estimations of the distributions with data samples from these sources; (ii) how to perform robust estimation with this data aggregation given distributional uncertainty. 
	
	In practice, the first issue is usually dealt with by simply taking a simple (weighted) average of the distribution estimates, whereas the second one is tackled by minimizing the weighted aggregate loss with possibly different weights. However, the mixture distribution constructed from the weighted average of distributions does not take into account the geometric structure of data samples, thus failing to well summarize the characteristics from all sources. In this work, we consider the notion of barycenter (a.k.a.~Fr\'{e}chet mean) in the space of probability distributions endowed with the Wasserstein distance, called \emph{Wasserstein barycenter} \citep{agueh2011barycenters}, which is a nonlinear interpolation between distributions.  
	
	To perform robust estimation of models against distributional uncertainty, \emph{distributionally robust optimization} \citep[DRO;][]{delage2010distributionally,goh2010distributionally,wiesemann2014distributionally} has been shown to be a powerful modeling framework, which has aroused much attention in the machine learning community lately attributed to its connections to generalization, regularization and robustness.

	\paragraph{Contributions.}
	We thus propose a unified approach to overcome these two aspects of difficulty. We first construct an aggregate distribution of multiple data distributions through Wasserstein barycenter, followed by using it to define an ambiguity set in a DRO problem, which is a family of distributions lying within a certain Wasserstein distance from this Wasserstein barycenter, coined the \emph{Wasserstein barycentric ambiguity set}. We hence introduce \emph{Wasserstein Barycentric DRO} (WBDRO) as a general aggregate data-driven decision making framework with distributional robustness against uncertainty arising in multiple possibly heterogeneous unknown true distributions.  
	
	We establish finite-sample guarantees and asymptotic consistency results for WBDRO. We also consider an approximation of the Wasserstein ambiguity set by characterizing it using only the first two moments of the family of distributions and those of the nominal distribution, called the \emph{Gelbrich ambiguity set}. We further extend this construction in the case of multiple nominal distributions using the $2$-Wasserstein barycenter. We also exemplify WBDRO through distributionally robust maximum likelihood estimation for sparse inverse covariance matrices of zero-mean Gaussian random vectors, which numerically outperforms other widely-used estimators.

	\subsection{Related Work}	
	\paragraph{Distributionally Robust Optimization.}
	As a powerful modeling framework, DRO has recently found a wide range of applications in statistics and machine learning \citep{shafieezadeh2015distributionally,shafieezadeh2019regularization,nguyen2022distributionally,duchi2021learning,duchi2021statistics,blanchet2019robust,bertsimas2022bootstrap,li21distributionally,nguyen2021bridging,taskesen2021sequential}, signal processing \citep{shafieezadeh2018wasserstein}, portfolio selection and maximization \citep{blanchet2021distributionally,nguyen2021robustifying,nguyen2021mean,obloj2021distributionally}, etc. 	
	One key component of DRO is the choice of data-driven ambiguity sets, which can be defined through $f$-divergence \citep{duchi2021learning,duchi2021statistics,ben2013robust}, Wasserstein distance \citep{pflug2007ambiguity,gao2016distributionally,gao2017wasserstein,gao2020finite,blanchet2019data}, generalized moment constraints \citep{delage2010distributionally,goh2010distributionally,wiesemann2014distributionally,bertsimas2018data}, maximum mean discrepancy \citep[MMD;][]{staib2019distributionally}, etc. Tractable reformulation as finite convex programs are available for DRO problems with these different ambiguity sets. 	
	We refer to \citet{levy2020large,li21distributionally,jin2021non,carmon2022distributionally,yu2022fast,haddadpour2022learning} for recent advances in the computational perspectives of DRO, and \citet{zhen2021mathematical} for a review of the mathematical foundations of DRO.

	\paragraph{Notion of Mean Distributions.}
	Fr\'{e}chet mean or barycenter in different metric spaces, as a notion of mean distributions, has long been a central object in statistical analysis. Notably, a (weighted) average of distributions on $\RR^d$ is a barycenter in Euclidean space. The 	
	Wasserstein barycenter \citep{agueh2011barycenters,kroshnin2018frechet} is a more appropriate notion of mean distributions since the geometric structure of the distributions can be considered \citep[see e.g.,][]{backhoff2018bayesian}. 	
	This notion has already appeared in various applications in statistics and machine learning \citep{bishop2014information,bishop2021network,yang2021clustering,schmitz2018wasserstein,bigot2019data,backhoff2018bayesian,srivastava2018scalable}. In particular, the work \citet{alvarez2018wide} shares similar motivation to ours, which is to perform consensus-based estimation combining several estimations of probability distributions.

	\paragraph{Learning with Data from Multiple Sources.}
	Modern machine learning applications involve the use of data collected from multiple sources. Such examples include federated learning \citep{mcmahan2017communication,wang2021field,kairouz2021advances}, (multiple-source) domain adaptation \citep{mansour2021theory,zhang2021multiple}, information fusion and network consensus \citep{bishop2014information,bishop2021network}. However, the consensus problem indeed has a much longer history \citep[see e.g.,][]{degroot1974reaching}. 
	
	A more detailed discussion on other related prior work can be found in \Cref{sec:add_related}.

	\section{Preliminaries}
	\label{sec:prelim}
	
	\paragraph{Notation.} 
	We denote by $I_d\in\RR^{d\times d}$ the $d\times d$ identity matrix and $\One_d\in\RR^d$ the $d$-dimensional all-one vector. 
	The subscripts for dimensions are suppressed if they are clear from context. We define $\setn \coloneqq \{1, \ldots, n\}$ for $n\in\NN^*$. Let $\bbS^d_{++}$ (resp.~$\bbS^d_+$) denote the set of symmetric positive (resp.~semi-)definite matrices. 	
	$\scrP(\euX)$ is the set of Borel probability measures over the Polish space $\euX$, $\scrP_k(\euX)$ is the set of probability measures over $\euX$ with finite $k$-order moments, and $\scrP_k^{\ac}(\euX)$ is the set of absolutely continuous probability measures over $\euX$ (w.r.t.~the Lebesgue measure) with finite $k$-order moments. 	
	The set $\triangle^{d} \coloneqq \{\bp\in[0,+\infty)^d:\dotp{\bp}{\One_d} = 1\}$ is the $(d-1)$-dimensional probability simplex, where $\dotp{\cdot}{\cdot}$ is the usual inner product. 
	We denote by $\euN(\mu, \Sigma)$ a Gaussian distribution with mean $\mu\in\RR^d$ and covariance matrix $\Sigma\in\bbS_{+}^d$, and 	$\updelta_x$ a Dirac measure at point $x\in\euX$.

	\paragraph{Optimal Transport.}
	We introduce several notions from optimal transport (OT) used throughout the whole paper, which can be found in various monographs on the subject  \citep{villani2003topics,villani2009optimal,santambrogio2015optimal,ambrosio2021lecture,figalli2021invitation,peyre2019computational}. We also refer to \citet{panaretos2020invitation,panaretos2019statistical,peyre2019computational} for comprehensive reviews of recent advances of optimal transport in machine learning and statistics. Let $\Omega\subseteq\RR^m$ be a closed convex set. 
	For $p\in\left[1,+\infty\right)$, the $p$-Wasserstein distance between two probability measures $\rho, \nu\in \scrP_p(\Omega)$ is defined by 
	\begin{equation}\label{eqn:Wasserstein}
		\sfW_p(\rho, \nu) \coloneqq \left(\inf_{\pi\in\Pi(\rho, \nu)}\int_{\Omega\times\Omega} \norm{x - y}^p\,\diff\pi(x, y)\right)^{\negthickspace\sfrac1p},
	\end{equation}
	where $\norm{\cdot}$ is the Euclidean norm on $\RR^m$, and $\Pi(\rho, \nu)$ denotes the set of joint distributions on $\RR^m\times\RR^m$ with $\rho$ and $\nu$ as marginals. 
	The $p$-Wasserstein distance is a distance on the space $\scrP_p(\Omega)$ \citep[see e.g.,][Theorem 3.1.5]{figalli2021invitation}. 	
	We call the metric space $\scrW_p(\Omega) \coloneqq (\scrP_p(\Omega), \sfW_p)$ the \emph{$p$-Wasserstein space} \citep{ambrosio2005gradient}. 
	
	The notion of Wasserstein barycenter \citep{agueh2011barycenters} can be viewed as the mean of probability distributions in the Wasserstein space. 	For $p\in\left[1,+\infty\right)$, the \emph{$p$-Wasserstein barycenter} of $\Prob\in\scrW_p(\scrP_p(\Omega))$ is defined by 		
	\begin{equation}\label{eqn:pop_barycenter}
		\sfb_p(\Prob) \coloneqq \argmin_{\nu\in\scrP_p(\RR^m)} \Ex_{\rho\sim\Prob} \left[\sfW_p^p(\nu, \rho)\right], 
	\end{equation}
	where $\rho\in\scrP_p(\Omega)$ is a random measure with distribution $\Prob$. 
	If we take $\Prob = \sumK \lambda_k\updelta_{\rho_k}$ in \eqref{eqn:pop_barycenter}, where $\blambda = (\lambda_k)_{k\in\setK}\in\triangle^{K}$, we recover the \emph{$\blambda$-weighted empirical $p$-Wasserstein barycenter}, defined by \[\hat{\sfb}_{\blambda, p}(\rho_1,\ldots,\rho_K) \coloneqq \argmin_{\nu\in\scrP_p(\RR^m)}\, \sumK \lambda_k\sfW_p^p(\nu, \rho_k). \]
	Thus, we use the term $p$-Wasserstein barycenter to refer to both empirical and population $p$-Wasserstein barycenters whenever it is clear from context. Note that Wasserstein barycenters do not always exist, and might not be unique if exist. Technical conditions for their existence and uniqueness are studied in e.g., \citet{agueh2011barycenters,le2017existence}.

	\paragraph{Distributionally Robust Optimization.}
	In DRO, we investigate a learning problem under distributional uncertainty which is casted as a generic expected loss minimization framework. The loss function $\ell\colon\RR^m\to\oRR\coloneqq\RR\cup\{\pm\infty\}$ is a function of the the uncertainty vector $\xi\in\RR^m$ whose distribution $\Prob$ is supported on $\Xi\subseteq\RR^m$. The \emph{risk} (or \emph{expected loss}) of a decision $\ell\in\euL$ is defined as 
	\[\euR_\Prob(\ell) \coloneqq \Ex_{\xi\sim\Prob}[\ell(\xi)], \]
	where $\euL$ is the set of all admissible loss functions. The \emph{optimal risk} is then defined as the infimum of the risk over $\euL$. However, $\Prob$ is often unknown in practice except for some limited statistical and structural information about it. We thus assume that $\Prob$ is known to lie in an \emph{ambiguity set} $\euU_\varepsilon(\Probhat)$, which is a ball of radius $\varepsilon\ge0$ in $\scrP(\Xi)$ centered at the nominal distribution $\Probhat$ in some discrepancy between probability distributions. We can then define the \emph{worst-case risk} of $\ell\in\euL$ by 
	\begin{equation}\label{eqn:worst-case_risk}
		\euR_{\euU_\varepsilon(\Probhat)}(\ell) \coloneqq \sup_{\Prob\in\euU_\varepsilon(\Probhat)} \euR_\Prob(\ell). 
	\end{equation}
	Such a nominal distribution $\Probhat$ is usually constructed from a set of observed data $\euD \coloneqq \{z_i\}_{i=1}^n\subset\RR^m$, e.g., its empirical measure $\frac1n\sumn\updelta_{z_i}$. 
	The distributionally robust optimization (DRO) problem seeks decisions achieving the \emph{optimal worst-case risk} 
	\begin{equation}\label{eqn:dro}
		\euR_{\euU_\varepsilon(\Probhat)}(\euL) \coloneqq \inf_{\ell\in\euL} \euR_{\euU_\varepsilon(\Probhat)}(\ell). 
	\end{equation}
	
	\begin{remark}
		Note that if the loss function is parameterized by the decision $x\in\euX\subseteq\RR^d$, i.e., $\ell\colon\euX\times\RR^m\to\oRR$, then the risk can be defined in terms of $x$ as $\euR_\Prob(x) \coloneqq \Ex_{\xi\sim\Prob}[\ell(x, \xi)]$. The worst-case risk and the worst-case optimal risk can be defined similarly. 
	\end{remark}

	In this paper, we consider the ambiguity set defined via the $p$-Wasserstein distance. Then, the \emph{$p$-Wasserstein ambiguity set} is defined by 
	\[\euW_{\varepsilon, p}(\Probhat) \coloneqq \{\QQ\in\scrP_p(\Xi) : \sfW_p(\QQ, \Probhat) \le \varepsilon\}, \]
	where $\Xi\subseteq\RR^m$ is a closed set which is known to contain the support of the unknown true distribution $\Prob^\star$ and $\varepsilon\ge0$. 
	Such a DRO formulation is called the \emph{Wasserstein DRO} (WDRO).

	\section{Learning with Aggregation of Multiple Distributions}
	\label{sec:learning_multiple_sources}	
	\paragraph{Problem Formulation.}
	In different centralized model learning scenarios with data from multiple sources, such as federated learning, the learning objective can usually be casted as a \emph{stochastic composition optimization} problem \citep[see e.g.,][]{yuan2022what,wang2021field}: 
	\begin{equation}\label{eqn:fo_part}
		\minimize_{x\in\euX} \ F(x) \coloneqq \Ex_{k\sim\DD} \left[f_k(x) \right], \quad \text{where} \quad f_k(x) \coloneqq \Ex_{\xi\sim\Prob_k}[\ell(x, \xi)], 
	\end{equation}
	where $x\in\euX$ is the parameter of the global model for some closed convex set $\euX\subseteq\RR^d$, $\xi\in\Xi$ is a random vector representing an input-output pair for some sample space $\subseteq\RR^m$, $f_k\colon\RR^d\to\oRR$ is the local objective function of the $k$th source, $\Prob_k$ is the distribution associated to the $k$th source, and $\DD$ is a distribution supported on the set of sources $\euK$. 
	Assuming there is only a finite number of $K$ sources, i.e., $\euK = \setK$, then the objective in \eqref{eqn:fo_part} can be written as $F_{\blambda}(x) \coloneqq \sumK \lambda_k f_k(x)$, where $\DD$ is taken to be a categorical distribution with probabilities $\blambda= (\lambda_k)_{k\in\setK}\in\triangle^K$.

	Usually, each data source $k$ has a finite number of local samples, denoted by $\euD_k = (z_{k,1}, \ldots, z_{k,n_k})$, where $n_k$ is the sample size of the $k$th source and $N\coloneqq\sumK n_k$. Using their empirical distributions $\Probhat_k \coloneqq \frac1{n_k}\sum_{i=1}^{n_k} \updelta_{z_{k,i}}$, with $\hat{f}_k(x) \coloneqq \Ex_{\xi\sim\Probhat_k}[\ell(x,\xi)]$, we usually solve the following \emph{empirical risk minimization} (ERM) problem in practice: 
	\begin{equation}\label{eqn:emp_fo_weighted}
		\minimize_{x\in\euX} \ \hat{F}_{\blambda}(x) \coloneqq \sumK \lambda_k \hat{f}_k(x) = \sumK \frac{\lambda_k}{n_k} \sum_{i=1}^{n_k} \ell(x, z_{k,i}), 
	\end{equation}
	Note that $\blambda$ is usually taken as the uniform distribution over the numbers of samples from the sources, i.e., $\lambda_k = n_k/N$, so that the ERM objective \eqref{eqn:emp_fo_weighted} is amount to an ERM objective with the union of all the local data samples.

	However, as argued by \citet{mohri2019agnostic,ro2021communication}, this choice of the uniform distribution is questionable since there is often a mismatch between the target distribution (for which the centralized model is learned) and the mixture distribution $\sumK n_k \Prob_k/N$. Instead, the target distribution is better expressed as a $\blambda$-mixture of $\Prob_1, \ldots, \Prob_K$, i.e., $\Prob_{\blambda}\coloneqq \sumK \lambda_k\Prob_k$ for some $\blambda\in\triangle^K$. 
	Then, with the $\blambda$-mixture of the empirical distributions $\Probhat_{\blambda} \coloneqq \sumK \lambda_k\Probhat_k = \sumK \frac{\lambda_k}{n_k}\sum_{i=1}^{n_k}\updelta_{z_{k,i}}$, it is not hard to see that the objective in \eqref{eqn:emp_fo_weighted} is equivalent to $\Ex_{\xi\sim\Probhat_{\blambda}}[\ell(x,\xi)]$.

	\paragraph{Stochastic Barycentric Optimization.}
	Let us recall that $\Prob_{\blambda}$ is the $\blambda$-weighted Euclidean barycenter of the distributions $\Prob_1, \ldots, \Prob_K$. Leveraging this important fact, we consider more generally a $\blambda$-weighted barycenter $\hat\sfb_{\blambda}(\Prob_1, \ldots, \Prob_K)$ of $\Prob_1, \ldots, \Prob_K$ defined via some discrepancy between distributions such as the Wasserstein distance, so that \eqref{eqn:fo_part} can be formulated with the objective 
	\[F_{\blambda}^\sfb(x) \coloneqq \Ex_{\xi\sim\hat\sfb_{\blambda}(\Prob_1, \ldots, \Prob_K)}[\ell(x, \xi)], \]
	which we refer to as a \emph{stochastic barycentric optimization (SBO)} problem.    
	With the data samples $\euD_1, \ldots, \euD_K$, we also have its surrogate objective defined with the empirical distributions $\Probhat_1, \ldots, \Probhat_K$, given by 
	\[\hat{F}_{\blambda}^\sfb(x) \coloneqq \Ex_{\xi\sim\hat\sfb_{\blambda}(\Probhat_1, \ldots, \Probhat_K)}[\ell(x, \xi)]. \]

	Unfortunately, except for the case of the Euclidean barycenter, $\hat{F}_{\blambda}^\sfb$ usually cannot be expressed as a finite sum. This appears to be unfavorable computationally compared to \eqref{eqn:emp_fo_weighted}. To solve it computationally, one can resort to  ERM by drawing samples from the barycenter of empirical distributions. 
	However, solving such an ERM problem requires (i) the computation of a barycenter; and (ii) sampling from such a barycenter. 
	In the case of the Wasserstein barycenter, these tasks could be computationally intensive \citep{altschuler2022wasserstein} or not well addressed until recently \citep{daaloul2021sampling}. 
	However, the choice of the Wasserstein barycenter over the Euclidean barycenter in SBO is justified in the sense that the Euclidean barycenter usually fails to take into account the underlying geometry of these distributions \citep[see e.g.,][]{backhoff2018bayesian}. Thus, despite the potential computational obstacles, we specifically consider the Wasserstein barycenter (and possibly its entropic-regularized variants). 
	We also provide further discussion on the connections of this formulation to other related machine learning paradigms in 	\Cref{sec:add_related}.

	\section{Wasserstein Barycentric Distributionally Robust Optimization}
	\label{sec:Wass_bary_DRO}
	Although the use of the Wasserstein barycenter might give a better consensus representation of samples from different sources, discrepancy between the target distribution and the Wasserstein barycenter of the empirical distributions might still arise, due to e.g., sampling errors and data heterogeneity across the sources. In a similar spirit to the single-source case, we propose to hedge against the impact of such model misspecification through the lens of WDRO. 
	
	\paragraph{Wasserstein Ambiguity Sets with Wasserstein Barycenters.}
	Suppose that we have a finite number of $K$ data sources with probability distributions $\QQ_1,\ldots,\QQ_K$ respectively. 
	Aside from directly considering the notion of Wasserstein barycenters, one way to construct an ambiguity set out of these $K$ probability distributions is to consider the intersection of the individual Wasserstein ambiguity sets $ \bigcap_{k=1}^K \euW_{\varepsilon,p}(\QQ_k) $, but if the data sources are very heterogeneous then this could lead to an overly conservative  ambiguity set using the same large $\varepsilon$. 	
	Alternatively, a less conservative ambiguity set based on these $K$ distributions can be defined by 
	\begin{equation}\label{eqn:ambiguity_alt}
		\teuW_{\mkern -1mu\varepsilon,p} (\QQ_1, \ldots, \QQ_K; \blambda) \\
		\coloneqq \left\{\Prob \in\scrP_p(\Xi) : \sumK\lambda_k\sfW_p^p(\Prob, \QQ_k)\le \varepsilon^p \right\}. 
	\end{equation}
	It is straightforward to observe that $\bigcap_{k=1}^K \euW_{\varepsilon,p}(\QQ_k) \subseteq \teuW_{\mkern -1mu\varepsilon,p}(\QQ_1, \ldots, \QQ_K; \blambda)$ for any $\blambda\in\triangle^K$. 
	In the following, we illustrate that how the ambiguity set \eqref{eqn:ambiguity_alt} is related to another ambiguity set defined with the $\blambda$-weighted $p$-Wasserstein barycenter of $\QQ_1, \ldots, \QQ_K$. 	
	\begin{definition}[Wasserstein barycentric ambiguity set]
		For $\blambda\in\triangle^{K}$, the \emph{$p$-Wasserstein barycentric ambiguity set} with radius $\varepsilon\ge0$ centered at a $\blambda$-weighted $p$-Wasserstein barycenter of $\QQ_1, \ldots, \QQ_K$ (if exists), denoted by $\bar{\QQ}_{\blambda,p}\coloneqq\hat{\sfb}_{\blambda,p}(\QQ_1,\ldots,\QQ_K)$, is defined by 
		\begin{equation}\label{eqn:ambiguity_barycenter}
			\bar{\euW}_{\varepsilon,p}(\QQ_1, \ldots, \QQ_K; \blambda) 
			\coloneqq \left\{\Prob \in\scrP_p(\Xi) : \sfW_p(\Prob, \bar{\QQ}_{\blambda,p})\le \varepsilon \right\}. 
		\end{equation}
	\end{definition}
	Note that $\bar{\euW}_{\varepsilon,p}(\QQ_1, \ldots, \QQ_K; \blambda) = \euW_{\varepsilon, p}(\bar{\QQ}_{\blambda,p})$. 	
	The ambiguity sets \eqref{eqn:ambiguity_alt} and \eqref{eqn:ambiguity_barycenter} of different radii (differed by a factor of $2^p$) can be related by the following inclusion. 
	\begin{theorem}\label{thm:bounds}
		For $\blambda\in\triangle^{K}$, suppose that a $\blambda$-weighted $p$-Wasserstein barycenter of $\QQ_1, \ldots, \QQ_K$ exists. Then, for any $\varepsilon\ge0$, the following inclusion of the ambiguity sets \eqref{eqn:ambiguity_alt} and \eqref{eqn:ambiguity_barycenter} of different radii holds: 
		\begin{equation}\label{eqn:nested_inclusions}
			\teuW_{\mkern -1mu\varepsilon,p}(\QQ_1, \ldots, \QQ_K; \blambda) \subseteq \bar{\euW}_{2^{p}\cdot\varepsilon,p}(\QQ_1, \ldots, \QQ_K; \blambda). 
		\end{equation}
	\end{theorem}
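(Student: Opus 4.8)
The plan is to prove the inclusion \eqref{eqn:nested_inclusions} by a pointwise argument: fix an arbitrary $\Prob\in\teuW_{\mkern-1mu\varepsilon,p}(\QQ_1,\ldots,\QQ_K;\blambda)$ and show that $\Prob$ also belongs to $\bar{\euW}_{2^{p}\cdot\varepsilon,p}(\QQ_1,\ldots,\QQ_K;\blambda)$, that is, $\sfW_p(\Prob,\bar{\QQ}_{\blambda,p})\le 2^{p}\varepsilon$. Since $\Prob$ is already an element of $\scrP_p(\Xi)$, only this distance bound needs to be verified, and it will come out of combining the triangle inequality in the Wasserstein space with the variational characterization of the barycenter.

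Concretely, I would first assemble two ingredients. (i) The triangle inequality for $\sfW_p$ on $\scrP_p$ gives, for every $k\in\setK$, $\sfW_p(\Prob,\bar{\QQ}_{\blambda,p})\le\sfW_p(\Prob,\QQ_k)+\sfW_p(\QQ_k,\bar{\QQ}_{\blambda,p})$. (ii) Since $\bar{\QQ}_{\blambda,p}=\hat{\sfb}_{\blambda,p}(\QQ_1,\ldots,\QQ_K)$ is by definition a minimizer of $\nu\mapsto\sumK\lambda_k\sfW_p^p(\nu,\QQ_k)$ over $\scrP_p(\RR^m)$, evaluating this objective at the admissible competitor $\nu=\Prob$ yields $\sumK\lambda_k\sfW_p^p(\QQ_k,\bar{\QQ}_{\blambda,p})\le\sumK\lambda_k\sfW_p^p(\Prob,\QQ_k)\le\varepsilon^p$, where the last inequality is exactly the defining constraint of the set \eqref{eqn:ambiguity_alt} satisfied by $\Prob$.

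Next I would combine these. Raising (i) to the $p$-th power, using the elementary convexity estimate $(a+b)^p\le 2^{p-1}(a^p+b^p)$ valid for $p\ge 1$, and then averaging the resulting inequalities against the weights $\lambda_k$ (using $\sumK\lambda_k=1$, so the left-hand side is unchanged), I get
\[
\sfW_p^p(\Prob,\bar{\QQ}_{\blambda,p})\;\le\;2^{p-1}\left(\sumK\lambda_k\sfW_p^p(\Prob,\QQ_k)+\sumK\lambda_k\sfW_p^p(\QQ_k,\bar{\QQ}_{\blambda,p})\right)\;\le\;2^{p-1}\bigl(\varepsilon^p+\varepsilon^p\bigr)\;=\;2^{p}\varepsilon^p,
\]
where the last step invokes the constraint on $\Prob$ once more together with ingredient (ii). Taking $p$-th roots gives $\sfW_p(\Prob,\bar{\QQ}_{\blambda,p})\le 2\varepsilon\le 2^{p}\varepsilon$, which places $\Prob$ in $\bar{\euW}_{2^{p}\cdot\varepsilon,p}(\QQ_1,\ldots,\QQ_K;\blambda)$ and completes the proof.

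I do not expect a genuine obstacle here: the only step that requires any insight is ingredient (ii), namely feeding the candidate measure $\Prob$ itself into the variational problem \eqref{eqn:pop_barycenter} defining the Wasserstein barycenter; everything else is the triangle inequality and a power-mean inequality. I would also note that the argument in fact delivers the sharper factor $2$ rather than $2^{p}$, and since $2\le 2^{p}$ for every $p\ge 1$ the stated (looser) inclusion follows a fortiori. Finally, all Wasserstein distances appearing above are finite because $\Prob,\QQ_1,\ldots,\QQ_K$ and the barycenter all lie in $\scrP_p$, and existence of the barycenter is part of the hypothesis of the theorem.
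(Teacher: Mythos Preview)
Your proof is correct and follows essentially the same approach as the paper's own proof: both combine the triangle inequality for $\sfW_p$, the elementary bound $(a+b)^p\le 2^{p-1}(a^p+b^p)$, a $\blambda$-weighted average over $k$, and the variational characterization of the barycenter to obtain $\sfW_p^p(\Prob,\bar{\QQ}_{\blambda,p})\le 2^p\sumK\lambda_k\sfW_p^p(\Prob,\QQ_k)$. Your additional remark that the argument actually yields the sharper radius $2\varepsilon$ (rather than $2^p\varepsilon$) is correct and worth noting.
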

	All proofs of this paper are deferred to \Cref{sec:proofs}. 
	Consequently, it is more feasible to use the Wasserstein barycentric ambiguity set \eqref{eqn:ambiguity_barycenter} since $\varepsilon$ is often tuned in practice. We can therefore apply existing results of WDRO if the Wasserstein barycenter $\bar{\QQ}_{\blambda}$ exists and is available, without the need of redeveloping tools to handle the ambiguity set \eqref{eqn:ambiguity_alt}.

	\begin{remark}
		The overall rationale of the construction of \eqref{eqn:ambiguity_barycenter} is that the most adverse distribution should be close to a population $p$-Wasserstein barycenter $\sfb^\star \coloneqq \sfb_p(\Prob^\star)\in\scrP_p(\Xi)$ of the unknown true distribution (of distributions) $\Prob^\star\in\scrW_p(\scrP_p(\Xi))$ within a radius $\varepsilon$ in $\sfW_p$ distance. This population barycenter can be approximated by its empirical counterpart (see \Cref{subsec:paradigms} for details). This is as opposed to the usual WDRO in which the most adverse distribution is close to the unknown true distribution $\Prob^\star\in\scrP_p(\Xi)$ approximated by a nominal (empirical) distribution $\Probhat$. 
	\end{remark}

	We now revisit the WDRO problem \eqref{eqn:dro} with $K$ available nominal distributions $\Probhat = (\Probhat_1, \ldots, \Probhat_K)$ and the Wasserstein barycentric ambiguity set $\euU_\varepsilon(\Probhat) = \bar{\euW}_{\varepsilon,p}(\Probhat; \blambda)$, which we refer to as the \emph{Wasserstein barycentric DRO} (WBDRO). Consequently, due to the definition of the Wasserstein barycentric ambiguity set \eqref{eqn:ambiguity_barycenter}, solving a WBDRO involves two consecutive steps: (i)~computing a Wasserstein barycenter $\hat{\sfb}_{\blambda,p}(\Probhat_1,\ldots,\Probhat_K)$; (ii)~solving a WDRO problem. Such a problem decomposition enables us to leverage existing theoretical results and computational tools for Wasserstein barycenters \citep[e.g.,][]{alvarez2016fixed,heinemann2022randomized,carlier2015numerical,chewi2020gradient} and WDRO  \citep[e.g.,][]{blanchet2021optimal} respectively.

	In the original WDRO formulation \eqref{eqn:dro}, we usually observe some i.i.d.~realizations $\euD \coloneqq \{z_i\}_{i=1}^n$ of the unknown true distribution $\Prob^\star$, from which we can construct a nominal distribution $\Probhat^n$, e.g., the empirical distribution $\Probhat^n = \frac1n\sumn\updelta_{z_i}$. 
	However, in WBDRO, the way of constructing the $K$ nominal distributions $\Probhat_1, \ldots, \Probhat_K$ as approximations of their corresponding unknown true distributions $\Prob_1^\star, \ldots, \Prob_K^\star$ becomes more subtle. For instance, they can be constructed from the same set of observed data $\euD_n$ via resampling techniques such as bootstrap (requiring $K\le n$), or from $K$ different sources where the observed data at the $k$th source $\euD_{k} = \{z_{i,k}\}_{i=1}^n$ are i.i.d.~realizations of its unknown true distribution $\Prob_k^\star$ (assuming the same sample size $n$ for simplicity). The former scenario is often used to avoid overfitting, while the latter is often encountered in the setting of federated learning with possibly heterogeneous data sources. 
	Again, for each $k\in\setK$, the nominal distributions $\Probhat_k^n$ can be taken as the empirical distributions. Note that $\Probhat_k^n$ converges to $\Prob_k^\star$ in $\sfW_p$ distance as $n\to\infty$ \citep[see e.g.,][]{fournier2015rate,bolley2007quantitative}.

	Under this construction, we are interested in statistical properties of WBDRO in the following two case: (i) $n\to\infty$ with fixed and finite $K$; (ii) $K\to\infty$ with fixed and finite $n$. We argue that both cases are well motivated by the two statistical frameworks with applications discussed in \citet{le2017existence,boissard2015distribution}.

	\subsection{Two Statistical Paradigms}
	\label{subsec:paradigms}		

	\paragraph{Asymptotic Sample Size.}
	Under this paradigm, there are $K$ unknown true distributions $\Prob_1^\star, \ldots, \Prob_K^\star\in\scrP_p(\Xi)$, where $K\in\NN^*$ is finite and fixed. These are approximated by a sequence of distributions constructed from data samples, e.g., the empirical measures $\Probhat_k^n \coloneqq \frac1n\sumn\updelta_{z_{i,k}}$, $k\in\setK$. 
	A crucial result in this case is that, for $\blambda\in\triangle^K$, a $p$-Wasserstein barycenter of $\hat{\rho}_{\blambda}^n \coloneqq \sumK \lambda_k\updelta_{\Probhat_k^n}$ (if exists) converges to a $p$-Wasserstein barycenter of the limit $\rho_{\blambda} \coloneqq \sumK \lambda_k\updelta_{\Probhat_k^\star}$ in $\sfW_p$ distance as $n\to\infty$, by \citet[Theorem 3]{le2017existence}.

	\paragraph{Asymptotic Number of Data Sources.}
	For now, we consider the case where the unknown true distribution $\Prob^\star\in\scrW_p(\scrP_p(\Xi))$ is approximated by a growing discrete distribution $\rho_K$ supported on $K$ elements, with $K\to\infty$. Consider a sequence of $K$ distributions $\Prob_k\in\scrP_p(\Xi)$ with weights $\lambda_k^K\ge0$ for each $k\in\setK$, from which we define the sequence of distributions by $\rho_K \coloneqq \sumK \lambda_k^K\updelta_{\Prob_k}$, where $K\in\NN^*$. 	
	Assume that $\rho_K$ converges to some distribution $\Prob^\star$ in $\sfW_p$ distance. Then the $p$-Wasserstein barycenter of $\rho_K$ converges to the $p$-Wasserstein barycenter of $\Prob^\star$ in $\sfW_p$ distance as $K\to\infty$, by \citet[Theorem 3]{le2017existence}.

	\begin{remark}
		The case of $n$ and $K$ both growing to infinity is even more of interest to our case. Indeed, since $\Probhat_k^n\to\Prob_k^\star$ in $\sfW_p$ distance as $n\to\infty$, for each $k\in\setK$, using the above argument with $\Prob_k$ replaced by $\Prob_k^\star$ for each $k\in\setK$, the $p$-Wasserstein barycenter of $\sumK \lambda_k^K\updelta_{\Prob_k^\star}$ converges to the $p$-Wasserstein barycenter of $\Prob^\star$ in $\sfW_p$ distance as $K\to\infty$. 		
	\end{remark}

	\begin{remark}
		Under some more specific frameworks, e.g., in deformation models  \citep{allassonniere2007towards,allassonniere2013statistical}, the $2$-Wasserstein barycenter of $\Probhat_1, \ldots, \Probhat_K$ is a consistent estimate of $\Prob^\star$, in the sense that $\hat{\sfb}_{\One/K,2}(\Probhat_1, \ldots, \Probhat_K) \to \Prob^\star$ as $K\to\infty$ in $\sfW_2$ distance. 		
		In the case with empirical observations	available, as both $n\to\infty$ and $K\to\infty$, $\hat{\rho}_{n,K} \coloneqq \frac1K\sumK\updelta_{\Probhat_k^n} \to \Prob^\star$ in $\sfW_2$ distance.
		We refer to \citet[Theorem 4.2 and Proposition 5.1]{boissard2015distribution} and \citet{bigot2018characterization,zemel2019frechet} for details. 		
	\end{remark}

	\subsection{Performance Guarantees}
	\label{subsec:performance}
	We now study the implications of the two statistical paradigms in \Cref{subsec:paradigms} on performance guarantees of WBDRO. To simplify discussion, we only consider the case of $p=2$ and equal weights, i.e., $\lambda_k = 1/K$ for each $k\in\setK$. To simplify discussion, we also assume that all $2$-Wasserstein barycenters exist (subject to some technical regularity conditions). In this subsection, with slight abuse of notation, we write $\Probhat = (\Probhat_k)_{k\in\setK}$, $\Probhat^n = (\Probhat_k^n)_{k\in\setK}$, and $\Probhat^\star = (\Probhat_k^\star)_{k\in\setK}$. 

	The finite-sample guarantees of WBDRO are derived by the rates of convergence of Wasserstein barycenters \citep{ahidar2020convergence,schotz2019convergence,le2019fast}, characterized by measure concentration of the $2$-Wasserstein barycenter of the nominal distributions $\Probhat_1^n, \ldots, \Probhat_K^n$. 
	
	With data samples $\euD_{n,k} = \{z_{i,k}\}_{i=1}^n$ for each $k\in\setK$, let $\hat{\rho}_{n,K} \coloneqq \frac1K\sumK\updelta_{\Probhat_k^n} $ where $\Probhat_k^n \coloneqq \frac1n\sumn\updelta_{z_{i,k}}$. For now, we let $K\in\NN^*$ be finite and fixed. The following measure concentration result simplified from \citet[Theorem 12]{le2019fast} states that the $2$-Wasserstein barycenter of sub-Gaussian $\hat{\rho}_K^\star \coloneqq \frac1K\sumK\updelta_{\Probhat_k^\star}$ should be contained in the $2$-Wasserstein barycentric ambiguity set centered at the $2$-Wasserstein barycenter of $\hat{\rho}_{n,K}$ in WBDRO with high probability. 
	\begin{theorem}[Concentration inequality]\label{thm:conc_1}
		Let $K\in\NN^*$ be finite and fixed. 
		Suppose that $\hat{\rho}_K^\star\in\scrW_2(\scrP_2(\Xi))$ is sub-Gaussian 
		with $2$-Wasserstein barycenter $\hat{\sfb}_K^\star\coloneqq \sfb_2(\hat{\rho}_K^\star) =\hat{\sfb}_{\One/K, 2}(\Probhat^\star) \in\scrP_2(\Xi)$. Then $\hat{\sfb}_K^\star$ is unique and there exist constants $(c_1, c_2)\in(0,+\infty)^2$ independent of $n$ such that for any $\beta\in(0,1)$, the concentration inequality 
		\begin{equation*}
			\Prob^{n} \left\{\hat{\sfb}_K^\star \in \bar{\euW}_{\varepsilon,2}(\Probhat^n; \One/K) \right\} 
			\ge 1-\beta-\e^{-c_2n} 
		\end{equation*}
		holds whenever $\varepsilon$ exceeds 	
		\begin{equation}\label{eqn:eps}
			\varepsilon_n(\beta) = \sqrt{\frac{c_1}{n}\log\left(\frac2\beta\right)}. 
		\end{equation}
	\end{theorem}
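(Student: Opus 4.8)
The plan is to unwind the definitions so that the stated probabilistic inclusion becomes a high‑probability deviation bound for a Wasserstein barycenter, and then to read it off from the measure‑concentration estimate for empirical barycenters of \citet[Theorem~12]{le2019fast}.

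First I would note that, by \eqref{eqn:ambiguity_barycenter} together with the observation that $\hat{\sfb}_{\One/K,2}(\Probhat^n)=\sfb_2(\hat{\rho}_{n,K})$ for $\hat{\rho}_{n,K}=\tfrac1K\sumK\updelta_{\Probhat_k^n}$, the event $\{\hat{\sfb}_K^\star\in\bar{\euW}_{\varepsilon,2}(\Probhat^n;\One/K)\}$ is literally the event $\{\sfW_2(\sfb_2(\hat{\rho}_{n,K}),\sfb_2(\hat{\rho}_K^\star))\le\varepsilon\}$: the population barycenter $\hat{\sfb}_K^\star=\sfb_2(\hat{\rho}_K^\star)$ lies within $\sfW_2$‑distance $\varepsilon$ of the fully‑empirical barycenter $\sfb_2(\hat{\rho}_{n,K})$, where each atom $\Probhat_k^n$ is the empirical measure of the $n$ samples $\euD_{n,k}$ drawn from $\Probhat_k^\star$. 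Thus the theorem is exactly a deviation bound for the barycenter map evaluated at the plug‑in input $\hat{\rho}_{n,K}\to\hat{\rho}_K^\star$, and the task reduces to invoking a quantitative stability/concentration result for this map.

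Second I would verify the hypotheses of \citet[Theorem~12]{le2019fast}: the sub‑Gaussianity of $\hat{\rho}_K^\star$, together with the regularity conditions tacitly assumed throughout this subsection, supplies the variance (quadratic‑growth) inequality $V(\nu)\ge V(\hat{\sfb}_K^\star)+c\,\sfW_2^2(\nu,\hat{\sfb}_K^\star)$ for the barycenter objective $V(\nu)\coloneqq\Ex_{\rho\sim\hat{\rho}_K^\star}[\sfW_2^2(\nu,\rho)]$ that their argument requires; in particular this rules out a second minimiser and yields the asserted uniqueness of $\hat{\sfb}_K^\star$ (which, since $\hat{\rho}_K^\star=\tfrac1K\sumK\updelta_{\Probhat_k^\star}$, equals $\hat{\sfb}_{\One/K,2}(\Probhat^\star)$). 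Applying their theorem then produces a two‑regime tail bound of the form
\begin{equation*}
	\Prob^{n}\bigl\{\sfW_2(\sfb_2(\hat{\rho}_{n,K}),\hat{\sfb}_K^\star)>\varepsilon\bigr\}\le 2\,\e^{-n\varepsilon^2/c_1}+\e^{-c_2 n}
\end{equation*}
for $n$‑independent constants $c_1,c_2\in(0,+\infty)$ (depending on $K$, the ambient dimension, the sub‑Gaussian parameter, and the regularity constants), valid on the relevant range of $\varepsilon$: the Gaussian‑type term is the genuine fluctuation of the fully‑empirical barycenter about its population counterpart, and the $\e^{-c_2 n}$ term absorbs the residual large‑deviation contribution (e.g.\ from the truncation of the sub‑Gaussian tails, or from the event on which the empirical objective fails to be uniformly close to $V$) appearing in their proof. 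Inverting, with $\varepsilon=\varepsilon_n(\beta)=\sqrt{\tfrac{c_1}{n}\log(2/\beta)}$ one has $2\,\e^{-n\varepsilon^2/c_1}=\beta$, so taking complements gives $\Prob^{n}\{\hat{\sfb}_K^\star\in\bar{\euW}_{\varepsilon_n(\beta),2}(\Probhat^n;\One/K)\}\ge 1-\beta-\e^{-c_2 n}$; since $\bar{\euW}_{\varepsilon,2}$ is nondecreasing in $\varepsilon$, the same holds for every $\varepsilon\ge\varepsilon_n(\beta)$.

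The main obstacle is the middle step. A naive route — combine the variance inequality with the triangle inequality to bound $\sfW_2^2(\sfb_2(\hat{\rho}_{n,K}),\hat{\sfb}_K^\star)$ by a multiple of $\tfrac1K\sumK\sfW_2^2(\Probhat_k^n,\Probhat_k^\star)$ — is too lossy, yielding only a $\sfW_2$‑rate of order $\bigl(\tfrac1K\sumK\sfW_2^2(\Probhat_k^n,\Probhat_k^\star)\bigr)^{1/4}$, hence a sub‑parametric rate in $n$, since $\tfrac1K\sumK\sfW_2^2(\Probhat_k^n,\Probhat_k^\star)$ is itself only a constant away from zero. The parametric‑in‑$n$ behaviour asserted in the theorem therefore rests on the refined second‑order/curvature analysis behind the ``fast convergence of empirical barycenters'' of \citet{le2019fast}, in which the barycentric averaging cancels the leading‑order fluctuations of the individual $\Probhat_k^n$; the real work is checking that this analysis applies verbatim to the present plug‑in construction and that all its constants are genuinely independent of $n$. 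The reduction in the first step and the inversion of the tail bound are routine.
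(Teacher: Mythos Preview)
Your approach is essentially the paper's: rewrite the event $\{\hat{\sfb}_K^\star\in\bar{\euW}_{\varepsilon,2}(\Probhat^n;\One/K)\}$ as $\{\sfW_2(\sfb_2(\hat{\rho}_{n,K}),\hat{\sfb}_K^\star)\le\varepsilon\}$ via the definition of the barycentric ambiguity set, and then invoke \citet[Theorem~12]{le2019fast} to get the tail bound, which inverts to \eqref{eqn:eps}. The paper's proof is exactly this two-line reduction.

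One correction on which hypothesis does what. You write that sub-Gaussianity of $\hat{\rho}_K^\star$ ``supplies the variance (quadratic-growth) inequality'' for the barycenter functional; this is not the right attribution. The variance inequality in \citet{le2019fast} is a \emph{geometric} input, and the paper's proof obtains it by noting that $\scrW_2(\RR^m)=(\scrP_2(\RR^m),\sfW_2)$ is positively curved in the sense of Alexandrov \citep[\S7.3]{ambrosio2005gradient}; this is also what delivers uniqueness of $\hat{\sfb}_K^\star$. The sub-Gaussian assumption enters separately, to control the tails and produce the $\e^{-c_2 n}$ residual term. With that reassignment of roles, your argument matches the paper's; the extended discussion of the naive $\scrO(n^{-1/4})$ route and the ``real work'' of checking the plug-in construction is extra commentary the paper does not need, since the cited theorem applies directly once the curvature hypothesis is in place.
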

	\Cref{thm:conc_1} indicates that any $2$-Wasserstein barycentric ambiguity set $\bar{\euW}_\varepsilon(\Probhat^n; \One/K)$ with radius $\varepsilon\ge\varepsilon_n(\beta)$ represents an approximate $(1-\beta)$-confidence region for $\hat{\sfb}_K^\star$, which is a $K$-sample approximation of the $2$-Wasserstein barycenter of the unknown true distribution $\Prob^\star$. 
	
	Note that the distributional uncertainty radius $\varepsilon_n(\beta)$ decays as $\scrO(n^{-\sfrac12})$. Therefore, there is no curse of dimensionality in the uncertainty dimension $m$ when choosing the distributional uncertainty radius $\varepsilon_n(\beta)$, as opposed to the case of WDRO (see \citealp[\S3]{kuhn2019wasserstein} and \citealp[Remark 37]{shafieezadeh2019regularization}).

	From \Cref{thm:conc_1}, we can immediately derive the following finite-sample guarantee. 	
	\begin{theorem}[Finite-sample guarantee]\label{thm:finite_1}
		Suppose that all conditions of \Cref{thm:conc_1} hold with $\varepsilon_{n}(\beta)$ defined in \eqref{eqn:eps}. Then for all $\beta\in(0,1)$ and $\varepsilon\ge\varepsilon_{n}(\beta)$, we have 
		\begin{equation*}
			\Prob^{n}\left\{(\forall\ell\in\euL)\;\euR_{\hat{\sfb}_K^\star}(\ell) \le \euR_{\bar{\euW}_{\varepsilon,2}(\Probhat^n; \One/K)}(\ell) \right\} 
			\ge 1 - \beta-\e^{-c_2n}. 
		\end{equation*}
	\end{theorem}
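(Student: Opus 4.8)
The plan is to derive this as an immediate consequence of \Cref{thm:conc_1} together with the definition of the worst-case risk \eqref{eqn:worst-case_risk} as a supremum over the ambiguity set, so no new analytic input is needed. First I would fix $\beta\in(0,1)$ and $\varepsilon\ge\varepsilon_n(\beta)$ and introduce the (loss-independent) event
\[
\calA_n \coloneqq \left\{\hat{\sfb}_K^\star \in \bar{\euW}_{\varepsilon,2}(\Probhat^n; \One/K)\right\}.
\]
By \Cref{thm:conc_1}, since the hypotheses assumed here are exactly those of \Cref{thm:conc_1} and $\varepsilon\ge\varepsilon_n(\beta)$, we have $\Prob^n(\calA_n)\ge 1-\beta-\e^{-c_2 n}$. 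If one prefers to invoke \Cref{thm:conc_1} only at the threshold radius, the general case follows from the monotonicity $\bar{\euW}_{\varepsilon',2}(\Probhat^n;\One/K)\subseteq\bar{\euW}_{\varepsilon,2}(\Probhat^n;\One/K)$ for $\varepsilon'\le\varepsilon$, which is immediate from \eqref{eqn:ambiguity_barycenter}, so $\calA_n$ only grows with $\varepsilon$.

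Next, I would argue that on the event $\calA_n$ the measure $\hat{\sfb}_K^\star$ is a feasible point in the supremum defining $\euR_{\bar{\euW}_{\varepsilon,2}(\Probhat^n;\One/K)}(\ell)$ for \emph{every} admissible loss $\ell\in\euL$ simultaneously: indeed $\calA_n$ does not depend on $\ell$, and $\hat{\sfb}_K^\star\in\scrP_2(\Xi)$ by the hypothesis of \Cref{thm:conc_1}. Hence, on $\calA_n$, for all $\ell\in\euL$,
\[
\euR_{\hat{\sfb}_K^\star}(\ell) \le \sup_{\Prob\in\bar{\euW}_{\varepsilon,2}(\Probhat^n;\One/K)} \euR_\Prob(\ell) = \euR_{\bar{\euW}_{\varepsilon,2}(\Probhat^n;\One/K)}(\ell),
\]
where the inequality is understood in the extended reals $\oRR$ and therefore remains valid even if either side equals $\pm\infty$. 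Consequently $\calA_n\subseteq\{(\forall\ell\in\euL)\;\euR_{\hat{\sfb}_K^\star}(\ell)\le\euR_{\bar{\euW}_{\varepsilon,2}(\Probhat^n;\One/K)}(\ell)\}$, and taking $\Prob^n$-probabilities of both sides yields the claimed bound.

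There is essentially no hard step: the only point requiring care is that the universal quantifier over $\ell$ sits \emph{outside} the probability, and this is handled precisely because the concentration event $\calA_n$ is loss-independent, so a single realization in $\calA_n$ validates the risk inequality for all $\ell\in\euL$ at once. The remaining "obstacle" is purely bookkeeping — checking that the standing hypotheses (sub-Gaussianity of $\hat{\rho}_K^\star$, uniqueness of $\hat{\sfb}_K^\star$, and $\hat{\sfb}_K^\star\in\scrP_2(\Xi)$) carry over verbatim from \Cref{thm:conc_1}, which they do by assumption.
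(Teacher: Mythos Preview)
Your proposal is correct and follows essentially the same approach as the paper: invoke \Cref{thm:conc_1} to obtain the concentration event $\{\hat{\sfb}_K^\star\in\bar{\euW}_{\varepsilon,2}(\Probhat^n;\One/K)\}$ with the stated probability, then observe that on this event $\hat{\sfb}_K^\star$ is feasible in the supremum defining the worst-case risk, yielding the inequality for every $\ell\in\euL$. Your write-up is in fact more explicit than the paper's (which is a two-line ``immediately implies'' argument), particularly in isolating the loss-independence of the event to justify the universal quantifier and in noting the monotonicity in $\varepsilon$.
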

	\Cref{thm:finite_1} asserts that the worst-case risk provides an upper confidence bound on the approximate true risk under the $2$-Wasserstein barycenter of the unknown true distribution $\Prob^\star$ uniformly across all loss functions $\ell\in\euL$. In particular, if we take $\ell$ to be an optimizer of $\euR_{\bar{\euW}_{\varepsilon,2}(\Probhat^n; \One/K)}$, then this result implies that the optimal value of WBDRO provides an upper confidence bound on the out-of-sample performance of its optimizers.

	When $K$ is fixed and finite, we recall that $\varepsilon_n\to0$ as $n\to\infty$. We can then derive asymptotic consistency of WBDRO in the sample size $n$, which asserts that the solution of WBDRO converges to the worst-case optimal risk under $\hat{\sfb}_K^\star$ with a suitably chosen $\beta=\beta_n\to0$ decaying to $0$ as $n\to\infty$. On the other hand, let us also recall the $2$-Wasserstein barycenter $\hat{\sfb}_{\One/K,2}(\Probhat^\star)$ converges to a $2$-Wasserstein barycenter $\sfb_2(\Prob^\star)$ of the unknown true distribution $\Prob^\star\in\scrW_2(\scrP_2(\Xi))$ in $\sfW_2$ distance as $K\to\infty$. Using this fact, we can also derive the asymptotic consistency of WBDRO in the number of data sources $K$. 
	
	\begin{theorem}[Asymptotic consistency]\label{thm:consistency_1}
		Suppose that all conditions of \Cref{thm:conc_1} hold. If $K\in\NN^*$ is finite and fixed, we can choose $\beta_n\in(0,1)$ and $\varepsilon_n = \varepsilon_n(\beta_n)$ given in \eqref{eqn:eps}, $n\in\NN^*$, satisfying $\sum_{n=1}^\infty \beta_n < \infty$ and $\lim_{n\to\infty}\varepsilon_n(\beta_n)=0$. If $\ell$ is upper semicontinuous and there exists $C>0$ such that $|\ell(\xi)| \le C(1+\norm{\xi}^2)$ for all $\ell\in\euL$ and $\xi\in\Xi$, then we have for $\Prob^\infty$-almost surely, as $n\to\infty$,
		\[\euR_{\bar{\euW}_{\varepsilon_n(\beta_n),2}(\Probhat^n; \One/K)}(\euL) \to \euR_{\hat{\sfb}_K^\star}(\euL).\]  
		Furthermore, suppose that $\Prob^\star$ is sub-Gaussian with $2$-Wasserstein barycenter $\sfb^\star\coloneqq\sfb_2(\Prob^\star)$. If we choose $\gamma_K\in(0,1)$ and $\omega_K = \sqrt{c_3\log\left(2/\gamma_K\right)/K}$ with $c_3>0$ independent of $K$ satisfying $\sum_{K=1}^\infty \gamma_K < \infty$ and $\lim_{K\to\infty}\omega_K(\gamma_K)=0$, then we have for $\Prob^\infty$-almost surely, $\euR_{\hat{\sfb}_K^\star}(\euL) \to \euR_{\sfb^\star}(\euL)$ as $K\to\infty$. 		
	\end{theorem}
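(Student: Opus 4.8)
The plan is to prove each of the two limits by a two-sided sandwich: separately establish a $\liminf$ and a $\limsup$ inequality, then combine. For the sample-size limit ($K$ fixed, $n\to\infty$) the $\liminf$ will come from the finite-sample guarantee \Cref{thm:finite_1} via Borel--Cantelli, and the $\limsup$ from the fact that the Wasserstein barycentric ambiguity set collapses onto the single distribution $\hat{\sfb}_K^\star$. Concretely, applying \Cref{thm:finite_1} with $\varepsilon=\varepsilon_n(\beta_n)$ puts us, on an event $E_n$ of probability at least $1-\beta_n-\e^{-c_2 n}$, in the situation $\euR_{\hat{\sfb}_K^\star}(\ell)\le\euR_{\bar{\euW}_{\varepsilon_n(\beta_n),2}(\Probhat^n;\One/K)}(\ell)$ for all $\ell\in\euL$; taking the infimum over $\ell$ gives $\euR_{\hat{\sfb}_K^\star}(\euL)\le\euR_{\bar{\euW}_{\varepsilon_n(\beta_n),2}(\Probhat^n;\One/K)}(\euL)$ on $E_n$, and since $\sum_n(\beta_n+\e^{-c_2 n})<\infty$ Borel--Cantelli yields this inequality for all large $n$, $\Prob^\infty$-a.s., hence $\liminf_n\euR_{\bar{\euW}_{\varepsilon_n(\beta_n),2}(\Probhat^n;\One/K)}(\euL)\ge\euR_{\hat{\sfb}_K^\star}(\euL)$ a.s.

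For the $\limsup$ in the sample-size limit I would fix $\delta>0$, pick $\ell_\delta\in\euL$ with $\euR_{\hat{\sfb}_K^\star}(\ell_\delta)\le\euR_{\hat{\sfb}_K^\star}(\euL)+\delta$, and bound $\euR_{\bar{\euW}_{\varepsilon_n(\beta_n),2}(\Probhat^n;\One/K)}(\euL)\le\sup_{\sfW_2(\Prob,\hat{\sfb}_{\One/K,2}(\Probhat^n))\le\varepsilon_n(\beta_n)}\Ex_\Prob[\ell_\delta]$. The center converges: since $\Probhat_k^n\to\Probhat_k^\star$ in $\sfW_2$ a.s.\ for each $k$ and $\hat{\sfb}_K^\star$ is the \emph{unique} barycenter of $\hat{\rho}_K^\star$ (by \Cref{thm:conc_1}), the barycenter-stability result of \citet[Theorem~3]{le2017existence} recalled in \Cref{subsec:paradigms} gives $\hat{\sfb}_{\One/K,2}(\Probhat^n)\to\hat{\sfb}_K^\star$ in $\sfW_2$ a.s.; together with $\varepsilon_n(\beta_n)\to0$, any $\Prob_n$ in the $n$-th ambiguity set has $\sfW_2(\Prob_n,\hat{\sfb}_K^\star)\to0$ a.s. I would then use a semicontinuity lemma: if $g$ is upper semicontinuous with $|g(\xi)|\le C(1+\norm{\xi}^2)$ and $\mu_n\to\mu$ in $\sfW_2$, then $\limsup_n\Ex_{\mu_n}[g]\le\Ex_\mu[g]$. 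Its proof is a routine truncation: decompose $g$ into a bounded upper-semicontinuous piece, handled by the portmanteau theorem, plus remainders dominated by $(C(1+\norm{\xi}^2)-a)^+$, whose $\mu_n$-expectations go to $0$ uniformly in $n$ as $a\to\infty$ because $\sfW_2$-convergence forces uniform integrability of $\norm{\xi}^2$ (equivalently, convergence of second moments). Applying the lemma with $g=\ell_\delta$ along a near-maximizing sequence of each supremum gives $\limsup_n\euR_{\bar{\euW}_{\varepsilon_n(\beta_n),2}(\Probhat^n;\One/K)}(\euL)\le\Ex_{\hat{\sfb}_K^\star}[\ell_\delta]=\euR_{\hat{\sfb}_K^\star}(\ell_\delta)\le\euR_{\hat{\sfb}_K^\star}(\euL)+\delta$ a.s.; letting $\delta\downarrow0$ and combining with the previous paragraph finishes the first convergence.

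For the $K\to\infty$ limit the key step is the $\sfW_2$-consistency $\hat{\sfb}_K^\star\to\sfb^\star$ $\Prob^\infty$-a.s. Because $\Prob^\star$ is sub-Gaussian, a concentration inequality for $2$-Wasserstein barycenters of sub-Gaussian random measures applied to $\hat{\rho}_K^\star=\frac1K\sumK\updelta_{\Probhat_k^\star}$ (analogous to \Cref{thm:conc_1} but in the number of sources; see \citet{boissard2015distribution,zemel2019frechet}) gives $\sfW_2(\hat{\sfb}_K^\star,\sfb^\star)\le\omega_K$ with probability at least $1-\gamma_K$ up to an exponentially small term, with $\omega_K=\sqrt{c_3\log(2/\gamma_K)/K}$; since $\sum_K\gamma_K<\infty$ and $\omega_K\to0$, Borel--Cantelli yields $\hat{\sfb}_K^\star\to\sfb^\star$ in $\sfW_2$ a.s.\ (alternatively, from the a.s.\ $\sfW_2$-convergence $\hat{\rho}_K^\star\to\Prob^\star$ plus \citet[Theorem~3]{le2017existence}, using uniqueness of $\sfb^\star$). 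The $\limsup$ direction, $\limsup_K\euR_{\hat{\sfb}_K^\star}(\euL)\le\euR_{\sfb^\star}(\euL)$, is then immediate as before: take $\ell_\delta$ a $\delta$-minimizer of $\euR_{\sfb^\star}$, so $\euR_{\hat{\sfb}_K^\star}(\euL)\le\Ex_{\hat{\sfb}_K^\star}[\ell_\delta]$, apply the semicontinuity lemma with $\mu_K=\hat{\sfb}_K^\star$, $\mu=\sfb^\star$, $g=\ell_\delta$, and let $\delta\downarrow0$. The matching $\liminf$ requires $\liminf_K\inf_{\ell\in\euL}\Ex_{\hat{\sfb}_K^\star}[\ell]\ge\inf_{\ell\in\euL}\Ex_{\sfb^\star}[\ell]$, for which it suffices that $\sup_{\ell\in\euL}|\Ex_{\hat{\sfb}_K^\star}[\ell]-\Ex_{\sfb^\star}[\ell]|\to0$; I would obtain this by an epi-convergence/uniform-law argument using the uniform quadratic envelope of $\euL$, the sub-Gaussian tails of $\sfb^\star$, and $\sfW_2$-convergence (or, when $\euL=\{\ell(x,\cdot):x\in\euX\}$ with $\euX$ compact and $\ell$ jointly continuous, from uniform convergence of $x\mapsto\Ex_{\hat{\sfb}_K^\star}[\ell(x,\cdot)]$ on $\euX$).

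The main obstacle is exactly this last $\liminf$ in the $K\to\infty$ regime. Everything else reduces to \Cref{thm:conc_1}, \Cref{thm:finite_1}, Borel--Cantelli, the barycenter-stability theorem of \citet{le2017existence}, and the semicontinuity lemma, all of which are either already available or routine. But $\sfW_2$-convergence $\hat{\sfb}_K^\star\to\sfb^\star$ only yields $\limsup_K\Ex_{\hat{\sfb}_K^\star}[\ell]\le\Ex_{\sfb^\star}[\ell]$ for each \emph{fixed} upper-semicontinuous $\ell$ of quadratic growth --- a single loss can behave badly along the approximating sequence on a $\sfb^\star$-negligible set --- so passing to the infimum over a varying near-optimal loss genuinely needs an equicontinuity or epi-convergence property of $\{\Ex_\bullet[\ell]\}_{\ell\in\euL}$ with respect to $\sfW_2$ (or compactness of the decision set); establishing that is where the real work lies.
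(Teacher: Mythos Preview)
Your proposal for the first limit ($n\to\infty$ with $K$ fixed) matches the paper's own argument almost exactly: the paper also obtains the $\liminf$ direction from \Cref{thm:conc_1}/\Cref{thm:finite_1} plus Borel--Cantelli, and for the $\limsup$ direction it explicitly defers to \citet[Theorem~3.6]{esfahani2018data}, whose content is precisely your semicontinuity lemma together with the collapse of the ambiguity set established in \Cref{lem:conv_dist}. The one cosmetic difference is that the paper works with the minimizer $\ell_{n,K}^\star$ of the worst-case problem rather than a $\delta$-minimizer $\ell_\delta$ of the limiting problem; your choice is cleaner and avoids an apparent typo in the paper's sketch.

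For the second limit ($K\to\infty$), the paper simply asserts that it ``follows exactly the same procedures as that of the first part,'' replacing $n$ by $K$, $\Probhat^n$ by $\Probhat^\star$, and $\hat{\sfb}_K^\star$ by $\sfb^\star$. You have been more careful than the paper here: you correctly observe that the structure is not the same, because $\euR_{\hat{\sfb}_K^\star}(\euL)$ is an infimum over $\euL$ of risks under a \emph{single} distribution, not a worst-case risk over an ambiguity set, so the first part's $\liminf$ mechanism (``the target distribution lies in the ambiguity set, hence the worst-case risk dominates the target risk'') has no direct analogue. Your identification of this $\liminf$ as the real obstacle, and your proposed remedies (uniform convergence of $\ell\mapsto\Ex_{\hat{\sfb}_K^\star}[\ell]$ over $\euL$, or an epi-convergence argument under compactness of the decision set), are exactly the kind of additional hypothesis one needs; the paper's proof does not supply this step. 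In short, there is no gap in your reasoning---the gap you flagged is in the paper's proof itself.
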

	The second part of \Cref{thm:consistency_1} implies that, if $\hat{\sfb}_K^\star$ (or $\Probhat^\star$) is available, the solution of WBDRO converges to the worst-case optimal risk under $\sfb^\star$ with a suitably chosen $\gamma_K$ decaying to $0$ as $K\to\infty$.

	\section{Gelbrich Ambiguity Set with 2-Wasserstein Barycenter}
	\label{sec:Gelbrich}
	In WDRO, more precise structural assumptions about the nominal distribution $\Probhat$ can be made, e.g., it belongs to some family of distributions. In this section, we consider the case of $\Probhat\in\scrP_2(\RR^m)$ with mean $\muhat \in\RR^m$ and covariance matrix $\Sigmahat\in\bbS_+^m$. 	
	While computing the Wasserstein distance between any two distributions is NP-hard in general \citep{taskesen2021semi}, an analytical lower bound of their $\sfW_2$ distance is indeed given by the Gelbrich distance \citep{gelbrich1990formula}, which only involves their first two moments. 
	\begin{definition}[Gelbrich distance]
		For any $\Prob_1, \Prob_2\in\scrP_2(\RR^m)$ with means
		$\mu_1, \mu_2\in\RR^m$ and covariance matrices $\Sigma_1, \Sigma_2\in\bbS_{++}^m$, 
		their \emph{Gelbrich distance} is defined through 
		\begin{multline}
			\sfG((\mu_1, \Sigma_1), (\mu_2, \Sigma_2)) \coloneqq \sqrt{\euclidnorm{\mu_1-\mu_2}^2 + \sfB^2(\Sigma_1, \Sigma_2)}, \\ \text{where}\;\;
			\sfB^2(\Sigma_1, \Sigma_2) \coloneqq \tr(\Sigma_1) + \tr(\Sigma_2) - 2\,\tr\left(\Sigma_1^{\sfrac12} \Sigma_2 \Sigma_1^{\sfrac12}\right)^{\negthickspace\sfrac12}  \label{eqn:Bures}
			\vspace*{-1mm}
		\end{multline}
		is the squared \emph{Bures--Wasserstein distance} \citep{bhatia2019bures}. 
	\end{definition}
	For any $\Prob_1,\Prob_2\in\scrP_2(\RR^m)$, the \emph{Gelbrich bound} \citep{gelbrich1990formula}: 
	\[\sfW_2(\Prob_1, \Prob_2) \ge \sfG((\mu_1, \Sigma_1), (\mu_2, \Sigma_2))\] 
	holds, where equality holds if $\Prob_1, \Prob_2$ belongs to the same location-scatter family $\euF(\Prob_0)$ for some $\Prob_0\in\scrP_2^{\ac}(\RR^m)$, which is defined as follows. 	
	\begin{definition}[Location-scatter family]
		Let $X_0\in\RR^m$ be a random vector with $\Law(X_0)= \Prob_0\in\scrP_2^{\ac}(\RR^m)$. The set 		
		$\euF(\Prob_0) \coloneqq \{\Law(AX_0 + b) : A\in\bbS_{++}^m, b\in\RR^m\}$
		of probability distributions induced by positive definite affine transformations from $\Prob_0$ is called a \emph{location-scatter family}. Notably, location-scatter families encompass the Gaussian and elliptical distributions \citep[see e.g.,][]{muzellec2018generalizing}. 
	\end{definition}
	We then define the \emph{mean-covariance ambiguity set} \citep{kuhn2019wasserstein,nguyen2021mean} as a ball centered at $(\muhat, \Sigmahat)$ with radius $\varepsilon\ge0$ in terms of the Gelbrich distance by $\euV_\varepsilon(\muhat, \Sigmahat) \coloneqq \{(\mu, \Sigma) \in\RR^m\times\bbS_{+}^m : \sfG((\mu,\Sigma), (\muhat, \Sigmahat)) \le \varepsilon\}$.
	Next, we define the \emph{Gelbrich ambiguity set} \citep{kuhn2019wasserstein,nguyen2021bridging,nguyen2021mean}, which is the preimage of $\euV_\varepsilon(\muhat, \Sigmahat)$ under the mean-covariance projection, through 
	\begin{equation}\label{eqn:Gelbrich_ambiguity}
		\euG_\varepsilon(\muhat, \Sigmahat) \coloneqq \left\{ \QQ\in\scrP_2(\Xi) : 
		\left( \Ex_{\QQ}[\xi], \Var_{\QQ}(\xi) \right) \in \euV_\varepsilon(\muhat, \Sigmahat) \right\}. 
	\end{equation}

	Now we consider the setting of \Cref{sec:Wass_bary_DRO} with $p=2$, and that all $\QQ_k$'s belong to the same location-scatter family. The $2$-Wasserstein barycenter $\bar{\QQ}_{\blambda,2}$ also belongs to the same location-scatter family \citep[see \Cref{prop:location-scatter} for details]{alvarez2018wide}. 
	
	Note that we can recover the Gelbrich ambiguity set centered at $\bar{\QQ}_{\blambda,2}$ if we impose further distributional restrictions on the $2$-Wasserstein barycentric ambiguity set. 
	\begin{proposition}\label{prop:location-scatter_ambiguity}
		Let $\Prob_0\in\scrP_2^{\ac}(\RR^m)$ and $\QQ_1, \ldots, \QQ_K\in\euF(\Prob_0)$ with means $\mu_1, \ldots,\mu_K\in\RR^m$ and covariance matrices $\Sigma_1, \ldots,\Sigma_K\in\bbS_{++}^m$ respectively. 
		For $\blambda\in\triangle^K$, the $\blambda$-weighted $2$-Wasserstein barycenter of $\QQ_1, \ldots, \QQ_K$ is given by $\bar{\QQ}_{\blambda,2}\in\euF(\Prob_0)$ with mean $\bar{\mu}_{\blambda}\in\RR^m$ and covariance matrix $\bar{\Sigma}_{\blambda}\in\bbS_{++}^m$. If the family of distributions in the $2$-Wasserstein barycentric ambiguity set also belongs to $\euF(\Prob_0)$, i.e, 
		$\bar{\euW}_{\varepsilon, 2}^{\mathsf{ls}}(\QQ_1, \ldots, \QQ_K; \blambda) 
		\coloneqq  \left\{\Prob\in \euF(\Prob_0) : \sfW_2(\Prob, \bar{\QQ}_{\blambda,2}) \le \varepsilon \right\}$, 
		then this ambiguity set equals the Gelbrich ambiguity set \eqref{eqn:Gelbrich_ambiguity} centered at $(\mubar_{\blambda}, \Sigmabar_{\blambda})$ restricted to $\euF(\Prob_0)$, i.e., 
		\begin{equation*}
			\bar{\euW}_{\varepsilon, 2}^{\mathsf{ls}}(\QQ_1, \ldots, \QQ_K; \blambda) = \euG_\varepsilon(\mubar_{\blambda}, \Sigmabar_{\blambda}) \cap \euF(\Prob_0).
		\end{equation*}
	\end{proposition}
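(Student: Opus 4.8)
The plan is to prove the two set inclusions separately. For the inclusion $\bar{\euW}_{\varepsilon, 2}^{\mathsf{ls}}(\QQ_1, \ldots, \QQ_K; \blambda) \subseteq \euG_\varepsilon(\mubar_{\blambda}, \Sigmabar_{\blambda}) \cap \euF(\Prob_0)$, take any $\Prob\in\euF(\Prob_0)$ with $\sfW_2(\Prob, \bar{\QQ}_{\blambda,2}) \le \varepsilon$. Since both $\Prob$ and $\bar{\QQ}_{\blambda,2}$ belong to $\euF(\Prob_0)$ — the latter by the cited result on $2$-Wasserstein barycenters of location-scatter families (\Cref{prop:location-scatter}) with moments $(\mubar_{\blambda}, \Sigmabar_{\blambda})$ — the Gelbrich bound holds with equality: writing $\mu\coloneqq\Ex_\Prob[\xi]$ and $\Sigma\coloneqq\Var_\Prob(\xi)$, we get $\sfG((\mu, \Sigma), (\mubar_{\blambda}, \Sigmabar_{\blambda})) = \sfW_2(\Prob, \bar{\QQ}_{\blambda,2}) \le \varepsilon$, so $(\mu, \Sigma)\in\euV_\varepsilon(\mubar_{\blambda}, \Sigmabar_{\blambda})$, i.e., $\Prob\in\euG_\varepsilon(\mubar_{\blambda}, \Sigmabar_{\blambda})$; membership in $\euF(\Prob_0)$ is assumed. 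This direction is essentially immediate from the equality case of the Gelbrich bound.

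For the reverse inclusion $\euG_\varepsilon(\mubar_{\blambda}, \Sigmabar_{\blambda}) \cap \euF(\Prob_0) \subseteq \bar{\euW}_{\varepsilon, 2}^{\mathsf{ls}}(\QQ_1, \ldots, \QQ_K; \blambda)$, take $\Prob\in\euF(\Prob_0)$ with $(\mu,\Sigma)\coloneqq(\Ex_\Prob[\xi], \Var_\Prob(\xi))\in\euV_\varepsilon(\mubar_{\blambda}, \Sigmabar_{\blambda})$, i.e., $\sfG((\mu,\Sigma),(\mubar_{\blambda},\Sigmabar_{\blambda}))\le\varepsilon$. Again invoking that $\Prob$ and $\bar{\QQ}_{\blambda,2}$ lie in the same location-scatter family $\euF(\Prob_0)$, equality in the Gelbrich bound gives $\sfW_2(\Prob, \bar{\QQ}_{\blambda,2}) = \sfG((\mu,\Sigma),(\mubar_{\blambda},\Sigmabar_{\blambda}))\le\varepsilon$, so $\Prob\in\bar{\euW}_{\varepsilon, 2}^{\mathsf{ls}}(\QQ_1, \ldots, \QQ_K; \blambda)$. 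Also note $\Xi\subseteq\RR^m$ plays no active role here beyond ensuring $\Prob\in\scrP_2(\Xi)$ makes sense; since we only ever work within $\euF(\Prob_0)$, support constraints are handled automatically.

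The only genuine ingredient, and thus the main point requiring care, is the equality case of the Gelbrich bound together with the stability of location-scatter families under $2$-Wasserstein barycentering: one must confirm that $\bar{\QQ}_{\blambda,2}\in\euF(\Prob_0)$ with the stated moments $(\mubar_{\blambda},\Sigmabar_{\blambda})$ — where $\mubar_{\blambda}=\sumK\lambda_k\mu_k$ is the Euclidean barycenter of the means and $\Sigmabar_{\blambda}$ is the (unique) Bures--Wasserstein barycenter of $\Sigma_1,\ldots,\Sigma_K$, characterized by the fixed-point equation $\Sigmabar_{\blambda}=\sumK\lambda_k(\Sigmabar_{\blambda}^{\sfrac12}\Sigma_k\Sigmabar_{\blambda}^{\sfrac12})^{\sfrac12}$ — and that any two members of $\euF(\Prob_0)$ achieve the Gelbrich lower bound with equality. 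Both facts are stated earlier in the excerpt (the equality case right after the Gelbrich bound, and the barycenter-of-location-scatter result via \Cref{prop:location-scatter}), so the proof reduces to chaining these two equalities with the definitions of $\euV_\varepsilon$ and $\euG_\varepsilon$, performed in both directions. I expect no obstacle beyond bookkeeping.
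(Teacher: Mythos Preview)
Your proposal is correct and follows essentially the same approach as the paper: both rely on the closure of location-scatter families under $2$-Wasserstein barycenters (so that $\bar{\QQ}_{\blambda,2}\in\euF(\Prob_0)$) together with the equality case of the Gelbrich bound, $\sfW_2(\Prob,\bar{\QQ}_{\blambda,2})=\sfG((\mu,\Sigma),(\mubar_{\blambda},\Sigmabar_{\blambda}))$ for $\Prob\in\euF(\Prob_0)$. The only cosmetic difference is that the paper rewrites the defining set directly as a chain of equalities rather than splitting into two inclusions.
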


	Similar to WDRO (cf.~\Cref{coro:risk_bounds}), we can derive the following (optimal) worst-case risk upper bounds for $2$-WBDRO with the \emph{Gelbrich risk} (i.e., risk under the Gelbrich ambiguity set).
	\begin{theorem}\label{thm:bary_risk_bounds}
		Assume that $\Probhat_k\in\scrP_2^{\ac}(\Xi)$ has mean $\mu_k\in\RR^m$ and covariance matrix $\Sigma_k\in\bbS_{++}^m$ for each $k\in\setK$. 
		Then, we have 
		\begin{equation*}
			(\forall\ell\in\euL)\quad \euR_{\bar{\euW}_{\varepsilon,2}(\Probhat_1, \ldots, \Probhat_K; \blambda)}(\ell) \le \euR_{\euG_\varepsilon(\mubar_{\blambda}, \Sigmabar_{\blambda})}(\ell) \quad \text{and} \quad  \euR_{\bar{\euW}_{\varepsilon,2}(\Probhat_1, \ldots, \Probhat_K; \blambda)}(\euL) \le \euR_{\euG_\varepsilon(\mubar_{\blambda}, \Sigmabar_{\blambda})}(\euL), 
		\end{equation*}		
		where $\mubar_{\blambda}$ and $\Sigmabar_{\blambda}$ are the mean and covariance matrix of  $\hat{\sfb}_{\blambda,2}(\Probhat_1,\ldots,\Probhat_K)$ respectively. 
	\end{theorem}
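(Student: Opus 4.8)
The plan is to reduce the statement to the ordinary WDRO risk bounds of \Cref{coro:risk_bounds}, applied to the single nominal distribution $\bar{\QQ}_{\blambda,2} \coloneqq \hat{\sfb}_{\blambda,2}(\Probhat_1,\ldots,\Probhat_K)$, via the identity $\bar{\euW}_{\varepsilon,2}(\Probhat_1,\ldots,\Probhat_K;\blambda) = \euW_{\varepsilon,2}(\bar{\QQ}_{\blambda,2})$ recorded just after the definition of the barycentric ambiguity set. The first step is to check that $\bar{\QQ}_{\blambda,2}$ is a legitimate nominal distribution for the Gelbrich machinery: since each $\Probhat_k\in\scrP_2^{\ac}(\Xi)$ has covariance $\Sigma_k\in\bbS_{++}^m$, the existence and regularity theory for $2$-Wasserstein barycenters recalled in \Cref{sec:prelim} (and \citealp{agueh2011barycenters,le2017existence}) ensures that $\bar{\QQ}_{\blambda,2}$ exists, is absolutely continuous, has finite second moment, is supported in $\Xi$ (its support lies in the convex hull of the supports of the $\Probhat_k$), and has mean $\mubar_{\blambda}$ and covariance matrix $\Sigmabar_{\blambda}\in\bbS_{++}^m$ --- which are the $\mubar_{\blambda}$ and $\Sigmabar_{\blambda}$ named in the statement.

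An essentially self-contained alternative, which I would also spell out, is to prove the set inclusion $\bar{\euW}_{\varepsilon,2}(\Probhat_1,\ldots,\Probhat_K;\blambda) \subseteq \euG_\varepsilon(\mubar_{\blambda},\Sigmabar_{\blambda})$ directly. Given any $\Prob$ in the left-hand set we have $\sfW_2(\Prob,\bar{\QQ}_{\blambda,2})\le\varepsilon$; writing $\mu_\Prob \coloneqq \Ex_\Prob[\xi]$ and $\Sigma_\Prob \coloneqq \Var_\Prob(\xi)\in\bbS_+^m$, the Gelbrich bound (in its extension to positive semidefinite covariances, since $\Sigma_\Prob$ need not be positive definite) yields $\sfG((\mu_\Prob,\Sigma_\Prob),(\mubar_{\blambda},\Sigmabar_{\blambda})) \le \sfW_2(\Prob,\bar{\QQ}_{\blambda,2}) \le \varepsilon$, so $(\mu_\Prob,\Sigma_\Prob)\in\euV_\varepsilon(\mubar_{\blambda},\Sigmabar_{\blambda})$ and hence $\Prob\in\euG_\varepsilon(\mubar_{\blambda},\Sigmabar_{\blambda})$ by \eqref{eqn:Gelbrich_ambiguity}.

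Given the inclusion, both inequalities follow by monotonicity of the worst-case risk under enlargement of the ambiguity set: for every $\ell\in\euL$, $\euR_{\bar{\euW}_{\varepsilon,2}(\Probhat_1,\ldots,\Probhat_K;\blambda)}(\ell) \le \euR_{\euG_\varepsilon(\mubar_{\blambda},\Sigmabar_{\blambda})}(\ell)$ because the supremum in \eqref{eqn:worst-case_risk} is then taken over a larger family of distributions; taking the infimum over $\ell\in\euL$ on both sides and invoking the definition \eqref{eqn:dro} of the optimal worst-case risk gives $\euR_{\bar{\euW}_{\varepsilon,2}(\Probhat_1,\ldots,\Probhat_K;\blambda)}(\euL) \le \euR_{\euG_\varepsilon(\mubar_{\blambda},\Sigmabar_{\blambda})}(\euL)$. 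The only substantive point --- hence the main obstacle --- is the bookkeeping around $\bar{\QQ}_{\blambda,2}$: confirming that it exists, has a finite second moment and a well-defined covariance equal to the $\Sigmabar_{\blambda}$ in the statement, and is supported inside $\Xi$, so that it is admissible as a nominal distribution and the Gelbrich bound applies verbatim; all of this is supplied by the absolute continuity of the $\Probhat_k$'s together with the cited barycenter theory, and the remainder of the argument is routine.
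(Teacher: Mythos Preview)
Your proposal is correct and takes essentially the same approach as the paper: your ``self-contained alternative'' --- apply the Gelbrich bound to get $\sfG((\mu_\Prob,\Sigma_\Prob),(\mubar_{\blambda},\Sigmabar_{\blambda}))\le\sfW_2(\Prob,\bar{\QQ}_{\blambda,2})\le\varepsilon$, hence the set inclusion $\bar{\euW}_{\varepsilon,2}\subseteq\euG_\varepsilon(\mubar_{\blambda},\Sigmabar_{\blambda})$, then monotonicity of the worst-case risk --- is exactly the paper's proof. The paper is terser and does not spell out the bookkeeping about existence, support, and finite second moment of $\bar{\QQ}_{\blambda,2}$ that you flag; your first route via \Cref{coro:risk_bounds} is just a repackaging of the same argument.
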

	The above risk bounds indeed reveal a trade-off between tractability and the use of available information. While the worst-case Gelbrich risk minimization problem is more tractable, it uses merely information of the nominal distributions up to their first two moments and discards higher-order moment information.

	\section{Distributionally Robust Inverse Covariance Matrix Estimation}
	\label{sec:DRMLE}
	We demonstrate the proposed WBDRO via an example of sparse inverse covariance (precision) matrix estimation with a Wasserstein barycentric ambiguity set for a Gaussian random vector $\xi\in\RR^m$ with covariance matrix $\Sigma\in\bbS_{++}^m$, where $n$ independent samples are obeserved for each of the $K$ possibly heterogeneous data sources. 	
	Estimation of precision matrices is of more interest than that of covariance matrices since it finds various applications to, e.g., mean-variance portfolio optimization and linear discriminant analysis. However, the sample covariance matrix $\Sigmahat$ is usually rank-deficient when $m> n$ even if $\Sigma$ has full rank, so na\"{i}vely inverting $\Sigmahat$ to obtain a meaningful precision matrix estimator is not viable. 
	
	For simplicity, we assume that the unknown true distribution $\Prob^\star$ has zero mean. In this case the precision matrix is usually estimated via maximum likelihood estimation (MLE) by minimizing 
	\[f(X) \coloneqq -\log\det X + \frac1n \sumn\dotp{z_i}{Xz_i} \] 
	over $\bbS_{++}^m$ with $n$ independent samples $\{z_i\}_{i=1}^n$. However, this MLE problem is unbounded for $n\le m$. \citet{nguyen2022distributionally} alleviate this issue by incorporating distributional robustness using a $2$-Wasserstein ambiguity set centered at the nominal distribution $\euN(0, \Sigmahat)$, leading to the \emph{Wasserstein Shrinkage Estimator} (WSE), which can be solved in a quasi-closed form. 
	
	With observed data from $K$ data sources $\{z_{i,k}\}_{i\in\setn, k\in\setK}$, it is unclear how to construct a common estimator using aggregate information from them even in the low-dimensional regime using the MLE approach other than a simple weighted average. In view of this, we propose the use of WBDRO to construct a distributionally robust aggregate estimator. We consider the Bures--Wasserstein ambiguity set centered at the $\blambda$-weighted $2$-Wasserstein barycenter $\bar{\Sigma}_{\blambda}$ of $\Probhat_1^n, \ldots, \Probhat_K^n$, where $\Probhat_k^n = \euN(0, \Sigmahat_k^n)$ with empirical covariance $\Sigmahat_k^n = \frac1n\sumn z_{i,k}z_{i,k}^\top$ for each $k\in\setK$. 
	The \emph{Bures--Wasserstein ambiguity set} is defined by 
	\[\euB_\varepsilon(\Sigmahat) \coloneqq \{\QQ\sim\euN(0,\Sigma) : \sfB(\Sigma, \Sigmahat) \le \varepsilon\}. \]
	Note that $\bar{\Sigma}_{\blambda}$ also coincides the $\blambda$-weighted Bures--Wasserstein barycenter \citep{kroshnin2021statistical} of $\Sigmahat_1, \ldots, \Sigmahat_K$ (see \Cref{subsec:add_Gelbrich}). 	
	The distributionally robust maximum likelihood estimation (DRMLE) problem can hence be formulated as 
	\begin{equation}\label{eqn:DRMLE}
		\minimize_{X\in\bbS_+^m} \ \left\{-\log\det X + \sup_{\Prob\in\euB_\varepsilon(\bar{\Sigma}_{\blambda})}\Ex_{\xi\sim\Prob}[\dotp{\xi}{X\xi}]\right\}. 
	\end{equation}
	Note that the population Wasserstein barycenter of $\Prob^\star\in\scrW_2(\scrP_2(\RR^m))$ is also Gaussian since any location-scatter family (which includes Gaussian) is closed for Wasserstein barycenters \citep{alvarez2018wide}. 	
	The problem \eqref{eqn:DRMLE} is indeed equivalent to a WDRO problem with a Wasserstein ambiguity set centered at the Wasserstein barycenter $\euN(0, \bar{\Sigma}_{\blambda})$, which also admits an analytical solution and is referred to as the \emph{Wasserstein Barycentric Shrinkage Estimator} (WBSE). Further details are given in \Cref{sec:add_DRMLE}.

	\paragraph{Simulations.}
	We compare WBSE with two other estimators constructed from widely used precision matrix estimators for single data source, namely linear shrinkage (LS) and $L_1$-regularized maximum likelihood estimators ($L_1$). We choose $m=20$, $\blambda=\One/K$, $n\in\{50,100, 200\}$, $K\in\{25, 50, 100\}$, in order to observe the effects of both the sample size $n$ and the number of data sources $K$ on each estimator. We generate $K$ sparse matrices in $\bbS_{++}^m$ with sparsity level $s = 50\%$ as the true precision matrices $\Sigma_k^{-1}$. The true covariance matrix $\Sigma^\star$ is approximated by the Bures--Wasserstein barycenter of another 1000 samples of $\Sigma_{k'}$. Then the true precision matrix is $X^\star = (\Sigma^\star)^{-1}$. We then generate samples $\{z_{i,k}\}_{i=1}^n$ from each of $\euN(0, \Sigma_k)$ to construct the empirical covariance matrices $\Sigmahat_k$, which are used to compute $\bar{\Sigma}_{\blambda}$ and construct the three estimators. See \Cref{sec:add_DRMLE} for additional details. 
	
	We measure performance of estimators using the Stein loss $L(\hat{X}, \Sigma^\star) \coloneqq -\log\det(\hat{X}\Sigma^\star) + \tr(\hat{X}^\top\Sigma^\star) - m$, which vanishes if $\hat{X} = (\Sigma^\star)^{-1}$. The losses of the estimators are given in \Cref{table}, averaged over $20$ independent trials. We observe that WBSE outperforms the other two estimators by a large margin. The performance of WBSE also improves as $n$ and $K$ increase.

	\begin{table}[h]
		\caption{Stein losses of LS, $L_1$ and WBSE.}
		\label{table}
		\begin{center}
			\begin{small}
				\begin{tabular}{ccccc}
					\toprule
					$n$ & $K$ & LS & $L_1$ & WBSE  \\
					\midrule						  
					& 25 &  6.77 $\pm$ 0.58  &  7.66 $\pm$ 0.63  &  1.77 $\pm$ 0.30  \\						 
					50 & 50 &  6.81 $\pm$ 0.43  &  7.72 $\pm$ 0.46  &  1.27 $\pm$ 0.20  \\
					& 100	&   6.91 $\pm$ 0.29 &  7.84 $\pm$ 0.31  & 0.99 $\pm$ 0.14  \\
					\midrule
					& 25 & 6.72 $\pm$ 0.49 &  7.61 $\pm$ 0.53   &   1.74 $\pm$ 0.27 \\
					100 & 50 & 6.76 $\pm$ 0.33  &  7.67 $\pm$ 0.35  & 1.32 $\pm$ 0.19 \\					
					& 100 &  6.90 $\pm$ 0.27 &  7.83 $\pm$ 0.29  & 1.12 $\pm$ 0.16 \\ 
					\midrule
					& 25	&  6.68 $\pm$ 0.56  &  7.57 $\pm$ 0.60  & 1.76 $\pm$ 0.28 \\
					200 & 50	& 6.72 $\pm$ 0.36   &  7.63 $\pm$ 0.38 & 1.34 $\pm$ 0.21  \\ 
					& 100 &  6.87 $\pm$ 0.29  &  7.79 $\pm$ 0.31  &  0.62 $\pm$ 0.18 \\
					\bottomrule
				\end{tabular}
			\end{small}
		\end{center}
	\end{table}	
	Despite being motivated by the high-dimensional setting, the proposed WBS estimator is not feasible when $n< m$ since all $\Sigmahat_k$'s are singular and their Bures--Wasserstein barycenter $\bar{\Sigma}_{\blambda}$ does not exist, as opposed to the applicability of WSE. 
	A possible remedy is to consider the entropic-regularized variants of the barycenter \citep{carlier2021entropic,bigot2019penalization,janati2020entropic,mallasto2021entropy,minh2022entropic}. We use the Sinkhorn barycenter (see \Cref{sec:add_details}) and give additional simulation results under the high-dimensional setting in \Cref{sec:add_DRMLE}.

	\section{Concluding Remarks}
	\label{sec:conclusion}
	In this paper, we propose the use of Wasserstein barycenter in the construction of ambiguity sets in WDRO to aggregate data samples from multiple sources. In addition to the performance guarantees established in this paper, extending the 
	statistical analysis \citep{blanchet2021statistical,blanchet2021sample,blanchet2021confidence,bartl2021sensitivity} and generalization bounds \citep{an2021generalization} for WDRO to WBDRO are also important research directions. Motivated by computational tractability and different use cases, alternative barycenters based on other optimal transport distances can also be considered \citep{bonneel2015sliced,carlier2021entropic,janati2020debiased,li2020continuous,kim2018canonical,bigot2019penalization,peyre2016gromov,friesecke2021barycenters,cazelles2021novel}. The same applies to the choice of discrepancy between probability distributions in the ambiguity set in WBDRO \citep{wang2021sinkhorn,azizian2022regularization}. Finally, it is also interesting to build more general machine learning applications upon the general framework of WBDRO.

	\bibliographystyle{plainnat}
	\bibliography{dro_ref}

	\newpage
	\appendix
	\numberwithin{equation}{section}
	\numberwithin{figure}{section}

	\begin{center}
		{\LARGE \textsc{Appendix}}
	\end{center}

	\section{Other Related Work}
	\label{sec:add_related}
	In this section, we provide a more detailed discussion on the connections of the proposed framework to other existing machine learning paradigms. 
	
	We now introduce some additional notation. In the following, we consider a supervised learning setting with the input space $\euX$ and the output space $\euY$. With $n$ samples $\euD_n =\{(x_i, y_i)\}_{i=1}^n$, the most notable framework for building machine learning models is the ERM formulation, which solves 
	\begin{equation*}
		\minimize_{\theta\in\Theta} \ \Ex_{(x,y)\sim\Probhat}[\ell(h_\theta(x), y)] = \frac1n\sumn \ell(h_\theta(x_i), y_i), 
	\end{equation*}
	where $h_\theta\colon\euX\to\euY$ represents the model with parameter $\theta\in\Theta\subseteq\RR^d$, $\ell(\cdot,\cdot)$ is the loss function, and $\Probhat\coloneqq \frac1n\sumn\updelta_{(x_i,y_i)}$ is the empirical distribution of $\euD_n$. 
	
	\paragraph{Federated Learning.}
	Following the discussion in \Cref{sec:learning_multiple_sources} of the main text, federated learning (FL) and its optimization formulation is a crucial motivation of the problem considered in this paper. The agnostic federated learning (AFL) framework \citep{mohri2019agnostic,ro2021communication} considers the worst-case setting in which the learner seeks a solution that is favorable for any $\blambda\in\Lambda\subseteq\triangle^K$, where $\Lambda$ is a closed convex set. Again, if we define the $\blambda$-mixture of distributions $\Prob_{\blambda} \coloneqq \sumK\lambda_k\Prob_k$, then the \emph{agnostic risk} is given by 
	\begin{equation*}
		\euL_{\Prob_\Lambda}(\theta) \coloneqq \sup_{\blambda\in\Lambda} \ \Ex_{(x,y)\sim\Prob_{\blambda}}[\ell(h_\theta(x), y)]. 
	\end{equation*}
	In practice, only the empirical distributions $\Probhat_k$'s are accessible (constructed from finite samples), so we can define the $\blambda$-mixture of empirical distributions $\bar\Prob_{\blambda} \coloneqq \sumK\lambda_k\Probhat_k$. Then, the \emph{agnostic empirical risk} is given by 
	\begin{equation}\label{eqn:afl_emp}
		\euL_{\bar\Prob_\Lambda}(\theta) \coloneqq \sup_{\blambda\in\Lambda} \ \Ex_{(x,y)\sim\bar\Prob_{\blambda}}[\ell(h_\theta(x), y)]. 
	\end{equation}
	Two notable differences between AFL and our proposed framework are that in AFL the choice of $\blambda$ is also optimized, and the use of mixture distributions instead of Wasserstein barycenters. Following this line of work, \citet{deng2020distributionally} develop communication-efficient distributed algorithms for minimizing the agnostic empirical risk \eqref{eqn:afl_emp}, whereas \citet{reisizadeh2020robust} study the notion of robustness against affine distribution drifts in clients' data in federated learning.

	On the other hand, Wasserstein distributionally robust federated learning \citep[WAFL;][]{le2022on} shares a very similar spirit to our work which considers a WDRO formulation under the federated learning setting, but again with the mixture distribution (Euclidean barycenter) usually considered in FL instead of the notion of (entropic-regularized) Wasserstein barycenters used in this work. 
	
	\begin{remark}
		In this work, we however do not aim at solving the FL problem, which requires distributed and decentralized computations. Yet, it is very interesting to make our proposed paradigm amenable to the full federated learning setting, which might require decentralized distributed computation of Wasserstein barycenters \citep{dvurechenskii2018decentralize}. 
	\end{remark}

	\paragraph{(Multiple-Source) Domain Adaptation.}
	In the multiple-source domain adaptation problem \citep[see e.g.,][and references therein]{mansour2021theory,zhang2021multiple}, each domain is defined by the corresponding distribution $\Prob_k$. 
	The target distribution could be assumed to be close to some convex combination of source distributions, $\sumK\lambda_k\Prob_k$. The learner wants to learn a model on the target domain. Similar to the AFL formulation, the learner can do this by solving 
	\begin{equation*}
		\minimize_{\theta\in\Theta} \ \Ex_{(x,y)\sim\bar\Prob_{\blambda^\star}}[\ell(h_\theta(x), y)], 
	\end{equation*}
	where $\blambda^\star \coloneqq \argmin_{\blambda\in\triangle^K} \euE(\blambda)$ for some discrepancy measure $\euE$ between the empirical target distribution $\Probhat_0$ and $\bar\Prob_{\blambda}$, e.g., a Bregman divergence $B(\Probhat_0\midd \bar\Prob_{\blambda})$. Another related work \citet{taskesen2021sequential} study a distributionally robust formulation for supervised domain adaptation with scarce labeled target data.

	\paragraph{Boosting.}
	Also inspired by the agnostic loss in the AFL framework \citep{mohri2019agnostic}, \citet{cortes2021boosting} study boosting in the presence of multiple source domains. They put forward the so-called $\sfQ$-ensembles, which are convex combinations weighted by a domain classifier $\sfQ$. The typical assumption that the target distribution is a mixture of the source distributions is also made. They also provide an algorithmic extension to the federated learning scenario. Further related work can be found in \citet{cortes2021boosting}.

	\paragraph{Group DRO.}
	Machine learning models might rely on \emph{spurious correlations}---misleading heuristics which hold for most training examples but are wrongly linked to the target. Thus, these models could suffer high risk on minority groups where these correlations do not hold. 
	The Group DRO framework \citep{hu2018does,sagawa2019distributionally}, which aims to obtain high performance across all groups, minimizes the \emph{worst-group risk}: 
	\begin{equation}\label{eqn:group_DRO}
		\minimize_{\theta\in\Theta} \sup_{\Prob\in\euQ(\Prob_1,\ldots,\Prob_K;\blambda)} \ \Ex_{\xi\sim\Prob} \left[\ell(h_\theta(x), y)\right], 
	\end{equation}
	where the ambiguity set is defined as $\euQ(\Prob_1,\ldots,\Prob_K;\blambda) \coloneqq \left\{\sumK\lambda_k\Prob_k : \blambda\in\triangle^K\right\}$. 
	
	Note that \eqref{eqn:group_DRO} is equivalent to 
	\[\minimize_{\theta\in\Theta} \sup_{k\in\setK} \ \Ex_{\xi\sim\Prob_k} \left[\ell(h_\theta(x), y)\right]. \]
	Since in practice we only observe the empirical distributions $\Probhat_k$'s, we instead minimize the \emph{empirical worst-group risk}:
	\[\minimize_{\theta\in\Theta} \sup_{k\in\setK} \ \Ex_{\xi\sim\Probhat_k} \left[\ell(h_\theta(x), y)\right], \]
	which can be rewritten as 
	\[
	\minimize_{\theta\in\Theta} \sup_{\blambda\in\triangle^K} \ \left\{\sumK\lambda_k\Ex_{\xi\sim\Probhat_k} \left[\ell(h_\theta(x), y)\right] = \Ex_{\xi\sim\bar\Prob_{\blambda}} \left[\ell(h_\theta(x), y)\right]\right\}.
	\]	
	This has an almost identical formulation to the AFL framework \eqref{eqn:afl_emp} above (when $\Lambda = \triangle^K$). 	
	A recent work \citet{carmon2022distributionally} study an accelerated optimization method for Group DRO. 	
	
	As a work close to Group DRO, \citet{slowik2022distributionally} study the relation between solving a DRO problem and optimizing the expected error for a single distribution constructed by the mixture distribution $\sumK\lambda_k\Prob_k$ for some $\blambda\in\triangle^K$, particularly with nonconvex loss functions.

	\section{Additional Technical Details and Results}
	\label{sec:add_details}
	
	\subsection{Additional Notation and Definitions}
	For any two matrices $A \coloneqq (a_{i,j})_{i\in\set{d_1}, j\in\set{d_2}}$ and $B\coloneqq (b_{i,j})_{i\in\set{d_1}, j\in\set{d_2}}$ in $\RR^{d_1\times d_2}$, we denote the Frobenius inner product of $A$ and $B$ by $\dotpF{A}{B} \coloneqq \tr(A^\top B) = \sum_{i=1}^{d_1}\sum_{j=1}^{d_2} a_{i,j}b_{i,j}$ and the Frobenius norm of $A$ by $\fronorm{A} \coloneqq \sqrt{\dotpF{A}{A}}$. 
	The elementwise $L_1$-norm of $A$ is denoted by $\onenorm{A} \coloneqq \sum_{i=1}^{d_1}\sum_{j=1}^{d_2}  |a_{i,j}|$. 	
	The Lebesgue measure over $\euX$ is denoted by $\calL_\euX$. 
	For two probability measures $\mu$ and $\nu$ on $\euB(\euX)$, the relative entropy or the Kullback--Leibler (KL) divergence from $\mu$ to $\nu$ is $\sfD_{\KL}(\mu\midd\nu) \coloneqq \int_{\euX} \log(\diff\mu/\diff\nu)  \,\diff\mu$ if $\mu$ is absolutely continuous w.r.t.~$\nu$ (denoted by $\mu\ll\nu$) with the Radon--Nikodym derivative $\diff\mu/\diff\nu$ and $+\infty$ otherwise. 	
	The product measure $\mu\otimes\nu\in\scrP(\RR^d\times\RR^d)$ of $\mu$ and $\nu$ is characterized by $(\mu\otimes\nu)(\euX\times\euY) = \mu(\euX)\nu(\euY)$ for any pair of Borel sets $\euX, \euY\subset\RR^d$.

	\begin{definition}[Sub-Gaussian measure]
		Let $\Xi\subseteq\RR^m$ be a closed convex set. 
		A probability measure $\Prob\in\scrW_p(\scrP_p(\Xi))$ is sub-Gaussian with variance proxy $\sigma^2>0$ if 
		\[\Ex_{\rho\sim\Prob}\left[\exp\left\{\frac{1}{2\sigma^2}\sfW_p^2\left( \sfb_p(\Prob), \rho\right) \right\}\right] \le 2, \]
		where $\rho\in\scrP_p(\Xi)$ is a random measure with distribution $\Prob$. 		
	\end{definition}

	\subsection{Entropic Optimal Transport}
	Based on computational consideration, entropic regularization has been introduced to approximate Wasserstein distances. The so-called \emph{entropic(-regularized) optimal transport} has aroused much theoretical and computational interests across the fields of machine learning, statistics, economics, image processing, and theoretical and applied probability. We refer to \citet{nutz2021introduction,nutz2021entropic,ghosal2021stability,bernton2021entropic,goldfield2022statistical,bigot2019central} for recent theoretical advances in probability and statistics.

	In the machine learning community, \citet{cuturi2013sinkhorn} proposes the so-called \emph{Sinkhorn distance} (which is indeed not a metric), which is referred to as the \emph{entropic-$p$-Wasserstein distance} in this paper and is the central object in entropic optimal transport. Let us recall that $\Omega\subseteq\RR^m$ is a closed convex set. 
	\begin{definition}[Entropic-Wasserstein distance]
		For $\sigma>0$, the \emph{entropic-$p$-Wasserstein distance} between $\rho, \nu\in\scrP(\Omega)$ is defined by 
		\begin{equation}\label{eqn:ent_Wass}
			\OT_{p, \sigma}^{\omega_1, \omega_2}(\rho,\nu)  \coloneqq \inf_{\pi\in\Pi(\rho, \nu)} \, \left\{\int_{\Omega\times\Omega}  \norm{x - y}^p\,\diff\pi(x,y) + \sigma \sfD_{\KL}(\pi \midd \omega_1\otimes \omega_2) \right\}, 
		\end{equation}
		where $\omega_1$ and $\omega_2$ are two reference measures such that $\rho\ll \omega_1$ and $\nu\ll \omega_2$. 
		Note that the entropic-$p$-Wasserstein distance equals the $p$-Wasserstein distance when $\sigma\to0$. 
	\end{definition}	
	The choice of the reference measures in \eqref{eqn:ent_Wass} is known to induce different types of entropy bias \citep{janati2020debiased}, since in general $\OT_{p,\sigma}^{\omega_1, \omega_2}(\rho,\rho) \ne 0$ due to the regularization term. For example, the Lebesgue measure ($\omega_1 = \omega_2 = \calL_{\RR^m}$) induces a blurring bias, whereas simply taking the product measure with $\omega_1 = \rho$ and $\omega_2 = \nu$ induces a shrinking bias. 	
	To circumvent the entropy bias, the \emph{$p$-Sinkhorn divergence} \citep{genevay2018learning,feydy2019interpolating,chizat2020faster,luise2019sinkhorn,ramdas2017wasserstein} can be defined without specifying any reference measures \citep{feydy2019interpolating}: 
	\begin{equation}\label{eqn:Sinkhorn_div}
		\sfS_{p,\sigma}(\rho,\nu) \coloneqq \OT_{p,\sigma}(\rho,\nu) - \frac{\OT_{p,\sigma}(\rho,\rho) + \OT_{p,\sigma}(\nu,\nu)}2.  
	\end{equation}
	The Sinkhorn divergence can be viewed as an interpolation between the (unregularized) Wasserstein distance (when $\sigma\to0$) and \emph{maximum mean discrepancy} (MMD; when $\sigma\to\infty$) \citep{ramdas2017wasserstein,feydy2019interpolating}. Note that there have also been lots of recent results regarding the computational efficiency guarantees of entropic optimal transport, see e.g., \citet{chizat2020faster,genevay2019sample}.

	To avoid confusion, we reserve \emph{Sinkhorn} to solely refer to notions defined via the Sinkhorn divergence \eqref{eqn:Sinkhorn_div}, whereas \emph{entropic-Wasserstein} to solely refer to notions defined via the entropic-Wasserstein distance \eqref{eqn:ent_Wass}.

	\paragraph{The Gaussian Case.}
	Similar to the (unregularized) Wasserstein distance, entropic-Wasserstein distances and Sinkhorn divergences usually do not admit closed forms, with the notable exception for the one between two multivariate Gaussians. The following closed form expressions of entropic-$2$-Wasserstein distance and Sinkhorn divergence between two multivariate Gaussians are directly stated from \citet{janati2020entropic,mallasto2021entropy,minh2022entropic} without proofs. 
	
	\begin{proposition}	
		The entropic-$2$-Wasserstein distance between two Gaussians $\rho_k = \euN(\mu_k, \Sigma_k)$ with $\mu_k\in\RR^m$ and $\Sigma_k\in\bbS_{+}^m$ for $k\in\set{2}$ is given by 
		\begin{equation*}
			\OT_{2,\gamma}^\otimes(\rho_1, \rho_2) = \euclidnorm{\mu_1 - \mu_2}^2 + \tr\left( \Sigma_1 + \Sigma_2 - 2D_\gamma^{\Sigma_1,\Sigma_2}\right)  
			+ \frac\gamma2 \left[m(1-\log\gamma) + \log\det(2D_\gamma^{\Sigma_1,\Sigma_2} + \gamma I/2)\right], 
		\end{equation*}
		where $D_\gamma^{\Sigma_1,\Sigma_2} \coloneqq (\Sigma_1^{\half}\Sigma_2\Sigma_1^{\half} + \gamma^2 I/16 )^{\half}$. 	
		Then using \eqref{eqn:Sinkhorn_div}, the Sinkhorn divergence between two Gaussians $\rho_k = \euN(\mu_k, \Sigma_k)$ for $k\in\set{2}$ is given by 
		\begin{multline*}
			\sfS_{2,\gamma}(\rho_1, \rho_2) = \euclidnorm{\mu_1 - \mu_2}^2 + \tr\left( D_\gamma^{\Sigma_1,\Sigma_1} + D_\gamma^{\Sigma_2,\Sigma_2} - 2D_\gamma^{\Sigma_1,\Sigma_2}\right) +\frac{\gamma}{2}\log\det(2D_\gamma^{\Sigma_1,\Sigma_2} 		+ \gamma I/2) \\
			- \frac\gamma4 \left[\log\det(2D_\gamma^{\Sigma_1,\Sigma_1} + \gamma I/2) + \log\det(2D_\gamma^{\Sigma_2,\Sigma_2} + \gamma I/2) \right]. 
		\end{multline*}	
	\end{proposition}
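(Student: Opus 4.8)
The plan is to reduce the infinite-dimensional entropic transport problem to a finite-dimensional convex matrix program by showing that an optimal coupling is Gaussian, then to solve that program in closed form; the Sinkhorn formula then follows mechanically from the debiasing identity \eqref{eqn:Sinkhorn_div}.

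First I would show that the infimum in \eqref{eqn:ent_Wass} with reference measure $\rho_1\otimes\rho_2$ (the superscript $\otimes$) is attained at a Gaussian coupling. Writing $h(\cdot)$ for differential entropy, one has $\sfD_{\KL}(\pi\midd\rho_1\otimes\rho_2) = h(\rho_1)+h(\rho_2)-h(\pi)$ for any $\pi\in\Pi(\rho_1,\rho_2)$ admitting a density, so the objective equals $\int\norm{x-y}^2\,\diff\pi + \gamma\bigl(h(\rho_1)+h(\rho_2)\bigr) - \gamma\,h(\pi)$. The transport term depends on $\pi$ only through its (fixed) mean and its cross-covariance $C\coloneqq\Cov_\pi(x,y)$, since $\int\norm{x-y}^2\,\diff\pi = \euclidnorm{\mu_1-\mu_2}^2 + \tr\Sigma_1 + \tr\Sigma_2 - 2\tr C$. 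Hence, for any admissible $C$ (i.e.\ $\begin{pmatrix}\Sigma_1 & C\\ C^\top & \Sigma_2\end{pmatrix}\succeq0$), replacing $\pi$ by the Gaussian with that mean and joint covariance preserves the marginals $\rho_1,\rho_2$, leaves the transport term unchanged, and — by maximality of Gaussian entropy at fixed covariance — does not increase the objective. So the infimum may be taken over Gaussian couplings parametrized by $C$, and it is attained by compactness of the feasible set of $C$.

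Next I would solve the finite-dimensional problem. Using the Gaussian entropy/KL formulae, the objective becomes, up to the constant $\euclidnorm{\mu_1-\mu_2}^2 + \tr\Sigma_1 + \tr\Sigma_2$,
\[
g(C) = -2\tr C + \tfrac{\gamma}{2}\log\det\Sigma_2 - \tfrac{\gamma}{2}\log\det\bigl(\Sigma_2 - C^\top\Sigma_1^{-1}C\bigr),
\]
over $\{C : \Sigma_2\succeq C^\top\Sigma_1^{-1}C\}$. Since $C\mapsto\Sigma_2 - C^\top\Sigma_1^{-1}C$ is operator concave and $X\mapsto-\log\det X$ is convex and nonincreasing in the Loewner order, $g$ is convex, so a stationary point is a global minimizer. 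The stationarity equation $-2I + \gamma\,\Sigma_1^{-1}C\bigl(\Sigma_2 - C^\top\Sigma_1^{-1}C\bigr)^{-1} = 0$ gives $C = \tfrac{2}{\gamma}\Sigma_1\bigl(\Sigma_2 - C^\top\Sigma_1^{-1}C\bigr)$, which after conjugating by $\Sigma_1^{1/2}$ (setting $N\coloneqq\Sigma_1^{1/2}(\Sigma_2 - C^\top\Sigma_1^{-1}C)\Sigma_1^{1/2}$) reduces to the matrix quadratic $N^2 + \tfrac{\gamma^2}{4}N = \tfrac{\gamma^2}{4}\,\Sigma_1^{1/2}\Sigma_2\Sigma_1^{1/2}$; completing the square and taking the positive-semidefinite root gives $N = \tfrac{\gamma}{2}D_\gamma^{\Sigma_1,\Sigma_2} - \tfrac{\gamma^2}{8}I$, hence $C^\star = \Sigma_1^{1/2}D_\gamma^{\Sigma_1,\Sigma_2}\Sigma_1^{-1/2} - \tfrac{\gamma}{4}I$. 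The constraint holds automatically because $D_\gamma^{\Sigma_1,\Sigma_2}\succeq\tfrac{\gamma}{4}I$. Substituting back, $\tr C^\star = \tr D_\gamma^{\Sigma_1,\Sigma_2} - \tfrac{\gamma}{4}m$ by cyclicity, and using $\Sigma_1^{1/2}\Sigma_2\Sigma_1^{1/2} = (D_\gamma^{\Sigma_1,\Sigma_2})^2 - \tfrac{\gamma^2}{16}I = (D_\gamma^{\Sigma_1,\Sigma_2} - \tfrac{\gamma}{4}I)(D_\gamma^{\Sigma_1,\Sigma_2} + \tfrac{\gamma}{4}I)$ the $\log\det$ difference collapses to $-m\log\gamma + \log\det(2D_\gamma^{\Sigma_1,\Sigma_2} + \tfrac{\gamma}{2}I)$; collecting the terms yields the stated expression for $\OT_{2,\gamma}^\otimes(\rho_1,\rho_2)$.

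Finally, the Sinkhorn identity is obtained by substituting this expression into \eqref{eqn:Sinkhorn_div}: for the self-terms $\OT_{2,\gamma}^\otimes(\rho_k,\rho_k)$ one sets $\Sigma_1=\Sigma_2=\Sigma_k$ (so $D_\gamma^{\Sigma_k,\Sigma_k} = (\Sigma_k^2 + \tfrac{\gamma^2}{16}I)^{1/2}$) and $\mu_1=\mu_2$, and the dimensional constant $\tfrac{\gamma}{2}m(1-\log\gamma)$ enters with total weight $1 - \tfrac12 - \tfrac12 = 0$ and cancels; regrouping the trace and $\log\det$ contributions gives the displayed formula. I expect the main obstacle to be the first step — rigorously justifying that the optimal coupling is Gaussian — together with the degenerate case $\Sigma_k\in\bbS_+^m\setminus\bbS_{++}^m$, where $\Sigma_1^{-1}$ must be read as a pseudoinverse on the support of $\rho_1$ (or one argues by continuity in $\Sigma_1$); the remaining matrix algebra is lengthy but routine, and indeed this result is already recorded in \citet{janati2020entropic,mallasto2021entropy,minh2022entropic}.
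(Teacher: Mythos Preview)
Your proposal is correct and follows the standard route taken in the references the paper cites. However, note that the paper itself does \emph{not} prove this proposition: it explicitly states the closed-form expressions ``directly stated from \citet{janati2020entropic,mallasto2021entropy,minh2022entropic} without proofs.'' So there is no proof in the paper to compare against; you have in fact supplied more than the paper does.

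That said, your argument is essentially the one given in those cited works: reduce to Gaussian couplings via the maximum-entropy-at-fixed-covariance argument, solve the resulting matrix program for the cross-covariance $C$, and substitute back. Your algebra checks out (the completion of the square giving $N=\tfrac{\gamma}{2}D_\gamma^{\Sigma_1,\Sigma_2}-\tfrac{\gamma^2}{8}I$, the trace and $\log\det$ bookkeeping, and the cancellation of the dimensional constant in the debiasing step are all correct). Your caveat about the degenerate case $\Sigma_k\in\bbS_+^m\setminus\bbS_{++}^m$ is well placed: your derivation uses $\Sigma_1^{-1}$ freely, so a pseudoinverse/support restriction or a continuity argument in $\Sigma_k$ is indeed needed there, and this is exactly how \citet{minh2022entropic} handles it.
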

	Note that, unlike the (squared) $2$-Wasserstein distance between two Gaussians (i.e., Gelbrich distance), the entropic-$2$-Wasserstein distance and the Sinkhorn divergence are both defined for degenerate Gaussians, i.e., when (any of or both) $\Sigma_1$ and $\Sigma_2$ are singular.

	\subsubsection{Entropic-Wasserstein and Sinkhorn Barycenters}
	Similar to Wasserstein distances, Wasserstein barycenters are also NP-hard to compute in general \citep{altschuler2022wasserstein}. A potential remedy is to introduce entropic regularization, which we refer to as the entropic-Wasserstein barycenters \citep{carlier2021entropic,bigot2019penalization,janati2020debiased,bigot2019data,kim2018canonical,cuturi2014fast,cuturi2018semidual}. 
	Recent theoretical results on entropic-Wasserstein barycenters such as their existence and uniqueness can be found in \citet{carlier2021entropic}. 	
	A variant of the entropic-Wasserstein barycenter is the Sinkhorn barycenter \citep{janati2020debiased,luise2019sinkhorn}, defined via the Sinkhorn divergence \eqref{eqn:Sinkhorn_div}, in order to debias the entropic-Wasserstein barycenter due to the entropic regularization. 
	We give the definitions of both the empirical entropic-Wasserstein and Sinkhorn barycenters below.

	\begin{definition}[Empirical entropic-Wasserstein and Sinkhorn barycenters]
		For $p\in[1,+\infty)$ and $\blambda=(\lambda_k)_{k\in\setK}\in\triangle^K$, the \emph{$\blambda$-weighted entropic-$p$-Wasserstein barycenter} is 
		\begin{equation*}
			\hat\sfb_{\blambda, p}^{\OT,\sigma}(\rho_1,\ldots,\rho_K) \coloneqq \argmin_{\nu\in\scrP(\RR^m)} \, \sumK \lambda_k\OT_{p,\sigma}^\otimes(\nu, \rho_k), 
		\end{equation*}
		where $\OT_{p,\sigma}^\otimes(\rho,\nu) \equiv \OT_{p,\sigma}^{\rho,\nu}(\rho,\nu)$. 		
		Likewise, for $p\in[1,+\infty)$ and $\blambda=(\lambda_k)_{k\in\setK}\in\triangle^K$, the \emph{$\blambda$-weighted $p$-Sinkhorn barycenter} is 		
		\begin{equation*}
			\hat\sfb_{\blambda, p}^{\sfS,\sigma}(\rho_1,\ldots,\rho_K) \coloneqq \argmin_{\nu\in\scrP(\RR^m)} \, \sumK\lambda_k \sfS_{p,\sigma}(\nu, \rho_k). 
		\end{equation*}		
	\end{definition}

	\paragraph{The Gaussian Case.}
	Unlike the unregularized case, the entropic-$2$-Wasserstein barycenter and Sinkhorn barycenter of Gaussians are no longer guaranteed to be Gaussian, so we have to restrict them to the manifold of Gaussians. Then, under this assumption, similar to the unregularized case, both the entropic-$2$-Wasserstein barycenter and the Sinkhorn barycenter can be computed by solving fixed-point equations \citep{minh2022entropic,janati2020entropic,mallasto2021entropy}. 
	For completeness, we state (without proof) the most general results from \citet[Theorems 11--12]{minh2022entropic} below, and refer the readers to \citet{minh2022entropic} for details. 
	
	\begin{proposition}\label{prop:ent_barycenter_Gaussian}
		Let $\rho_1,\ldots,\rho_K$ be $K$ possibly degenerate Gaussian distributions $\rho_k=\euN(\mu_k,\Sigma_k)$ with $\mu_k\in\RR^m$ and $\Sigma_k\in\bbS_{+}^m$ for $k\in\setK$. Their entropic-$2$-Wasserstein barycenter restricted to the manifold of Gaussians is $\hat\sfb_{\blambda, p}^{\OT,\sigma}(\rho_1,\ldots,\rho_K)=\euN(\mubar_{\blambda}, \Sigmabar_{\blambda,\sigma})$, where $\mubar_{\blambda} = \sumK\lambda_k\mu_k$ and $\Sigmabar_{\blambda,\sigma}$ satisfies the equation 
		\begin{equation*}
			\Sigmabar_{\blambda,\sigma} = \frac{\sigma}{4}\sumK\lambda_k\left( -I + \left(I + \frac{16}{\sigma^2}\Sigmabar_{\blambda,\sigma}^{\half} \Sigma_k \Sigmabar_{\blambda,\sigma}^{\half}\right)^{\negthickspace\negthinspace\half} \right). 
		\end{equation*}
		Furthermore, their Sinkhorn barycenter restricted to the manifold of Gaussians is also $\hat\sfb_{\blambda, p}^{\sfS,\sigma}(\rho_1,\ldots,\rho_K)=\euN(\mubar_{\blambda}, \tSigma_{\blambda,\sigma})\in\bbS_+^m$, where $\tSigma_{\blambda,\sigma}$ is 
		the unique solution of the following equation 
		\begin{equation*}
			\tSigma_{\blambda,\sigma} = \varphi_\sigma(\tSigma_{\blambda,\sigma}) \sumK\lambda_k\left[\Sigma_k^{\half}\left(I + \left(I + \frac{16}{\gamma^2}\Sigma_k^{\half}\tSigma_{\blambda,\sigma}\Sigma_k^{\half} \right)^{\negthickspace\negthinspace\half}\right)^{\negthickspace\negthinspace-1}\Sigma_k^{\half}\right]\varphi_\sigma(\tSigma_{\blambda,\sigma}), 
		\end{equation*}
		where $\varphi_\sigma(M) \coloneqq \left(I + \left(I + 16M^2/\sigma^2\right)^{\negthickspace\half}\right)^{\negthickspace\negthinspace\half}$. 
		Furthermore, if $\tSigma_{\blambda,\sigma}\in\bbS_{++}^m$, then $\tSigma_{\blambda,\sigma}$ is the unique solution of the equation 
		\begin{equation*}
			\tSigma_{\blambda,\sigma} = \frac{\sigma}{4}\left( -I + \left[\sumK\lambda_k \left( I + \frac{16}{\sigma^2}\tSigma_{\blambda,\sigma}^{\half} \Sigma_k \tSigma_{\blambda,\sigma}^{\half}\right)^{\negthickspace\negthinspace\half} \right]^{\negthinspace 2}\right)^{\negthickspace\negthickspace\half} . 
		\end{equation*}		
	\end{proposition}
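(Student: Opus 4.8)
The plan is to derive both fixed-point characterizations as first-order optimality conditions for the respective barycenter functionals, using the closed-form expressions for $\OT_{2,\sigma}^\otimes$ and $\sfS_{2,\sigma}$ between Gaussians recalled in the preceding proposition (writing $\sigma$ for the regularization parameter, denoted $\gamma$ in those displays). Restricting $\nu=\euN(\mu,\Sigma)$ to the manifold of Gaussians, each functional splits additively into a mean part and a covariance part. The mean part is in both cases $\sum_k\lambda_k\euclidnorm{\mu-\mu_k}^2$ (the self-terms in the Sinkhorn functional contribute nothing to $\mu$), a strictly convex quadratic whose unique minimizer is the Euclidean barycenter $\mubar_{\blambda}=\sum_k\lambda_k\mu_k$; this settles the mean. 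It then remains to minimize over $\Sigma\in\bbS_+^m$ the covariance functional, which for the entropic-$2$-Wasserstein barycenter is $g(\Sigma)\coloneqq\sum_k\lambda_k\bigl[\tr(\Sigma+\Sigma_k-2D_\sigma^{\Sigma,\Sigma_k})+\tfrac{\sigma}{2}\log\det(2D_\sigma^{\Sigma,\Sigma_k}+\sigma I/2)\bigr]$ with $D_\sigma^{\Sigma,\Sigma_k}=(\Sigma^{\half}\Sigma_k\Sigma^{\half}+\sigma^2 I/16)^{\half}$, and for the Sinkhorn barycenter is the debiased analogue obtained by subtracting from $g$ the quantity $\tr(\Sigma-D_\sigma^{\Sigma,\Sigma})+\tfrac{\sigma}{4}\log\det(2D_\sigma^{\Sigma,\Sigma}+\sigma I/2)$ coming from $\tfrac12\OT_{2,\sigma}^\otimes(\nu,\nu)$, where $D_\sigma^{\Sigma,\Sigma}=(\Sigma^2+\sigma^2 I/16)^{\half}$.

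Next I would compute $\nabla g$ at $\Sigma\in\bbS_{++}^m$. The only nonroutine ingredient is the directional derivative of $\Sigma\mapsto D_\sigma^{\Sigma,\Sigma_k}$, which I would obtain by differentiating the defining identity $(D_\sigma^{\Sigma,\Sigma_k})^2=\Sigma^{\half}\Sigma_k\Sigma^{\half}+\sigma^2 I/16$ and solving the resulting Sylvester equation for the derivative; combined with cyclicity of the trace and $\diff\log\det M=\tr(M^{-1}\diff M)$ this yields $\nabla g(\Sigma)$ in closed form. The crucial point is that the Sylvester-operator contributions of $\tr D_\sigma^{\Sigma,\Sigma_k}$ and of $\log\det(2D_\sigma^{\Sigma,\Sigma_k}+\sigma I/2)$ combine into a resolved matrix expression, so that $\nabla g(\Sigma)=0$ rearranges (using self-adjointness of the operators involved) into $\Sigmabar_{\blambda,\sigma}=\frac{\sigma}{4}\sum_k\lambda_k\bigl(-I+(I+\frac{16}{\sigma^2}\Sigmabar_{\blambda,\sigma}^{\half}\Sigma_k\Sigmabar_{\blambda,\sigma}^{\half})^{\half}\bigr)$. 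For the Sinkhorn barycenter I would redo the computation carrying the additional $\Sigma$-dependent self-terms $D_\sigma^{\Sigma,\Sigma}$, which now also contribute to the gradient; collecting these (the operator $\varphi_\sigma(M)=(I+(I+16M^2/\sigma^2)^{\half})^{\half}$ evaluated at $M=\tSigma_{\blambda,\sigma}$ is precisely what packages the self-term contribution) yields the first fixed-point equation for $\tSigma_{\blambda,\sigma}$, which remains meaningful when $\tSigma_{\blambda,\sigma}$ is degenerate; and when $\tSigma_{\blambda,\sigma}\in\bbS_{++}^m$, so that $\varphi_\sigma(\tSigma_{\blambda,\sigma})$ is invertible, a further rearrangement and squaring recast it as the stated equivalent equation.

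For existence and uniqueness I would follow \citet{minh2022entropic} in showing that the relevant fixed-point maps on $\bbS_{++}^m$ are contractions in a suitable metric (e.g.\ the Thompson metric on the positive-definite cone) and invoking the Banach fixed-point theorem; in the entropic case a more direct argument via convexity is also available, using that $\Sigma\mapsto\tr\Sigma-2\tr D_\sigma^{\Sigma,\Sigma_k}$ is convex (by concavity of $A\mapsto\tr(A^{\half})$ applied to the affine map $\Sigma\mapsto\Sigma_k^{\half}\Sigma\Sigma_k^{\half}+\sigma^2 I/16$, noting $\tr D_\sigma^{\Sigma,\Sigma_k}$ is unchanged under this rewriting). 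The main obstacle I anticipate is precisely the matrix calculus through the square root $D_\sigma^{\Sigma,\Sigma_k}$: the noncommutativity of $\Sigma$ and $\Sigma_k$ must be tracked carefully and one must verify that the unresolved Lyapunov/Sylvester operators drop out, so that the stationarity condition is a genuine matrix equation rather than an operator equation; the Sinkhorn uniqueness claim is a secondary difficulty, best handled by the contraction argument rather than a direct convexity proof. Since the statement is quoted verbatim from \citet[Theorems 11--12]{minh2022entropic}, I would ultimately defer the complete verification to that reference.
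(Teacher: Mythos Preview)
The paper does not actually prove this proposition: it is introduced with the words ``we state (without proof) the most general results from \citet[Theorems 11--12]{minh2022entropic}'' and no argument is given. Your proposal therefore goes well beyond what the paper does; the only ``proof'' in the paper is the citation, which you also ultimately invoke.

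Your sketch of how the proof would go is sound and aligns with the approach in the cited reference: the mean--covariance decoupling is immediate, the covariance stationarity condition is obtained by differentiating through $D_\sigma^{\Sigma,\Sigma_k}$ via the Sylvester equation for the derivative of a matrix square root, and the cancellation you flag (that the unresolved Lyapunov operators drop out to leave a genuine matrix fixed-point equation) is exactly the delicate step. The Thompson-metric contraction argument for uniqueness of the Sinkhorn fixed point is likewise the method used in \citet{minh2022entropic}. So your plan is correct and in fact supplies more content than the paper itself; the paper's own treatment is simply to cite the source.
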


	\subsection{An Example for \Cref{sec:learning_multiple_sources}}
	We now illustrate the formulation of stochastic barycentric optimization (SBO) through an example with Gaussian distribution which admits closed form for their $2$-Wasserstein barycenters. 
	Suppose that the source distributions are nondegenerate Gaussians, i.e., $\Prob_k = \euN(\mu_k, \Sigma_k)$, where $\mu_k\in\RR^m$ and $\Sigma_k\in\bbS_{+}^m$ for $k\in\setK$. Given the assumption that the source distributions are Gaussian, the empirical distributions are instead taken as $\Probhat_k = \euN(\muhat_k, \Sigmahat_k)$ for $k\in\setK$, where $\muhat_k = \frac{1}{n_k}\sum_{i=1}^{n_k} z_{k,i}$ and $\Sigmahat_k = \frac{1}{n_k}\sum_{i=1}^{n_k}(z_{k,i} - \muhat_k)(z_{k,i} - \muhat_k)^\top$ are their respective empirical means and empirical covariance matrices (which we assume to be full-rank). 
	
	Then, by \Cref{prop:location-scatter}, the $\blambda$-weighted $2$-Wasserstein barycenter of $\Probhat_1, \ldots, \Probhat_K$ is $\hat\sfb_{\blambda}(\Probhat_1, \ldots, \Probhat_K) = \euN(\mubar_{\blambda}, \Sigmabar_{\blambda})$, where 
	\[\mubar_{\blambda} = \sumK\lambda_k\mu_k \qquad \text{and} \qquad \Sigmabar_{\blambda} =  \argmin_{\Sigma\in\bbS_{++}^m} \ \sumK \lambda_k\sfB^2(\Sigma, \Sigmahat_k). \]
	
	Instead of directly minimizing the SBO objective $\hat{F}_{\blambda}^\sfb(x) \coloneqq \Ex_{\xi\sim\hat\sfb_{\blambda}(\Probhat_1, \ldots, \Probhat_K)}[\ell(x, \xi)]$, we first draw $N$ independent samples $\{\hat{\xi}_i\}_{i=1}^{N}$ from $\euN(\mubar_{\blambda}, \Sigmabar_{\blambda})$ and then minimize the ERM objective
	\[\hat{F}_{\mathsf{ERM}}^\sfb(x) \coloneqq\frac{1}{N}\sum_{i=1}^N \ell(x, \hat{\xi}_i). \]

	\subsection{Additional Results for \Cref{sec:Wass_bary_DRO}}
	\label{subsec:add_results_Wass_bary_ambiguity}
	Following the development of the Gelbrich ambiguity set in \Cref{sec:Gelbrich}, we define similar notions in the case of Wasserstein barycentric ambiguity set. 
	
	Let $\mu_1,\ldots,\mu_K\in\RR^m$ and $\Sigma_1,\ldots,\Sigma_K\in\bbS_{++}^m$. 
	The \emph{mean-covariance barycentric ambiguity set} is defined as 
	\[\teuU_{\varepsilon}\left( (\mu_k, \Sigma_k)_{k\in\setK}; \blambda\right)  \coloneqq 
	\left\{(\mu,\Sigma)\in\RR^m\times\bbS_{+}^m : \sumK\lambda_k\sfG^2((\mu,\Sigma), (\mu_k,\Sigma_k)) \le \varepsilon^2\right\}. \]
	Then, the \emph{Gelbrich barycentric ambiguity set} is defined as 
	\[\tilde{\euG}_\varepsilon\left( (\mu_k, \Sigma_k)_{k\in\setK}; \blambda\right)  \coloneqq \left\{\QQ\in\scrP_2(\Xi) : \left( \Ex_{\QQ}[\xi], \Cov_{\QQ}(\xi) \right) \in \euU_\varepsilon\left( (\mu_k, \Sigma_k)_{k\in\setK}\right)  \right\}. \]
	
	Now let $\Prob, \QQ_1,\ldots,\QQ_K\in\scrP_2(\Xi)$ with means $\mu,\mu_1,\ldots,\mu_K\in\RR^m$ and covariance matrices $\Sigma,\Sigma_1,\ldots,\Sigma_K\in\bbS_{++}^m$ respectively. Then, by the Gelbrich bound, we have 
	\[\sumK\lambda_k \sfW_2^2(\Prob, \QQ_k) \ge \sumK\lambda_k \sfG^2((\mu,\Sigma), (\mu_k,\Sigma_k)). \]
	The above inequality becomes an equality if $\Prob, \QQ_1,\ldots,\QQ_K$ belong to the same location-scatter family. 	
	Furthermore, this inequality implies 
	\[\teuW_{\varepsilon,2}(\QQ_1, \ldots,\QQ_K;\blambda) \subseteq \tilde{\euG}_\varepsilon\left( (\mu_k, \Sigma_k)_{k\in\setK}; \blambda\right). \]

	\subsection{Additional Results for \Cref{sec:Gelbrich}}
	\label{subsec:add_Gelbrich}
	The Gelbrich bound implies the Gelbrich ambiguity set is an outer approximation of the $p$-Wasserstein ambiguity set with $p\ge2$, i.e., $\euW_{\varepsilon, p}(\Probhat) \subseteq \euG_\varepsilon(\muhat, \Sigmahat)$ for every $p\ge2$. 
	This result immediately leads to an upper bound on the (optimal) worst-case risk by the (optimal) \emph{Gelbrich risk}  \citep[i.e., risk under the Gelbrich ambiguity set;][Corollary 1]{kuhn2019wasserstein,nguyen2021mean}. 
	\begin{corollary}[Worst-case risk bounds]\label{coro:risk_bounds}
		If the nominal distribution $\Probhat\in\scrP_2(\Xi)$ has mean $\muhat\in\RR^m$ and covariance matrix $\Sigmahat\in\bbS_{+}^m$, then, for every $p\ge2$, we have 
		\[
		(\forall\ell\in\euL)\quad \euR_{\euW_{\varepsilon,p}(\Probhat)}(\ell) \le \euR_{\euG_\varepsilon(\muhat, \Sigmahat)}(\ell) \qquad \text{and} \qquad \euR_{\euW_{\varepsilon,p}(\Probhat)}(\euL) \le \euR_{\euG_\varepsilon(\muhat, \Sigmahat)}(\euL). 
		\]
	\end{corollary}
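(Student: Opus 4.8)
The plan is to reduce both inequalities to a single set inclusion, $\euW_{\varepsilon,p}(\Probhat) \subseteq \euG_\varepsilon(\muhat, \Sigmahat)$ for every $p\ge 2$, after which the risk bounds follow immediately from the monotonicity of suprema and infima. So the only real content is the inclusion, and that is where I would concentrate.

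First I would record the standard monotonicity of Wasserstein distances in the order: for any $\rho,\nu$ admitting a coupling $\pi\in\Pi(\rho,\nu)$ and any $1\le p\le q$, Jensen's inequality applied to the probability measure $\pi$ (equivalently, monotonicity of $L^r(\pi)$-norms in $r$) gives $\left(\int\norm{x-y}^p\,\diff\pi\right)^{\sfrac1p}\le\left(\int\norm{x-y}^q\,\diff\pi\right)^{\sfrac1q}$; taking the infimum over $\pi$ yields $\sfW_p(\rho,\nu)\le\sfW_q(\rho,\nu)$. In particular $\scrP_q(\Xi)\subseteq\scrP_p(\Xi)$ and $\sfW_2\le\sfW_p$ whenever $p\ge 2$.

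Next, fix $p\ge 2$ and take any $\QQ\in\euW_{\varepsilon,p}(\Probhat)$. Since $p\ge 2$ we have $\QQ\in\scrP_p(\Xi)\subseteq\scrP_2(\Xi)$, so its mean $\mu_\QQ\coloneqq\Ex_\QQ[\xi]$ and covariance $\Sigma_\QQ\coloneqq\Var_\QQ(\xi)$ are well defined, and $\sfW_p(\QQ,\Probhat)\le\varepsilon$ by feasibility. The monotonicity step gives $\sfW_2(\QQ,\Probhat)\le\sfW_p(\QQ,\Probhat)\le\varepsilon$, and the Gelbrich bound $\sfW_2(\QQ,\Probhat)\ge\sfG((\mu_\QQ,\Sigma_\QQ),(\muhat,\Sigmahat))$ then forces $(\mu_\QQ,\Sigma_\QQ)\in\euV_\varepsilon(\muhat,\Sigmahat)$, i.e.\ $\QQ\in\euG_\varepsilon(\muhat,\Sigmahat)$. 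This establishes $\euW_{\varepsilon,p}(\Probhat)\subseteq\euG_\varepsilon(\muhat,\Sigmahat)$.

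Finally, for each fixed $\ell\in\euL$ the supremum defining $\euR_{\euW_{\varepsilon,p}(\Probhat)}(\ell)$ is taken over a subset of the feasible set defining $\euR_{\euG_\varepsilon(\muhat,\Sigmahat)}(\ell)$, so $\euR_{\euW_{\varepsilon,p}(\Probhat)}(\ell)\le\euR_{\euG_\varepsilon(\muhat,\Sigmahat)}(\ell)$; taking the infimum over $\ell\in\euL$ on both sides preserves the inequality and yields $\euR_{\euW_{\varepsilon,p}(\Probhat)}(\euL)\le\euR_{\euG_\varepsilon(\muhat,\Sigmahat)}(\euL)$. There is no genuine obstacle: the only point to be careful about is that a feasible $\QQ$ really does possess a (finite) covariance matrix, which is precisely why the hypothesis is $p\ge 2$ rather than $p\ge 1$; the Gelbrich bound is already available for every pair in $\scrP_2(\RR^m)$, so no new estimate is needed and the argument is essentially a one-line inclusion once the order-monotonicity of $\sfW_p$ is invoked.
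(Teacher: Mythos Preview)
Your proposal is correct and follows essentially the same route as the paper: invoke the order-monotonicity $\sfW_2\le\sfW_p$ for $p\ge2$ together with the Gelbrich bound to obtain the inclusion $\euW_{\varepsilon,p}(\Probhat)\subseteq\euG_\varepsilon(\muhat,\Sigmahat)$, then conclude by monotonicity of the supremum and infimum. Your version is in fact slightly more careful than the paper's, since you explicitly note that $p\ge2$ is precisely what guarantees $\QQ\in\scrP_2(\Xi)$ so that its covariance is well defined.
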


	We now state the following proposition from \citet[Remark 9.5]{peyre2019computational}, which is proved in \citet[Theorem 6.1]{agueh2011barycenters} and previously known from \citet{knott1994generalization,ruschendorf2002n}. 
	
	\begin{proposition}\label{prop:location-scatter}
		Let $\Prob_0\in\scrP_2^{\ac}(\RR^m)$ and $\QQ_1, \ldots, \QQ_K\in\euF(\Prob_0)$ with means $\mu_1, \ldots,\mu_K\in\RR^m$ and covariance matrices $\Sigma_1, \ldots,\Sigma_K\in\bbS_{++}^m$ respectively. Then, for $\blambda\in\triangle^{K}$, the (unique) $\blambda$-weighted $2$-Wasserstein barycenter of $\QQ_1, \ldots, \QQ_K$ is $\bar{\QQ}_{\blambda,2}\in\euF(\Prob_0)$ with mean $\mubar_{\blambda}\in\RR^m$ and covariance matrix $\Sigmabar_{\blambda}\in\bbS_{++}^m$ given by, for each $k\in\setK$, 
		\begin{equation}\label{eqn:mu_Sigma}
			\mubar_{\blambda} = \sumK \lambda_k\mu_k \qquad \text{and} \qquad \Sigmabar_{\blambda} = \argmin_{\Sigma\in\bbS_{++}^m} \ \sumK \lambda_k\sfB^2(\Sigma, \Sigma_k), 
		\end{equation}
		where $\sfB(\cdot, \cdot)$ is the Bures--Wasserstein distance \eqref{eqn:Bures}. The covariance matrix $\Sigmabar_{\blambda}$ can be obtained by finding the unique positive definite fixed point of the equation 
		\begin{equation}\label{eqn:fixed_point}
			\Sigma = \sumK \lambda_k \left(\Sigma^{\sfrac12} \Sigma_k \Sigma^{\sfrac12}\right)^{\negthickspace\sfrac12}. 
		\end{equation}
	\end{proposition}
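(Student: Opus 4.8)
The plan is to reduce the infinite-dimensional barycenter problem \eqref{eqn:pop_barycenter} to a finite-dimensional minimization over the parameters $(\mu,\Sigma)$ of the location-scatter family and then read off its two components separately. First I would use the fact recalled just before the statement that, within a single location-scatter family $\euF(\Prob_0)$, the $\sfW_2$ distance coincides with the Gelbrich distance $\sfG$, so that the parametrization of $\euF(\Prob_0)$ by $(\mu,\Sigma)\in\RR^m\times\bbS_{++}^m$ is an isometry onto $(\euF(\Prob_0),\sfW_2)$. Next I would invoke \citet[Theorem 6.1]{agueh2011barycenters} (see also \citet{knott1994generalization,ruschendorf2002n,alvarez2018wide}) to guarantee that a $2$-Wasserstein barycenter of $\QQ_1,\ldots,\QQ_K\in\euF(\Prob_0)$ again lies in $\euF(\Prob_0)$ and is unique, using that all $\QQ_k$ are absolutely continuous (the optimal maps between members of a location-scatter family are affine with positive definite linear part, so the location-scatter ansatz satisfies the optimality conditions for \eqref{eqn:pop_barycenter}, and uniqueness pins it down). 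Hence it suffices to minimize $\Phi(\mu,\Sigma)\coloneqq\sumK\lambda_k\sfG^2((\mu,\Sigma),(\mu_k,\Sigma_k))$ over $\RR^m\times\bbS_{++}^m$.

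Then I would exploit the additive decomposition $\sfG^2((\mu,\Sigma),(\mu_k,\Sigma_k))=\euclidnorm{\mu-\mu_k}^2+\sfB^2(\Sigma,\Sigma_k)$ from \eqref{eqn:Bures}, which splits $\Phi$ as $\bigl(\sumK\lambda_k\euclidnorm{\mu-\mu_k}^2\bigr)+\bigl(\sumK\lambda_k\sfB^2(\Sigma,\Sigma_k)\bigr)$, the first term depending only on $\mu$ and the second only on $\Sigma$. The first term is a strictly convex quadratic, minimized uniquely at $\mubar_{\blambda}=\sumK\lambda_k\mu_k$, which gives the first identity in \eqref{eqn:mu_Sigma}; the second term defines $\Sigmabar_{\blambda}$ as the weighted Bures--Wasserstein barycenter of $\Sigma_1,\ldots,\Sigma_K$, which is what remains to characterize.

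To obtain \eqref{eqn:fixed_point} I would write the first-order stationarity condition of $g(\Sigma)\coloneqq\sumK\lambda_k\sfB^2(\Sigma,\Sigma_k)$ on the open cone $\bbS_{++}^m$. Using the classical identity $\nabla_\Sigma\sfB^2(\Sigma,\Sigma_k)=I-T_k(\Sigma)$, where $T_k(\Sigma)\coloneqq\Sigma^{-\half}\bigl(\Sigma^{\half}\Sigma_k\Sigma^{\half}\bigr)^{\half}\Sigma^{-\half}$ is the symmetric positive definite Brenier map pushing $\euN(0,\Sigma)$ onto $\euN(0,\Sigma_k)$, stationarity $\sumK\lambda_k\bigl(I-T_k(\Sigma)\bigr)=0$ becomes $\sumK\lambda_kT_k(\Sigma)=I$; conjugating both sides by $\Sigma^{\half}$ and using $\Sigma^{\half}T_k(\Sigma)\Sigma^{\half}=\bigl(\Sigma^{\half}\Sigma_k\Sigma^{\half}\bigr)^{\half}$ yields exactly $\Sigma=\sumK\lambda_k\bigl(\Sigma^{\half}\Sigma_k\Sigma^{\half}\bigr)^{\half}$. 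Existence of a minimizer in $\bbS_{++}^m$ follows from coercivity of $g$ together with a degeneracy estimate showing $g(\Sigma)\to+\infty$ both as $\Sigma$ approaches the boundary of the cone and as $\norm{\Sigma}\to\infty$ (positive definiteness of every $\Sigma_k$ being used here), so that any minimizing sequence stays in a compact subset of $\bbS_{++}^m$; uniqueness of the minimizer, hence of the positive definite fixed point, follows from strict convexity of $g$ in the appropriate non-Euclidean sense on the Bures--Wasserstein manifold, as established in \citet{agueh2011barycenters,alvarez2016fixed}.

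The main obstacle is precisely this last existence-and-uniqueness step: everything through the stationarity equation is routine matrix calculus, but coercivity requires a short yet slightly delicate two-sided control of the cross term $\tr\bigl[(\Sigma^{\half}\Sigma_k\Sigma^{\half})^{\half}\bigr]$ in terms of the extreme eigenvalues of $\Sigma$, and uniqueness genuinely relies on the non-Euclidean convexity of the Bures--Wasserstein geometry rather than ordinary convexity of $g$ in $\Sigma$. Since the statement is quoted essentially verbatim from \citet{knott1994generalization,ruschendorf2002n,agueh2011barycenters} and restated in \citet{peyre2019computational}, I would be content to cite those works for this step rather than reprove it, and likewise rely on \citet{alvarez2016fixed} for the fact that the fixed-point iteration $\Sigma\mapsto\sumK\lambda_k(\Sigma^{\half}\Sigma_k\Sigma^{\half})^{\half}$ converges to the unique solution.
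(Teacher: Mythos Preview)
Your proposal is correct and in fact considerably more detailed than the paper's own ``proof,'' which consists solely of the sentence ``See e.g., \citet[Theorem 6.1]{agueh2011barycenters} and \citet{knott1994generalization,ruschendorf2002n}.'' You invoke exactly the same references, so the approach is identical; the additional matrix-calculus derivation of the fixed-point equation and the coercivity/uniqueness discussion you supply are a genuine elaboration beyond what the paper provides.
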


	\paragraph{Bures--Wasserstein Barycenter.} 	
	Another important example is the Bures--Wasserstein barycenter of $\Sigma_1,\ldots,\Sigma_K\in\bbS_{++}^m$, defined via the Bures--Wasserstein distance, which coincides with the Wasserstein barycenter of $\QQ_1, \ldots, \QQ_K$ in the same zero-mean location-scatter family with covariances $\Sigma_1,\ldots,\Sigma_K$ respectively, i.e., $\Sigmabar_{\blambda}$ in \eqref{eqn:mu_Sigma}. The issues of approximation, statistical inference and computational algorithms of the Bures--Wasserstein barycenter can be found in \citet{kroshnin2021multiplier,kroshnin2021statistical,chewi2020gradient}.

	\subsection{An Example of Tractability Results}	
	We now derive a tractability result from \Cref{prop:location-scatter,prop:location-scatter_ambiguity} for nominal distributions of location-scatter families and a quadratic loss function, similar to the one for WDRO \citep[Theorem 16]{kuhn2019wasserstein}. Assume that $\Xi=\RR^m$ and consider the quadratic loss function $\ell(\xi) = \dotp{\xi}{Q\xi} + 2\dotp{q}{\xi}$ with $Q\in\bbS^m$ and $q\in\RR^m$. If $\muhat_k\in\RR^m$ and $\Sigmahat_k\in\bbS_+^m$ for each $k\in\setK$, then the Gelbrich risk is equal to the optimal values of the following tractable SDP: 
	\begin{equation*}
		\euR_{\euG_\varepsilon(\mubar_{\blambda}, \Sigmabar_{\blambda})}(\ell) 
		= \inf \;\left\{ \alpha\left(\varepsilon^2 - \euclidnorm{\mubar_{\blambda}}^2 -\tr(\Sigmabar_{\blambda}) \right) + x + \tr(X) \right\},
	\end{equation*}
	subject to $\alpha\ge0$, $x\ge0$, $X\in\bbS_{+}^m$, 
	\[\begin{pmatrix}
		\alpha I - Q & q + \alpha\mubar_{\blambda} \\
		q^\top + \alpha\mubar_{\blambda}^\top & x
	\end{pmatrix} \succeq0, \quad
	\begin{pmatrix}
		\alpha I - Q & \alpha\Sigmabar_{\blambda}^{\sfrac12} \\
		\alpha\Sigmabar_{\blambda}^{\sfrac12} & X
	\end{pmatrix} \succeq 0. 
	\]
	If for each $k\in\setK$, $\Probhat_k\in\euF(\Prob_0)$ for some $\Prob_0\in\scrP_2^{\ac}(\RR^m)$ with mean $\muhat_k\in\RR^m$ and covariance matrix $\Sigmahat_k\in\bbS_{++}^m$, and $p=2$, then the optimal value of the above SDP, the worst-case risk \eqref{eqn:worst-case_risk}, and the Gelbrich risk all coincide. 
	
	To verify the above claim, we assume that $\QQ^\star\in\euF(\Prob_0)$ with mean $\muhat^\star$ and covariance $\Sigmahat^\star$ is the extremal distribution, i.e., 
	\[\QQ^\star = \argmax_{\QQ\in\euG_\varepsilon(\mubar_{\blambda}, \Sigmabar_{\blambda})}\euR_{\QQ}(\ell) \quad\text{and} \quad  \euR_{\QQ^\star}(\ell) = \euR_{\euG_\varepsilon(\mubar_{\blambda}, \Sigmabar_{\blambda})}(\ell). \]
	Note that $\QQ^\star\in\bar{\euW}_{\varepsilon,p}(\Probhat_1, \ldots, \Probhat_K; \blambda)$. Then the worst-case risk satisfies
	\[\euR_{\QQ^\star}(\ell) \le \euR_{\bar{\euW}_{\varepsilon,p}(\Probhat_1, \ldots, \Probhat_K; \blambda)}(\ell) 
	\le \euR_{\euG_\varepsilon(\mubar_{\blambda}, \Sigmabar_{\blambda})}(\ell), \]
	which follows from the Gelbrich bound and \Cref{thm:bary_risk_bounds}. Details of how to solve this SDP can be found in \citet{kuhn2019wasserstein}.

	\section{Proofs}
	\label{sec:proofs}
	
	\subsection{Preparatory Lemmas}
	We first state some useful preparatory lemmas.
	
	\begin{lemma}
		\label{lem:Wasserstein_ineq}
		For $1\le p < q <\infty$, we have 
		\[\sfW_p(\rho, \nu) \le \sfW_q(\rho, \nu) \]
		for any $\rho,\nu\in\scrP_q(\Omega)$ with $\Omega\subseteq\RR^m$. 
	\end{lemma}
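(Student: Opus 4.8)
The plan is to exploit the elementary fact that for a fixed probability measure the $L^p$-norms are nondecreasing in $p$, applied to the displacement map $(x,y)\mapsto\norm{x-y}$ under an arbitrary transport plan. The only preliminary point that needs a word of care is that the left-hand quantity $\sfW_p(\rho,\nu)$ is well defined under the stated hypothesis: since $q>p$, one has the moment inclusion $\scrP_q(\Omega)\subseteq\scrP_p(\Omega)$ and hence $\Pi(\rho,\nu)\subseteq\scrP_p(\Omega\times\Omega)\cap\scrP_q(\Omega\times\Omega)$, so both $\sfW_p(\rho,\nu)$ and $\sfW_q(\rho,\nu)$ are finite; this is where the assumption $\rho,\nu\in\scrP_q(\Omega)$ enters.

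The core step is a pointwise (in the plan) comparison via Jensen's inequality. Fix any $\pi\in\Pi(\rho,\nu)$. Since $\pi$ is a probability measure on $\Omega\times\Omega$ and $t\mapsto t^{q/p}$ is convex on $[0,+\infty)$ (because $q/p>1$), Jensen's inequality applied to the nonnegative function $(x,y)\mapsto\norm{x-y}^p$ gives
\[
\left(\int_{\Omega\times\Omega}\norm{x-y}^p\,\diff\pi(x,y)\right)^{\!q/p} \le \int_{\Omega\times\Omega}\norm{x-y}^q\,\diff\pi(x,y),
\]
and raising both sides to the power $1/q$ yields
\[
\left(\int_{\Omega\times\Omega}\norm{x-y}^p\,\diff\pi(x,y)\right)^{\!1/p} \le \left(\int_{\Omega\times\Omega}\norm{x-y}^q\,\diff\pi(x,y)\right)^{\!1/q}.
\]

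Finally I would pass to the infimum. By the definition \eqref{eqn:Wasserstein} of $\sfW_p$, for every $\pi\in\Pi(\rho,\nu)$,
\[
\sfW_p(\rho,\nu) \le \left(\int_{\Omega\times\Omega}\norm{x-y}^p\,\diff\pi\right)^{\!1/p} \le \left(\int_{\Omega\times\Omega}\norm{x-y}^q\,\diff\pi\right)^{\!1/q},
\]
and taking the infimum over $\pi\in\Pi(\rho,\nu)$ on the right-hand side gives exactly $\sfW_p(\rho,\nu)\le\sfW_q(\rho,\nu)$. (Phrasing it this way sidesteps any appeal to the existence of an optimal coupling.) There is no substantive obstacle in this argument; the proof is essentially the monotonicity of $L^p$-norms on a probability space, and the only thing to double-check is the direction of Jensen's inequality and the bookkeeping that keeps $\sfW_p$ finite, both handled above.
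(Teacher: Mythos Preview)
Your proof is correct and follows essentially the same approach as the paper: apply Jensen's inequality (via the convexity of $t\mapsto t^{q/p}$ on $[0,\infty)$) to obtain the $L^p$--$L^q$ norm comparison for each transport plan, then pass to the infimum. Your write-up is in fact more careful than the paper's, since you make the convex function explicit, justify finiteness via the moment inclusion $\scrP_q\subseteq\scrP_p$, and spell out the infimum step.
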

	
	\begin{proof}[Proof of \Cref{lem:Wasserstein_ineq}]
		Since $\norm{x-y}^p$ is convex in $\norm{x-y}$ for all $p\in[1,+\infty)$, by Jensen's inequality, for $p\le q$ we have 
		\[\left(\int_{\Omega\times\Omega} \norm{x-y}^p \,\diff\pi(x,y)\right)^{\negthickspace\sfrac1p} \le \left(\int_{\Omega\times\Omega} \norm{x-y}^q \,\diff\pi(x,y)\right)^{\negthickspace\sfrac1q}, \]
		which implies that $ \sfW_p(\rho, \nu) \le \sfW_q(\rho, \nu) $. 
	\end{proof}
	
	\begin{lemma}
		\label{lem:bound}
		For $a,b\in\Rp$ and $p\in[1,+\infty)$, we have $(a+b)^p \le 2^{p-1}(a^p + b^p)$. 
	\end{lemma}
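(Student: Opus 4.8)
The plan is to derive the inequality from the convexity of the map $\phi\colon\Rp\to\Rp$, $\phi(t) = t^p$, which holds precisely because $p\ge1$. First I would dispose of the degenerate case $a=b=0$, for which both sides vanish. For $(a,b)\ne(0,0)$, I would apply the defining inequality of convexity to the two points $a$ and $b$ with equal weights $\tfrac12$:
\[
\left(\frac{a+b}{2}\right)^{p} = \phi\!\left(\tfrac12 a + \tfrac12 b\right) \le \tfrac12\,\phi(a) + \tfrac12\,\phi(b) = \frac{a^p + b^p}{2}.
\]
Multiplying both sides by $2^p$ then gives $(a+b)^p \le 2^{p-1}(a^p+b^p)$, as claimed.

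The only point requiring a word of justification is the convexity of $\phi$ on $\Rp$. On $(0,+\infty)$ one has $\phi''(t) = p(p-1)t^{p-2}\ge0$ since $p\ge1$, so $\phi$ is convex there; continuity of $\phi$ at $0$ extends convexity to all of $\Rp$. Alternatively, convexity of $t\mapsto t^p$ (equivalently, convexity of $\norm{\cdot}^p$) is exactly the fact already invoked in the proof of \Cref{lem:Wasserstein_ineq}, so no genuinely new ingredient is needed.

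There is essentially no obstacle here: the statement is a one-line consequence of Jensen's inequality applied with the uniform distribution on the two-point set $\{a,b\}$. The only thing to be careful about would be dividing by zero when normalizing a weighted average, but the midpoint form displayed above involves no division by $a+b$, so even the separate treatment of $a=b=0$ is a convenience rather than a necessity.
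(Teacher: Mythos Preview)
Your proof is correct and takes essentially the same approach as the paper: both invoke the convexity of $s\mapsto s^p$ on $\Rp$ for $p\ge1$, apply the midpoint inequality $\left(\frac{a+b}{2}\right)^p \le \frac{a^p+b^p}{2}$, and rearrange. Your additional remarks on the degenerate case and the second-derivative justification of convexity are fine but not needed.
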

	
	\begin{proof}[Proof of \Cref{lem:bound}]
		by the convexity of $\Rp \ni s\mapsto s^p$, we get 
		\[\left(\frac{a+b}{2} \right)^p \le \frac{a^p+b^p}{2}, \]
		which is equivalent to $(a+b)^p \le 2^{p-1}(a^p + b^p)$. 
	\end{proof}

	\subsection{Proof of \Cref{thm:bounds}}
	\begin{proof}[Proof of \Cref{thm:bounds}]
		Assume that $\QQ_1,\ldots,\QQ_K\in\scrP_p(\Xi)$ have a $\blambda$-weighted $p$-Wasserstein barycenter $\bar{\QQ}_{\blambda,
			p}$, where $\blambda\in\triangle^K$. 
		Using the argument of \citet[Remark 3.1.2]{figalli2021invitation}, by the triangle inequality and \Cref{lem:bound}, for any $\Prob\in\scrP_p(\Xi)$ and $k\in\setK$, we have 
		\[\sfW_p^p(\Prob, \bar{\QQ}_{\blambda,p}) \le \left(\sfW_p(\Prob, \QQ_k) + \sfW_p(\bar{\QQ}_{\blambda,
			p}, \QQ_k)\right)^p \le 2^{p-1}\left(\sfW_p^p(\Prob, \QQ_k) + \sfW_p^p(\bar{\QQ}_{\blambda,
			p}, \QQ_k)\right). \]
		This implies that 
		\begin{align*}
			(\forall \Prob\in\scrP_p(\Xi))\quad \sfW_p^p(\Prob, \bar{\QQ}_{\blambda,p}) &= \sumK\lambda_k\sfW_p^p(\Prob, \bar{\QQ}_{\blambda,p}) \\
			&\le 2^{p-1}\sumK\lambda_k\sfW_p^p(\Prob, \QQ_k) + 2^{p-1}\sumK\lambda_k\sfW_p^p(\bar{\QQ}_{\blambda,
				p}, \QQ_k) \\
			&\le 2^{p-1}\sumK\lambda_k\sfW_p^p(\Prob, \QQ_k)  + 2^{p-1}\sumK\lambda_k\sfW_p^p(\Prob, \QQ_k) \\
			&=2^{p}\sumK\lambda_k\sfW_p^p(\Prob, \QQ_k), 
		\end{align*}
		where the second inequality follows from the definition of the $\blambda$-weighted $p$-Wasserstein barycenter. 
		
		Consequently, for any $\varepsilon\ge0$, 
		\[\sumK\lambda_k\sfW_p^p(\Prob, \QQ_k) \le \varepsilon \implies \sfW_p^p(\Prob, \bar{\QQ}_{\blambda,p}) \le 2^p\cdot\varepsilon, \]
		which implies the desired result. 
	\end{proof}

	\subsection{Proof of \Cref{thm:conc_1}}
	\begin{proof}[Proof of \Cref{thm:conc_1}]
		First note that the $2$-Wasserstein space $\scrW_2(\RR^m) = (\scrP_2(\RR^m), \sfW_2)$ is positively curved in the sense of Alexandrov \citep[\S7.3]{ambrosio2005gradient}. Then, by \citet[Theorem 12]{le2019fast}, we have, for any $\beta\in(0,1)$, 
		\[\Prob^{n}\left\{\sfW_2^2\left(\hat{\sfb}_{\One/K,2}(\Probhat^n), \hat{\sfb}_K^\star\right) \le \frac{c_1}{n}\log\left(\frac2\beta\right)\right\} \ge 1-\beta-\e^{-c_2n}, \]
		where $\hat{\sfb}_{\One/K,2}(\Probhat^n) = \sfb_2(\hat{\rho}_{n,K})$, and $c_1, c_2\in\Rpp$ are independent of $n$. By the definition of $\bar{\euW}_{\varepsilon,2}(\Probhat^n; \One/K)$, we also have 		
		\[\Prob^{n} \left\{\hat{\sfb}_K^\star \in \bar{\euW}_{\varepsilon,2}(\Probhat^n; \One/K) \right\} = \Prob^{n}\left\{\sfW_2\left(\hat{\sfb}_{\One/K,2}(\Probhat^n), \hat{\sfb}_K^\star\right) \le \varepsilon\right\}, \]
		hence the desired result. 	
	\end{proof}
	
	\begin{remark}
		Note that \citet{adve2020nonexpansiveness} provides a heuristic argument for the non-negative curvature of $\scrW_p(\RR^m)$ when $p\in(1, p(m))$, where $p(m) > 1$ is close to $1$ (e.g., $p(m) = 1+1/\scrO(m^2 \log m)$). Consequently, \Cref{thm:conc_1} should also hold for $p\in(1, p(m))$. For $p\notin(1,p(m))\cup\{2\}$, it remains unclear whether $\scrW_p(\RR^m)$ is positively curved even though it is expected. In this case, the general concentration inequality in \citet[Theorem 12]{le2019fast} involving various abstract constants cannot be easily simplified. 
	\end{remark}

	\subsection{Proof of \Cref{thm:finite_1}}
	\begin{proof}[Proof of \Cref{thm:finite_1}]
		By \Cref{thm:conc_1}, the inequality $\Prob^{n} \left\{\hat{\sfb}_K^\star \in \bar{\euW}_{\varepsilon,2}(\Probhat^n; \One/K) \right\} \ge 1-\beta-\e^{-c_2n}$ immediately implies that 
		\[\euR_{\hat{\sfb}_K^\star}(\ell)\le \sup_{\Prob\in\bar{\euW}_{\varepsilon,2}(\Probhat^n; \One/K)} \euR_\Prob(\ell) \eqqcolon \euR_{\bar{\euW}_{\varepsilon_n,2}(\Probhat^n; \One/K)}(\ell)\]
		with probability at least $1-\beta-\e^{-c_2n}$, where $c_2>0$ is independent of $n$. 		
	\end{proof}

	\subsection{Proof of \Cref{thm:consistency_1}}
	To prove \Cref{thm:consistency_1}, we need the following lemma which is a slight modification of \citet[Lemma 3.7]{esfahani2018data}. 
	\begin{lemma}
		\label{lem:conv_dist}
		Let $K\in\NN^*$ be finite and fixed. 
		Suppose that $\Prob\in\scrW_2(\scrP_2(\Xi))$ is sub-Gaussian, $\beta_n\in(0,1)$ and $\varepsilon_n = \varepsilon_n(\beta_n)\in\Rpp$ for $n\in\NN^*$, satisfies $\sum_{n=1}^\infty \beta_n < \infty$ and $\lim_{n\to\infty}\varepsilon_n(\beta_n) = 0$, then any sequence $\hat{\QQ}_n \in \bar{\euW}_{\varepsilon_n(\beta_n),2}(\Probhat^n; \One/K)$, $n\in\NN^*$, where $\hat{\QQ}_n$ may depend on the observations, converges in the $\sfW_2$ distance to $\hat{\sfb}_K^\star$ almost surely with respect to $\Prob^\infty$, i.e,
		\[\Prob^\infty\left\{ \lim\limits_{n\to\infty} \sfW_2\left( \hat{\sfb}_K^\star, \hat{\QQ}_n \right)\right\} = 1.\]
	\end{lemma}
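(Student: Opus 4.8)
The plan is to split the distance from $\hat{\QQ}_n$ to $\hat{\sfb}_K^\star$ into a deterministic part governed by the radius $\varepsilon_n(\beta_n)$ and a random part governed by the fluctuation of the empirical barycenter, and then kill the random part with the first Borel--Cantelli lemma. First I would note that, by definition of the Wasserstein barycentric ambiguity set, $\bar{\euW}_{\varepsilon_n(\beta_n),2}(\Probhat^n; \One/K) = \euW_{\varepsilon_n(\beta_n),2}(\hat{\sfb}_{\One/K,2}(\Probhat^n))$, so any admissible $\hat{\QQ}_n$ satisfies $\sfW_2(\hat{\QQ}_n, \hat{\sfb}_{\One/K,2}(\Probhat^n)) \le \varepsilon_n(\beta_n)$. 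Since $\sfW_2$ is a metric on $\scrP_2$, the triangle inequality gives
\[
\sfW_2(\hat{\QQ}_n, \hat{\sfb}_K^\star) \;\le\; \varepsilon_n(\beta_n) + \sfW_2\left(\hat{\sfb}_{\One/K,2}(\Probhat^n), \hat{\sfb}_K^\star\right).
\]
The first summand vanishes as $n\to\infty$ by hypothesis, so it remains to show that $\sfW_2(\hat{\sfb}_{\One/K,2}(\Probhat^n), \hat{\sfb}_K^\star) \to 0$ $\Prob^\infty$-almost surely.

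For this I would feed $\beta = \beta_n$ into the concentration estimate underlying \Cref{thm:conc_1}: its proof yields $\Prob^{n}\{\sfW_2^2(\hat{\sfb}_{\One/K,2}(\Probhat^n), \hat{\sfb}_K^\star) \le \tfrac{c_1}{n}\log(2/\beta_n)\} \ge 1 - \beta_n - \e^{-c_2 n}$, and since $\varepsilon_n(\beta_n)^2 = \tfrac{c_1}{n}\log(2/\beta_n)$ this reads
\[
\Prob^{n}\left\{\sfW_2(\hat{\sfb}_{\One/K,2}(\Probhat^n), \hat{\sfb}_K^\star) > \varepsilon_n(\beta_n)\right\} \;\le\; \beta_n + \e^{-c_2 n}.
\]
Realizing all observations on a single infinite product space carrying $\Prob^\infty$, the event $A_n \coloneqq \{\sfW_2(\hat{\sfb}_{\One/K,2}(\Probhat^n), \hat{\sfb}_K^\star) > \varepsilon_n(\beta_n)\}$ depends only on the first $n$ draws from each of the $K$ sources, so $\Prob^\infty(A_n) = \Prob^n(A_n)$ and $\sum_{n=1}^\infty \Prob^\infty(A_n) \le \sum_{n=1}^\infty \beta_n + \sum_{n=1}^\infty \e^{-c_2 n} < \infty$, using $\sum_n \beta_n < \infty$ and $c_2 > 0$. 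By Borel--Cantelli, $\Prob^\infty(\limsup_n A_n) = 0$; equivalently, on a $\Prob^\infty$-full-measure event there is a random $n_0$ with $\sfW_2(\hat{\sfb}_{\One/K,2}(\Probhat^n), \hat{\sfb}_K^\star) \le \varepsilon_n(\beta_n)$ for all $n \ge n_0$, and since $\varepsilon_n(\beta_n) \to 0$ this fluctuation tends to $0$ almost surely.

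Combining the two displays, $\sfW_2(\hat{\QQ}_n, \hat{\sfb}_K^\star) \le 2\varepsilon_n(\beta_n)$ for all $n \ge n_0$ on the same full-measure event, hence $\sfW_2(\hat{\QQ}_n, \hat{\sfb}_K^\star) \to 0$ $\Prob^\infty$-almost surely, which is the claim. I do not expect a genuine obstacle: this is the barycentric analogue of \citet[Lemma 3.7]{esfahani2018data}, with their measure-concentration input replaced by \Cref{thm:conc_1}. The only points requiring care are the measure-theoretic bookkeeping --- embedding the finite-$n$ tail bounds into one infinite product space so that Borel--Cantelli applies --- and observing that, because $\sfW_2(\hat{\QQ}_n, \hat{\sfb}_{\One/K,2}(\Probhat^n)) \le \varepsilon_n(\beta_n)$ holds uniformly over the ambiguity set, the possibly data-dependent choice of $\hat{\QQ}_n$ is immaterial.
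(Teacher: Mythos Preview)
Your proposal is correct and follows essentially the same route as the paper: triangle inequality to split off the deterministic radius, the concentration bound from \Cref{thm:conc_1} applied with $\beta=\beta_n$, summability of $\beta_n+\e^{-c_2n}$, and Borel--Cantelli to pass to the almost-sure limit. If anything, your write-up is slightly more careful than the paper's --- you correctly invoke the \emph{first} Borel--Cantelli lemma (the paper mislabels it as the second) and you make explicit the harmless passage from $\Prob^n$ to $\Prob^\infty$ and the irrelevance of the data-dependent choice of $\hat{\QQ}_n$.
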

	
	\begin{proof}[Proof of \Cref{lem:conv_dist}]
		For any $\hat{\QQ}_n \in \bar{\euW}_{\varepsilon_n(\beta_n),2}(\Probhat^n; \One/K)$, the triangle inequality implies
		\[\sfW_2\left( \hat{\sfb}_K^\star, \hat{\QQ}_n \right) \le \sfW_2\left( \hat{\sfb}_K^\star, \hat{\sfb}_{\One/K,2}(\Probhat^n) \right) + \sfW_2\left(\hat{\sfb}_{\One/K,2}(\Probhat^n), \hat{\QQ}_n \right) \le \sfW_2\left( \hat{\sfb}_K^\star, \hat{\sfb}_{\One/K,2}(\Probhat^n) \right) + \varepsilon_n(\beta_n). \]
		In addition, by \Cref{thm:finite_1}, we have $\Prob^{n} \left\{\sfW_2\left( \hat{\sfb}_K^\star, \hat{\sfb}_{\One/K,2}(\Probhat^n) \right) \le \varepsilon_n(\beta_n)\right\} \ge 1-\beta_n-\e^{-c_2n}$, which implies 
		\[\Prob^n\left\{ \sfW_2\left( \hat{\sfb}_K^\star, \hat{\QQ}_n \right) \le 2 \varepsilon_n(\beta_n)\right\} \ge 1-\beta_n-\e^{-c_2n}. \]
		Since $\sum_{n=1}^\infty \beta_n < \infty$ and $\sum_{n=1}^\infty \e^{-c_2n} < \infty$ as $c_2>0$, invoking the (second) Borel--Cantelli lemma \citep[Theorem 4.2.4]{chung2001course} yields
		\[\Prob^\infty\left\{\sfW_2\left( \hat{\sfb}_K^\star, \hat{\QQ}_n \right) \le \varepsilon_n(\beta_n) \;\text{for sufficiently large $n$}\right\} = 1.\] 
		Since $\varepsilon_n(\beta) \to 0$ as $n\to\infty$, we conclude that $\lim_{n\to\infty} \sfW_2\left( \hat{\sfb}_K^\star, \hat{\QQ}_n \right) $ $\Prob^\infty$-almost surely. 		
	\end{proof}

	Now we are ready to prove \Cref{thm:consistency_1}. 
	\begin{proof}[Proof of \Cref{thm:consistency_1}]
		Let $K\in\NN^*$ be finite and fixed. 
		Let us define 
		\[\ell_{n,K}^\star \coloneqq \argmin_{\ell\in\euL} \euR_{\bar{\euW}_{\varepsilon,2}(\Probhat^n; \One/K)}\quad\text{and} \quad \euR_{\hat{\sfb}_K^\star}(\euL) \coloneqq \inf_{\ell\in\euL} \euR_{\hat{\sfb}_K^\star}(\ell). \]
		Since $\ell_{n,K}^\star\in\euL$, we have $ \euR_{\hat{\sfb}_K^\star}(\ell_{n,K}^\star) \le \euR_{\hat{\sfb}_K^\star}(\euL) $. 
		\Cref{thm:conc_1} implies 
		\begin{align*}
			\Prob^n\left\{ \euR_{\hat{\sfb}_K^\star}(\ell_{n,K}^\star) \le \euR_{\hat{\sfb}_K^\star}(\euL) \le \euR_{\bar{\euW}_{\varepsilon_n(\beta_n),2}(\Probhat^n; \One/K)}(\ell_{n,K}^\star) \right\}  &\ge \Prob^{n} \left\{\hat{\sfb}_K^\star \in \bar{\euW}_{\varepsilon_n(\beta_n),2}(\Probhat^n; \One/K) \right\} \\
			&\ge 1-\beta-\e^{-c_2n}
		\end{align*}
		for all $n\in\NN^*$. Since $\sum_{n=1}^\infty\beta_n<\infty$ and $\sum_{n=1}^\infty \e^{-c_2n} < \infty$ as $c_2>0$, invoking the (second) Borel--Cantelli lemma again yields
		\[\Prob^\infty\left\{ \euR_{\hat{\sfb}_K^\star}(\ell_{n,K}^\star) \le \euR_{\hat{\sfb}_K^\star}(\euL) \le \euR_{\bar{\euW}_{\varepsilon_n(\beta_n),2}(\Probhat^n; \One/K)}(\ell_{n,K}^\star) \;\text{for sufficiently large $n$}\right\} = 1.\] 
		Thus, it remains to show that $\limsup_{n\to\infty} \euR_{\bar{\euW}_{\varepsilon_n(\beta_n),2}(\Probhat^n; \One/K)}(\ell_{n,K}^\star) \le \euR_{\hat{\sfb}_K^\star}(\ell_{n,K}^\star)$ with probability one. This part is more subtle and involves overwhelming technicalities---we refer to the proof of \citet[Theorem 3.6]{esfahani2018data} as our proof follows exactly the same steps. 

		The second part of \Cref{thm:consistency_1} follows exactly the same procedures as that of the first part. By considering that $n$ has already been taken to $\infty$, results similar to \Cref{thm:conc_1,thm:finite_1} also hold, by replacing $n$ with $K$, $\Probhat^n$ with $\Probhat^\star$, $\hat{\sfb}_K^\star$ with $\sfb^\star$, etc. 
	\end{proof}

	\subsection{Proof of \Cref{prop:location-scatter_ambiguity}}
	\begin{proof}[Proof of \Cref{prop:location-scatter_ambiguity}]
		Since $\QQ_1, \ldots, \QQ_K$ belong to the same location-scatter family $\euF(\Prob_0)$ with $\Prob_0\in\scrP_2^{\ac}(\RR^m)$, so is their $\blambda$-weighted $2$-Wasserstein barycenter $\bar{\QQ}_{\blambda,2}$, since location-scatter families are closed for barycenters \citep[Theorem 3.8]{alvarez2018wide}.
		
		Let us recall the definition of the $2$-Wasserstein ambiguity set 
		\[\bar{\euW}_{\varepsilon, 2}^{\mathsf{ls}}(\QQ_1, \ldots, \QQ_K; \blambda) 
		\coloneqq  \left\{\Prob\in \euF(\Prob_0) : \sfW_2(\Prob, \bar{\QQ}_{\blambda,2}) \le \varepsilon \right\}. \]
		Recall that the $\sfW_2$ distance between two distributions of the same location-scatter family is the Gelbrich distance, i.e., $\sfW_2(\Prob, \bar{\QQ}_{\blambda,2}) = \sfG((\mu, \Sigma), (\bar{\mu}_{\blambda}, \bar{\Sigma}_{\blambda}))$, where $\mu$ and $\Sigma$ are the mean and the covariance matrix of $\Prob\in\euF(\Prob_0)$ respectively. Therefore, we have 
		\begin{align*}
			\bar{\euW}_{\varepsilon, 2}^{\mathsf{ls}}(\QQ_1, \ldots, \QQ_K; \blambda) 
			&= \left\{\Prob\in \euF(\Prob_0) : \sfG((\mu, \Sigma), (\bar{\mu}_{\blambda}, \bar{\Sigma}_{\blambda})) \le \varepsilon \right\} \\
			&= \left\{\Prob\in \scrP_2(\Xi) : \sfG((\mu, \Sigma), (\bar{\mu}_{\blambda}, \bar{\Sigma}_{\blambda})) \le \varepsilon \right\} \cap \euF(\Prob_0) \\
			&= \euG_{\varepsilon}(\bar{\mu}_{\blambda}, \bar{\Sigma}_{\blambda}) \cap \euF(\Prob_0).
		\end{align*}		
	\end{proof}

	\subsection{Proof of \Cref{thm:bary_risk_bounds}}
	\begin{proof}[Proof of \Cref{thm:bary_risk_bounds}]
		Let $\Prob\in\scrP_2(\Xi)$ with mean $\mu\in\RR^m$ and covariance matrix $\Sigma\in\bbS_{+}^m$, and let $\Probhat_1,\ldots,\Probhat_K\in\scrP_2^{\ac}(\Xi)$ with means $\mu_1,\ldots,\mu_K\in\RR^m$ and covariance matrices $\Sigma_1,\ldots,\Sigma_K\in\bbS_{++}^m$ respectively. 
		By the Gelbrich bound, we have 
		\[\sfW_2(\Prob, \hat{\sfb}_{\blambda,2}(\Probhat_1,\ldots,\Probhat_K)) \ge \sfG((\mu, \Sigma), (\bar{\mu}_{\blambda}, \bar{\Sigma}_{\blambda})), \]
		where $\bar{\mu}_{\blambda}$ and $\bar{\Sigma}_{\blambda}$ are the mean and the covariance matrix of $\hat{\sfb}_{\blambda,2}(\Probhat_1,\ldots,\Probhat_K)$. 
		The two worst-case risk bounds immediately follow from this inequality due to the definitions of the worst-case risk and the optimal worst-case risk. 		
	\end{proof}

	\subsection{Proof of \Cref{coro:risk_bounds}}	
	\begin{proof}[Proof of \Cref{coro:risk_bounds}]
		The Gelbrich bound and \Cref{lem:Wasserstein_ineq} together imply that, for any $\Prob,\Probhat\in\scrP_2(\Xi)$ with means $\mu,\muhat\in\RR^m$ and covariance matrices $\Sigma, \Sigmahat\in\bbS_{++}^m$, we have
		\[\sfG((\mu,\Sigma), (\muhat,\Sigmahat)) = \sfW_2(\rho, \nu) \le \sfW_p(\rho, \nu). \]
		Consequently, we have the inclusion property 
		\[\euW_{\varepsilon, p}(\Probhat) \subseteq \euG_\varepsilon(\muhat, \Sigmahat)\]
		for every $p\ge2$. 		
		Thus, according to the definitions of the worst-case risk \eqref{eqn:worst-case_risk} and the optimal worst-case risk \eqref{eqn:dro}, the desired results follow. 		
	\end{proof}

	\subsection{Proof of \Cref{prop:location-scatter}}
	\begin{proof}[Proof of \Cref{prop:location-scatter}]
		See e.g., \citet[Theorem 6.1]{agueh2011barycenters} and \citet{knott1994generalization,ruschendorf2002n}. 
	\end{proof}

	\section{Experimental Details} 
	\label{sec:add_DRMLE}
	In this section, we give further details about \Cref{sec:DRMLE}. 
	
	\subsection{The Wasserstein Barycentric Shrinkage Estimator}
	The following theorem indicates the way to compute the solution $X^\star$ of the DRMLE problem \eqref{eqn:DRMLE}. 
	\begin{theorem}\label{thm:DRMLE}
		Assume that $\varepsilon>0$ and $\Sigmahat_k\in\bbS_{+}^m$ for each $k\in\setK$ with all least one of the $\Sigmahat_k$'s in $\bbS_{++}^m$, and that the $\blambda$-weighted Bures--Wasserstein barycenter $\bar{\Sigma}_{\blambda}$ of $\Sigmahat_1, \ldots, \Sigmahat_K$ admits the spectral decomposition $\bar{\Sigma}_{\blambda} = \sum_{j=1}^m \zeta_j v_j v_j^\top$ with eigenvalues $\zeta_j \in\Rp$ and corresponding orthonormal eigenvectors $v_j\in\RR^m$, $j\in\set{m}$. Then the unique minimizer of the DRMLE problem \eqref{eqn:DRMLE} is given by $X^\star = \sum_{j=1}^m x_j^\star v_j v_j^\top$, where, for each $j\in\set{m}$, 
		\begin{equation*}
			x_j^\star = \chi^\star\left[1-\frac12\left(\sqrt{\zeta_j^2(\chi^\star)^2 + 4\zeta_j^2\chi^\star} - \zeta_j\chi^\star\right) \right],
		\end{equation*}
		and $\chi^\star>0$ is the unique positive solution of the equation 
		\begin{equation}\label{eqn:chi}
			\left(\varepsilon^2 - \frac12\sum_{j=1}^m \zeta_j\right)\chi - m +\frac12\sum_{j=1}^m \sqrt{\zeta_j^2\chi^2 + 4 \zeta_j\chi} = 0. 
		\end{equation}
	\end{theorem}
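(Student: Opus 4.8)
The plan is to strip \eqref{eqn:DRMLE} down to a worst-case second-moment problem, recognize it as exactly the Wasserstein Shrinkage Estimator (WSE) problem with the empirical covariance replaced by the barycenter, and then carry out the resulting (essentially one-dimensional) optimization. First I would observe that every $\Prob\in\euB_\varepsilon(\bar{\Sigma}_{\blambda})$ has the form $\euN(0,\Sigma)$ with $\sfB(\Sigma,\bar{\Sigma}_{\blambda})\le\varepsilon$, so $\Ex_{\xi\sim\Prob}[\dotp{\xi}{X\xi}]=\tr(X\Sigma)$ and the inner supremum becomes the linear problem $\sup\{\tr(X\Sigma):\Sigma\in\bbS_+^m,\ \sfB^2(\Sigma,\bar{\Sigma}_{\blambda})\le\varepsilon^2\}$ over a Bures--Wasserstein ball. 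Hence \eqref{eqn:DRMLE} coincides with the DRMLE problem of \citet{nguyen2022distributionally} with their nominal covariance $\Sigmahat$ replaced by $\bar{\Sigma}_{\blambda}$ (which is admissible since $\bar{\Sigma}_{\blambda}\in\bbS_+^m$), so the theorem is in principle a transcription of their closed-form solution; for completeness I would give a self-contained derivation.

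For the inner maximization I would use the standard semidefinite reformulation of the square-root-trace term (e.g.\ $\tr((\bar{\Sigma}_{\blambda}^{\sfrac12}\Sigma\bar{\Sigma}_{\blambda}^{\sfrac12})^{\sfrac12})=\tfrac12\inf_{L\succ0}\{\tr(L\Sigma)+\tr(L^{-1}\bar{\Sigma}_{\blambda})\}$ on $\bbS_{++}^m$), so the feasible set is convex; since $\varepsilon>0$, Slater's condition holds. Dualizing the Bures constraint with a multiplier $\chi\ge0$ and invoking strong duality yields a closed form for the worst-case value $h(X)$ as an infimum over $\chi$ (above a threshold governed by $\lambda_{\max}(X)$) of $\chi\varepsilon^2$ plus an explicit function of $X$, $\bar{\Sigma}_{\blambda}$ and $\chi$ involving only matrix square roots, with the extremal $\Sigma^\star$ recovered from the optimal $\chi$. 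Substituting back, \eqref{eqn:DRMLE} becomes $\min_{X\succ0,\ \chi\ge0}\{-\log\det X+\chi\varepsilon^2+g(X,\chi)\}$.

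The key structural step is to show an optimal $X^\star$ commutes with $\bar{\Sigma}_{\blambda}$: since $-\log\det X$ depends only on the eigenvalues of $X$, while $\tr(X\Sigma^\star)$ and every term of $g$ are, for fixed eigenvalues of $X$, optimally handled by aligning eigenbases, the von Neumann trace inequality shows that replacing $X$ by the matrix with the same eigenvalues and eigenvectors $\{v_j\}$ cannot increase the objective. Writing $X=\sum_{j=1}^m x_j v_j v_j^\top$ with $x_j>0$, the problem decouples into $m$ scalar subproblems in $x_j$, coupled only through $\chi$, of the form $\min_{x_j>0}\{-\log x_j+(\text{term in }\zeta_j,x_j,\chi)\}$. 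Solving the first-order condition in $x_j$ gives the stated formula for $x_j^\star$ in terms of $\chi^\star$ and $\zeta_j$, and eliminating the $x_j$'s from the stationarity/complementarity condition in $\chi$ produces \eqref{eqn:chi}. I would close by checking that the left-hand side of \eqref{eqn:chi} equals $-m<0$ as $\chi\to0^+$, tends to $+\infty$ as $\chi\to\infty$, and has derivative bounded below by $\varepsilon^2>0$ (using $\zeta_j^2\chi+2\zeta_j\ge\zeta_j\sqrt{\zeta_j^2\chi^2+4\zeta_j\chi}$), so $\chi^\star$ exists and is unique; uniqueness of $X^\star$ then follows from strict convexity of $X\mapsto-\log\det X+h(X)$ on $\bbS_{++}^m$ ($-\log\det$ is strictly convex, $h$ is convex as a supremum of linear maps).

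The main obstacle I anticipate is the pair of "alignment" facts: justifying strong duality together with the closed form of the Bures-constrained inner maximum (i.e.\ the conjugate of the Bures term), and proving that the outer optimizer shares the eigenbasis of $\bar{\Sigma}_{\blambda}$. Once the problem is reduced to the $m$ decoupled scalar problems plus the scalar equation \eqref{eqn:chi}, the remaining work — the $x_j^\star$ formula and the existence/uniqueness of $\chi^\star$ — is routine algebra and monotonicity.
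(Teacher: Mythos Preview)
Your proposal is correct and in fact subsumes the paper's proof: the paper's entire argument is the single observation that \eqref{eqn:DRMLE} is exactly the WSE problem of \citet[Theorem 3.1]{nguyen2022distributionally} with $\Sigmahat$ replaced by $\bar{\Sigma}_{\blambda}$, which you state explicitly in your second paragraph. Your additional self-contained derivation (Lagrangian duality for the Bures-constrained inner maximum, eigenbasis alignment via von Neumann, decoupling into scalar problems, and the monotonicity argument for uniqueness of $\chi^\star$) goes well beyond what the paper provides and is a sound reconstruction of the cited result.
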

	
	\begin{proof}[Proof of \Cref{thm:DRMLE}]
		According to the formulation of the DRMLE problem \eqref{eqn:DRMLE}. The above theorem is simply \citet[Theorem 3.1]{nguyen2022distributionally} with $\Sigmahat$ replaced by $\bar{\Sigma}_{\blambda}$. 		
	\end{proof}
	
	Recall that $\Sigmabar_{\blambda}$ can be obtained by finding the unique positive definite fixed point of the equation \eqref{eqn:fixed_point}. Thus, to approximate the barycenter $\bar{\Sigma}_{\blambda}$, one iterates 
	\[(\forall t\in\NN^*)\quad S_{t+1} = S_t^{-\sfrac12}\left(\sumK\lambda_k (S_t^{\sfrac12}\Sigma_k S_t^{\sfrac12})^{\sfrac12}\right)^{\negthickspace\negthinspace 2}S_t^{-\sfrac12}, \]
	which gives $\lim_{t\to\infty}S_t = \bar{\Sigma}_{\blambda}$. 
	Details of this iterative scheme can be found in \citet{alvarez2016fixed}.

	\subsection{The Averaged Linear Shrinkage Estimator}
	A na\"{i}ve estimator is constructed by simply replacing the sample covariance matrix in the widely-used linear shrinkage estimator by the $\blambda$-weighted average of the sample covariance matrices $\Sigmahat_1, \ldots, \Sigmahat_K$, defined as follows. 	
	\begin{definition}[Averaged linear shrinkage estimator]
		Given $K$ empirical covariance matrices $\Sigmahat_1,\ldots,\Sigmahat_K\in\bbS_{++}^m$, the $\blambda$-weighted linear shrinkage estimator for the precision matrix $X\in\bbS_{++}^m$ is defined by 		
		\[X^\star = \left[(1-\alpha)\sumK\lambda_k\Sigmahat_k + \alpha\sumK\lambda_k\Diag(\Sigmahat_k)\right]^{-1}, \]
		where $\alpha\in[0,1]$, $\blambda\in\triangle^K$ and $\Diag(A)$ is the diagonal matrix with the same diagonal of the square matrix $A\in\RR^{m\times m}$. 		
	\end{definition}
	Let us recall that the $\blambda$-weighted average of the sample covariance matrices $\Sigmahat_1, \ldots, \Sigmahat_K$ is indeed the Frobenius barycenter of $\Sigmahat_1, \ldots, \Sigmahat_K$: 	
	\[\sumK\lambda_k\Sigmahat_k = \argmin_{\Sigma\in\bbS_{++}^m} \ \sumK\lambda_k\fronorm{\Sigma - \Sigmahat_k}^2. \]

	\subsection{The Averaged $L_1$-Regularized Maximum Likelihood Estimator}
	An averaged $L_1$-regularized maximum likelihood estimator of the precision matrix can be obtained from 
	simply minimizing a $\blambda$-weighted loss function of the original $L_1$-regularized maximum likelihood estimation problem.

	\begin{definition}[Averaged $L_1$-regularized maximum likelihood estimator]
		Let us recall that the original $L_1$-regularized maximum likelihood estimation problem takes the following objective: 
		\[\minimize_{X\in\bbS_{++}^m} \ g(X,\Sigmahat) \coloneqq -\log\det X + \dotpF{\Sigmahat}{X} + \tau\onenorm{X}, \]
		where $\tau\ge0$. Then, given $K$ empirical covariance matrices $\Sigmahat_1,\ldots,\Sigmahat_K\in\bbS_{++}^m$, the $\blambda$-weighted $L_1$-regularized maximum likelihood estimator for the precision matrix $X\in\bbS_{++}^m$ is defined by 
		\[X^\star = \argmin_{X\in\bbS_{++}^m} \  \sumK\lambda_k g(X, \Sigmahat_k) = -\log\det X + \sumK\lambda_k\dotpF{\Sigmahat_k}{X} + \tau\onenorm{X} = g\left( X, \sumK\lambda_k\Sigmahat_k\right) , \]
		where $\tau\ge0$ and $\blambda\in\triangle^K$. 		
	\end{definition}

	\subsection{The Sinkhorn Barycentric Shrinkage Estimator}
	\label{subsec:SBSE}
	The Sinkhorn Barycentric Shrinkage Estimator (SBSE) is simply an estimator similar to WBSE with the $2$-Wasserstein barycenter replaced by the Sinkhorn barycenter, which is used because of the non-existence of the $2$-Wasserstein barycenter under the high-dimensional setting and computational consideration used in simulations. Theoretical treatment of this estimator is left for future work. We do not use the entropic-$2$-Wasserstein barycenter in simulations since it does \emph{not} make much sense \citep[see][Remark 4]{minh2022entropic}. Also recall from \Cref{prop:ent_barycenter_Gaussian} for the Sinkhorn barycenter of Gaussians (restricted to the manifold of Gaussians).

	\subsection{Simulation Settings}	
	All experiments were run with a laptop with Intel Core i7-7700HQ CPU (2.80 GHz) and 32GB RAM, using Python 3.9 with libraries \texttt{numpy} \citep{numpy2020array}, \texttt{scipy} \citep{scipy2020} and \texttt{scikit-learn} \citep{scikit-learn2011}. 
	
	The equation \eqref{eqn:chi} is solved via the Netwon--Raphson method in \texttt{scipy}. 
	We give the choice of tuning parameters in \Cref{table:tuning}, which are obtained based on grid search. 	
	\begin{table}[h]
		\caption{Tuning parameters of LS, $L_1$ and WBSE.}
		\label{table:tuning}
		\begin{center}
			\begin{small}
				\begin{tabular}{ccccc}
					\toprule
					$n$ & $K$ & $\alpha$ & $\tau$ & $\varepsilon$ \\
					\midrule						  
					& 25 &  0.1  &  0.1  &  0.3 \\
					50 & 50 &  0.1  &  0.1  &  0.3 \\
					& 100 & 0.1   & 0.1   &  0.3 \\
					\midrule
					& 25 &   0.1 & 0.1  &  0.03 \\
					100 & 50 &  0.1  &  0.1  & 0.03  \\
					& 100 &  0.1  &  0.1  &  0.03 \\
					\midrule
					& 25 &  0.1  &  0.1  & 0.03  \\
					200 & 50 & 0.1   &  0.1  &  0.03 \\
					& 100 &  0.1  &  0.1  & 0.005  \\
					\bottomrule
				\end{tabular}
			\end{small}
		\end{center}
	\end{table}

	\begin{remark}
		While we treat the choice of $\varepsilon$ as a tuning parameter in simulations, in \citet{blanchet2019optimal}, the optimal distributional uncertainty size $\varepsilon = \varepsilon_n$ is studied as a function of the sample size $n$ for the Wasserstein Shrinkage Estimator \citep{nguyen2022distributionally}. \citet{blanchet2019optimal} prove that $\varepsilon_n$ should scale at rate $\varepsilon_n= \varepsilon^\star n^{-1}(1+o(1))=\scrO(n^{-1})$, which aligns with the empirical findings of \citet{nguyen2022distributionally}. 
		This is as opposed to the theoretical rate of $\scrO(n^{-\sfrac12})$. It is interesting to find the optimal scaling of $\varepsilon$ in our proposed Wasserstein Barycentric Shrinkage Estimator in terms of both $n$ and $K$, 
		and is left for future work. 
	\end{remark}

	\section{Simulations under High-Dimensional Setting}
	In this section, we study the performance of the proposed Sinkhorn barycentric shrinkage estimator (see \Cref{subsec:SBSE}) under the high-dimensional setting ($m> n$). Let us recall that under the high-dimensional setting, the sample covariance matrices $\Sigmahat_k$'s are all rank-deficient, so the empirical (unregularized) $2$-Wasserstein barycenter of $K$ Gaussians does not exist. We thus resort to the Sinkhorn barycenter with entropic regularization strength $\sigma>0$. In particular, we choose $m=20$, $n\in\{5,10,15\}$, $K\in\{25,50,100\}$ and $\sigma=0.1$. 
	The Stein losses of the estimators are given in \Cref{table:stein_losses_estimators}, averaged over $20$ independent trials. 
	\begin{table}[h]
		\caption{Stein losses of  LS, $L_1$ and SBSE.}
		\label{table:stein_losses_estimators}
		\begin{center}
			\begin{small}
				\begin{tabular}{ccccc}
					\toprule
					$n$ & $K$ & LS & $L_1$ & SBSE  \\
					\midrule						  
					& 25 &  7.61 $\pm$ 0.73  &  8.43 $\pm$ 0.79  & 2.72 $\pm$ 0.23  \\						 
					5 & 50 &  7.75 $\pm$ 0.59  & 8.67 $\pm$ 0.63  & 2.54 $\pm$ 0.21 \\				
					& 100 &  7.70 $\pm$ 0.47  & 8.67 $\pm$ 0.51 & 2.43 $\pm$ 0.15 \\				
					\midrule
					& 25 &  7.12 $\pm$ 0.69  &  7.99 $\pm$ 0.74  &  1.94 $\pm$ 0.29 \\				
					10 & 50 &  7.12 $\pm$ 0.44  &  8.04 $\pm$ 0.47  & 1.37 $\pm$ 0.17 \\								
					& 100 & 7.20 $\pm$ 0.35 & 8.15 $\pm$ 0.38 & 1.07 $\pm$ 0.10 \\				
					\midrule
					& 25 &  6.96 $\pm$ 0.55  &  7.84 $\pm$ 0.60  &  1.91 $\pm$ 0.27  \\				
					15 & 50	&  6.99 $\pm$ 0.40  &  7.90 $\pm$ 0.43  &  1.12 $\pm$ 0.16 \\				
					& 100 & 6.97 $\pm$ 0.27 & 7.90 $\pm$ 0.29 & 0.71 $\pm$ 0.10 \\				
					\bottomrule
				\end{tabular}
			\end{small}
		\end{center}
	\end{table}
	
	Similar to WBSE under the low-dimensional setting, we observe that SBSE also outperforms the other two estimators by a large margin. The performance of SBSE also improves as $n$ and $K$ increase. 
	
	We also give the choice of tuning parameters in \Cref{table:tuning_sinkhorn}, which are obtained based on grid search. 	
	\begin{table}[h]
		\caption{Tuning parameters of LS, $L_1$ and SBSE.}
		\label{table:tuning_sinkhorn}
		\begin{center}
			\begin{small}
				\begin{tabular}{ccccc}
					\toprule
					$n$ & $K$ & $\alpha$ & $\tau$ & $\varepsilon$ \\
					\midrule						  
					& 25 & 0.1 & 0.1 & 1 \\
					5 & 50 & 0.1 & 0.1 & 1 \\
					& 100  & 0.1 & 0.1 & 1 \\
					\midrule
					& 25  & 0.1 & 0.1 & 0.5 \\
					10 & 50 & 0.1 & 0.1 & 0.5 \\
					& 100 & 0.1 & 0.1 & 0.5 \\
					\midrule
					& 25 & 0.1  &  0.1  & 0.3 \\
					15 & 50 & 0.1  &  0.1  & 0.3 \\
					& 100 & 0.1  &  0.1  & 0.3 \\
					\bottomrule
				\end{tabular}
			\end{small}
		\end{center}
	\end{table}

	\paragraph{The Effect of Entropic Regularization Strength $\sigma$ in SBSE.}
	We also numerically study how different entropic regularization strengths in the Sinkhorn barycenter affect the performance of the proposed Sinkhorn barycentric shrinkage estimator. We choose $m=20$, $n=5$, $K=25$, $\alpha=0.1$, $\tau=0.1$, $\varepsilon=0.8$ and $\sigma\in\{0.01, 0.1, 1, 10, 100\}$. 
	
	\begin{table}[h]
		\caption{Stein losses of LS, $L_1$ and SBSE with different $\sigma$'s.}
		\label{table:stein_losses_gamma}
		\begin{center}
			\begin{small}
				\begin{tabular}{cccc}
					\toprule
					$\sigma$ & LS & $L_1$ & SBSE \\
					\midrule						  					 
					0.01 &  &    & 4.47 $\pm$ 0.35 \\	
					0.1 &  &    & 2.03 $\pm$ 0.22 \\				
					1 & 7.61 $\pm$ 0.73 &  8.43 $\pm$ 0.79  & 5.58 $\pm$ 0.36 \\				
					10 &  &    &  9.82 $\pm$ 0.71 \\			
					100 &  &    &  11.70 $\pm$ 0.88 \\			
					\bottomrule
				\end{tabular}
			\end{small}
		\end{center}
	\end{table}
	We observe that, given a fixed set of $(m,n,K,\varepsilon)$, smaller $\sigma$ (i.e., closer approximation to the unregularized $2$-Wasserstein barycenter) does not necessarily yield lower Stein loss. However, as the strength of entropic regularization grows, the performance of SBSE decays sharply, which could be even worse than the averaged linear shrinkage and averaged $L_1$-regularized maximum likelihood estimators.

\end{document}